\documentclass{article}
\usepackage{geometry}
\geometry{margin=1in}
\usepackage{graphicx}
\usepackage{subcaption}
\usepackage{booktabs}
\usepackage{amsthm,amssymb,url,amsmath}
\usepackage{mathtools}
\usepackage{todonotes}

\usepackage[colorlinks= true, linkcolor = black, citecolor = blue]{hyperref}

\newtheorem{theorem}{Theorem}[section]

\newtheorem*{claim*}{Claim}

\newtheorem{corollary}[theorem]{Corollary}
\newtheorem{property}[theorem]{Property}

\newtheorem{definition}[theorem]{Definition}

\newtheorem{lemma}[theorem]{Lemma}

\newtheorem{remark}[theorem]{Remark}

\title{Stochastic strategies for patrolling a terrain with a synchronized multi-robot system}

\author{Luis E. Caraballo$^1$ \and
      Jos\'e M. D\'iaz-B\'a\~nez$^{2,^*}$ \and
      Ruy Fabila-Monroy$^{3}$  \and
      Carlos Hidalgo-Toscano$^{3}$ 
\thanks{$^{1}$Departamento de Matem\'atica Aplicada II, Universidad de Sevilla, Spain. Supported by Spanish Government under the grant agreement FPU14/04705. He is partially supported by project GALGO (Spanish Ministry of Economy and Competitiveness, MTM2016-76272-R AEI/FEDER,UE). {\tt\small lcaraballo@us.es}}%
\thanks{$^{2}$Departamento de Matem\'atica Aplicada II, Universidad de Sevilla, Spain. Partially supported by project GALGO (Spanish Ministry of Economy and Competitiveness, MTM2016-76272-R AEI/FEDER,UE). {\tt\small dbanez@us.es}}%
\thanks{$^{3}$Departamento de Matem\'aticas, CINVESTAV, Mexico City, Mexico.
{\tt\small ruyfabila@math.cinvestav.edu.mx, cmhidalgo@math.cinvestav.mx}}%
\thanks{$^*$ Corresponding author}
      }

\begin{document}
%\linenumbers
\maketitle

\begin{abstract}
A group of cooperative aerial robots
%UAVs (Unmanned Aerial Vehicles) 
can be deployed to efficiently patrol a terrain, in which each robot flies around an assigned area and shares information with the neighbors periodically in order to protect or supervise it. To ensure robustness, previous works on these synchronized systems propose sending a robot to the neighboring area in case it detects a failure. In order to deal with unpredictability and to improve on the efficiency in the deterministic patrolling scheme, this paper proposes 
%probabilistic mobility patterns
random strategies to cover the areas distributed among the agents. First, a theoretical study of the stochastic process is addressed in this paper for two metrics: 
the \emph{idle time}, the expected time between two consecutive observations of any point of the terrain and the \emph{isolation time}, the expected time that a robot is without communication with any other robot. After that,
the random strategies are experimentally compared with the deterministic strategy adding another metric: the \emph{broadcast time}, the expected time elapsed from the moment a robot emits a message until it is received by all the other robots of the team. The simulations show that theoretical results are in good
agreement with the simulations and the random strategies outperform the behavior obtained with the deterministic protocol proposed in the literature.
\end{abstract}

{\bf Keywords:} Multi-agent systems; Random walks; Aerial surveillance; Synchronization

\begin{figure}[!b]
\begin{minipage}[c]{0.15\columnwidth}
\includegraphics[height=2.5em]{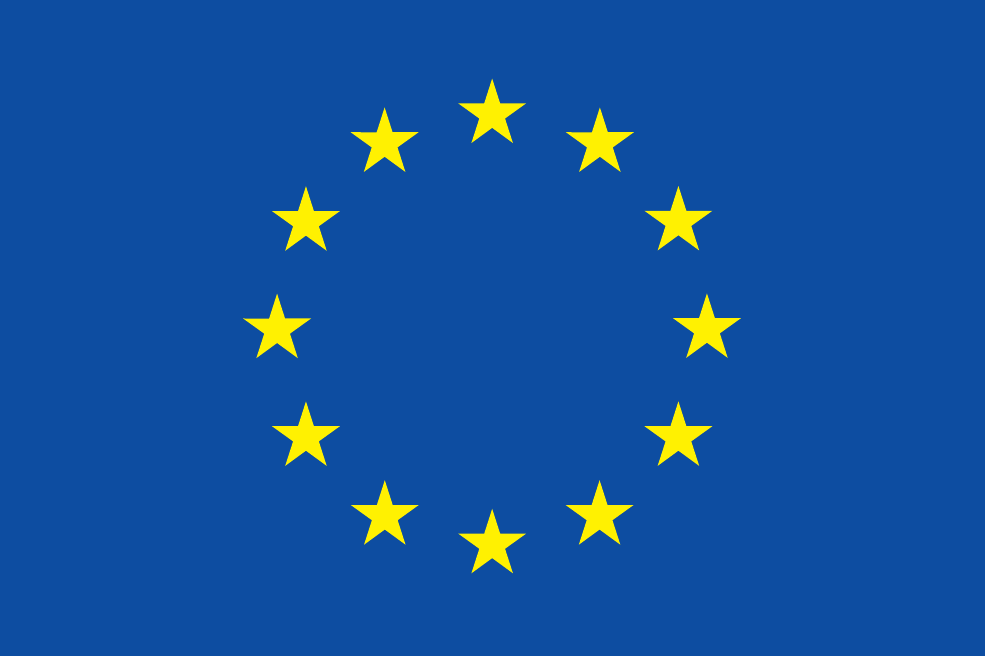}
\end{minipage}\quad
\begin{minipage}[l]{0.80\columnwidth}
\footnotesize This project has received funding from the European Union's Horizon 2020 research and innovation programme under the Marie Sk\l{}odowska-Curie grant agreement No 734922 and the Spanish Ministry of Economy and Competitiveness (GALGO, MTM2016-76272-R AEI/FEDER,UE)..
\end{minipage}
\end{figure}

\section{Introduction}

%To be finished...
Unmanned aerial vehicles (UAVs), best known as drones, are  an emerging technology with
significant market potential and bring new challenges to the optimization field. The cooperation of multiple UAVs performing joint missions is being applied to many areas such as surveying, precision agriculture, search and rescue, monitoring in dangerous scenarios, exploration and mapping, etc. 
There exist interesting surveys on the topic~\cite{hayat2016survey,yanmaz2018drone,otto2018optimization}.

The coordination of a team of autonomous vehicles allows it to achieve
missions that no individual autonomous vehicle can accomplish on its own. For example, the team members can exchange information and collaborate to efficiently protect a big area from intruders \cite{alpern2019optimizing}. Each UAV is equipped with 
a camera and takes snapshots of the ground area. The
aim of the team is to
somehow cover this area using several snapshots. Several categories of UAVs (according to the type of wings, size or their
communication capabilities) have been considered depending on the application. 
Small quadrotors are of special interest, which take off and land vertically with many potential applications ranging from mapping to supporting
%rescue and are ease of deployment, low acquisition and maintenance costs)
\cite{gupte2012survey,yanmaz2017communication}. 

A usual strategy in multiagent patrolling is to partition the terrain and to assign an area to each UAV \cite{acevedo2014one,ahmadi2006multi}. Assume that the partition is given and consider the following scenario. There is a set of $n$ aerial robots, each robot traveling on a fixed closed trajectory while
performing a prescribed task. Moreover, each robot needs to communicate periodically with
the other robots 
%Indeed, one of the most important design problems for multi-UAV (Unmanned Air Vehicle) systems is the communication which is crucial for cooperation and collaboration between the UAVs. 
and the communication range is limited. Thus, in order
for two robots to communicate, they must be in close proximity to each other. 
 D\'iaz-B\'a\~nez et al.~\cite{diaz2017} present a framework to survey a terrain in this scenario. 
In their model, each trajectory is a Jordan curve with either a clockwise or counter clockwise
orientation. Each robot moves at constant speed and it takes a unit of time to complete
a tour of its assigned trajectory.
%As an abstraction (that can be generalized to more realistic scenarios), they consider 
%a model in which each robot moves along a closed trajectory at a constant speed and spends one time unit 
%is the time required by a robot 
%to complete a tour in a trajectory.
A pair of these trajectories may intersect but do not cross.
Each point of intersection, provides an opportunity for the corresponding robots to communicate. We say that
a communication link (bridge) exists between two trajectories if they intersect at some point. 
%A communication link (bridge) exists between two trajectories if and only if the minimum
%distance between the trajectories does not exceed their communication range. 
This model was further studied by Bereg et al.~\cite{bereg2018computing}, where the authors define a graph $G$ of \emph{potential links}.
The vertex set of this graph is the set of trajectories and two of them are
adjacent if they share a communication link. The graph is assumed to be connected. 
A \emph{synchronization schedule} is a tuple $(\alpha,\delta)$, where $\alpha$ is a tuple whose entries are the initial
positions of the robots, and $\delta$ is a tuple whose entries are the directions in which the robot traverse
their assigned trajectory (clockwise or counterclockwise). A synchronization schedule defines a subgraph $H(\alpha,\delta)$ of $G$, $H$ for short, in a natural way.
$H$ has the same vertices as $G$ and an edge of $G$ is in $H$ if the corresponding robots meet at the corresponding communication link.
$H$ is called the \emph{communication graph}. Examples of communication graphs are shown in Figure 
\ref{fig:synchro_sample}. 
The problem addressed by D\'iaz-B\'a\~nez et al.~$\cite{diaz2017}$ is to find a synchronization schedule so that $H$ is connected and 
with maximum number of edges. A set of trajectories with a synchronization schedule such that $H$ is connected is called
a \emph{synchronized communication system (SCS)}. For an illustration, see Figure~\ref{fig:synchro_sample}
and video at youtube\footnote{\mbox{{\url{https://www.youtube.com/watch?v=T0V6tO80HOI}}}}. D\'iaz-B\'a\~nez et al.~$\cite{diaz2017}$ also give necessary and sufficient conditions on when this synchronization is possible. Note that although not
every pair of robots can communicate directly in this framework, a robot may relay a message to a 
robot with which it does not have direct communication.

\begin{figure}[ht]
\centering
\begin{subfigure}{.4\columnwidth}
\centering
\includegraphics[width=.8\columnwidth]{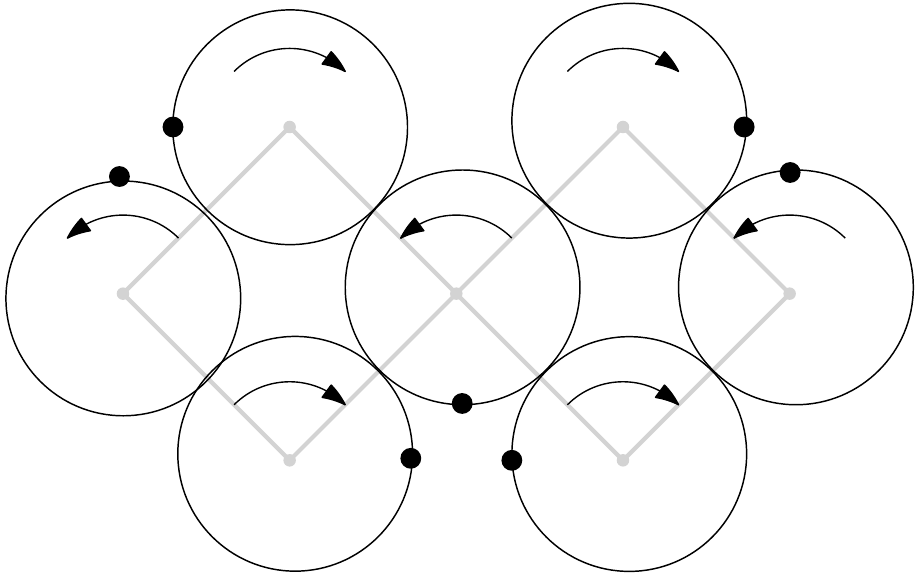}
\caption{}
\end{subfigure}%
\begin{subfigure}{.55\columnwidth}
\centering
\includegraphics[width=.5\columnwidth, page=1]{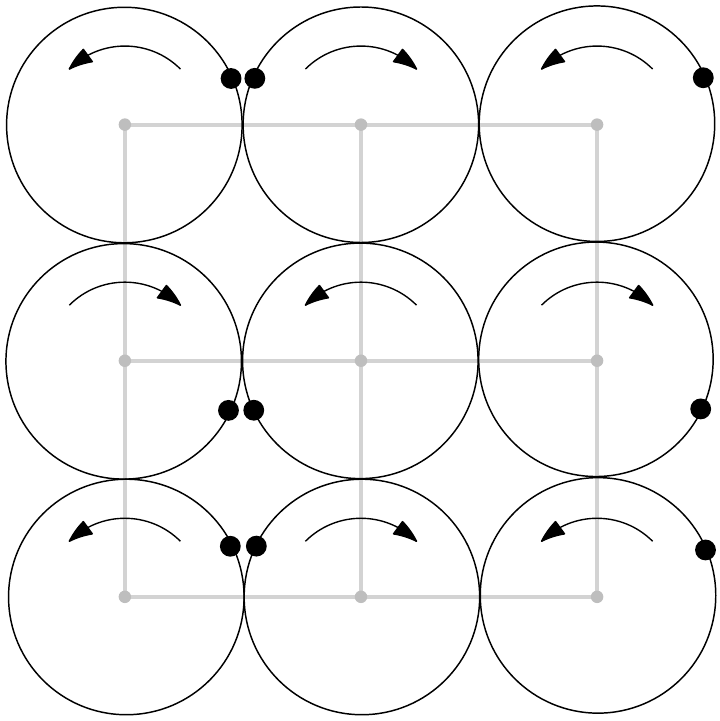}
\caption{}
\end{subfigure}%
\caption{Examples of synchronized communication systems (SCSs) with circular trajectories. The communication graph $H$ is given by light gray lines. The arrows indicate the travel directions in every circle and the solid black points indicate the starting positions.}
\label{fig:synchro_sample}
\end{figure}

Note also that the synchronization allows for a robot
to detect the failure of a neighboring robot. If a robot $u$ in a trajectory $A$ arrives at the communication link between $A$ and another trajectory $B$, and it detects that there is no robot in $B$, then $u$ assumes that the robot in $B$ is not longer functional. A possible strategy would be for $u$ to pass to $B$ and to take over the task of the missing robot. This move is called a \emph{shifting operation} by D\'iaz-B\'a\~nez et al.~$\cite{diaz2017}$.
A notable property of the framework to apply a shifting operation is that neighboring trajectories have opposite travel directions assigned (clockwise and counterclockwise) and a robot can change to a neighboring trajectory with ease. Clearly, if the trajectories are traveled in the same direction, the shifting operation requires a large turn angle of the aerial robot. %From now on we work with SCSs where every pair of neighboring circles have opposite travel directions.
In this paper, the protocol presented by D\'iaz-B\'a\~nez et al.~$\cite{diaz2017}$ is named the \emph{ deterministic strategy}, in which a robot performs a shifting operation every time it detects the absence of a neighbor (at the communication link). For example, in Figure \ref{fig:determ_prob}(b) the shifting is performed at all the communication links.

Consider an SCS with $n$ trajectories and $k<n$ available robots following the deterministic strategy.  Although the surviving robots perform shifting operations, it may occur that some robot always fails to meet any other robot at the communication links; that is, it is \emph{isolated}.
Note that an isolated robot always shift trajectory at a communication link. See examples in Figure~\ref{fig:determ_prob}(b) and Figure~\ref{fig:determ_prob}(c).
Moreover, some segments (pieces) of the trajectories may never be visited by a robot; i.e., some parts of the terrain are uncovered. Some examples are illustrated in Figure~\ref{fig:determ_prob}(a) and Figure~\ref{fig:determ_prob}(c).

\begin{figure}
    \centering
    \includegraphics[page = 4]{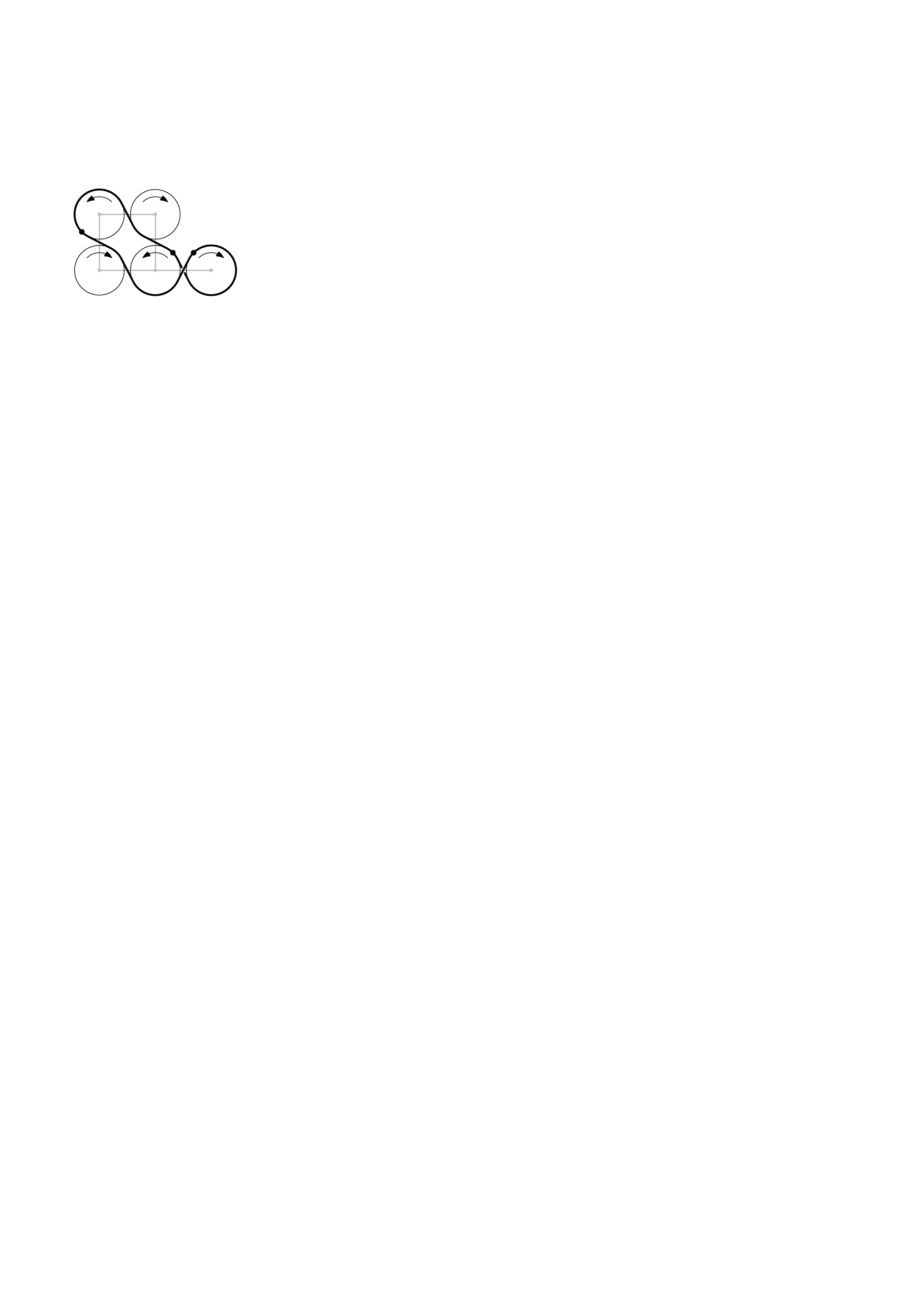}
    \caption{Examples of SCSs where the robots apply the deterministic strategy. The path traversed by the available robots is drawn in bold stroke. The robots are represented by solid black points. (a) The robots are not isolated, however there are uncovered trajectory segments. (b) The robots are isolated and everything is covered. (c) The robots are isolated and there are uncovered trajectory segments.}
    \label{fig:determ_prob}
\end{figure}

To cope these bad situations, a modification of the deterministic strategy of D\'iaz-B\'a\~nez et al.~$\cite{diaz2017}$ has been proposed by Caraballo~\cite{caraballo2019patrolling}. The idea is to perform the shifting operation in a random way.
Some simulations to compare random and deterministic strategies were performed in that preliminary work. 
The contribution of this paper with respect to~\cite{caraballo2019patrolling} is twofold: first, we perform a theoretical study of the stochastic model and then we present extensive computational experiments to validate both the teoretical results and the approach.

%a extends our previous conference paper \cite{caraballo2019patrolling} by adding a theoretical study of the stochastic model and performing more experiments to validate the stochastic approach.

It is worth noting that our study is focused on the simple scenario introduced by  D\'iaz-B\'a\~nez et al.~$\cite{diaz2017}$, where the trajectories are considered as unit circles $C_1,\dots,C_n$, and the position of a robot in its trajectory is denoted by the central angle with respect to the positive horizontal axis. Thus, in a synchronization schedule $(\alpha,\delta)$, the starting position $\alpha_i\in \alpha$ and the travel direction $\delta_i\in\delta$ in trajectory $C_i$ are in $[0,2\pi)$ and $\{1,-1\}$, respectively (1 indicates counterclockwise direction and $-1$ clockwise direction). Therefore, the position of a robot in trajectory $C_i$ at time $t$ is given by $\alpha_i+t\cdot 2\pi\cdot \delta_i$. Nevertheless, our results can be generalized for more general trajectories as we will mention later.

The remainder of the paper is organized as follows. Section~\ref{sec:pre_work} briefly describes previous
work on multiagent patrolling using random walks; Section~\ref{sec:strategies-measures} introduces both the strategies and measures to be used; Section~\ref{theoretical} is devoted to the theoretical study of the random strategy based on the random walks theory.
Section~\ref{experiments} presents extensive simulations to evaluate the random strategies; and, in Section~\ref{sec:conclusions}, the
results and directions for future work are discussed.

%n this paper we address the problem of maintaining the communication between all the available robots and the coverage of all trajectories using two strategies based on a random behavior of the robots:

%This paper is focused  on the comparison between Random, \qrandom\, and Deterministic strategies. We will consider the \emph{idle time}, the \emph{isolation time} and the \emph{broadcast time} as quality measures to compare the performance of these strategies. The idle time is the time between two consecutive observations of any point of the system. The isolation time is the time that a robot is isolated (or equivalently without communication with any other robot). And the broadcast time is the elapsed time from the emission of a message until it is received by all the robots of the team. Notice that the idle time and the isolation time are related to coverage-resilience and isolation-resilience, respectively.

\section{Related work}\label{sec:pre_work}
Recently, the study of stochastic multi-UAV systems has attracted considerable
attention in the field
of mobile robots. This approach has
several advantages such as shorter times to complete tasks, cost reduction, higher scalability, and more
reliability, among others \cite{yanmaz2010discrete,evers2014online}.
In a random mobility model, each robot randomly selects its direction, speed, and time independently of other robots. Some models include
random walks, random waypoints or random directions. See  the survey of Camp et al.~\cite{camp2002survey} for a comprehensive account. 

This paper assumes the framework of  D\'iaz-B\'a\~nez et al.~$\cite{diaz2017}$, then our model is not a pure random walk model. However,  the proposed random strategies generate random walks. A random walk on a graph is the process of visiting the nodes of the graph in some sequential random order. 
The walk starts at some fixed node, and at each step it moves to a neighbor of the current node chosen randomly. 
There is a vast theoretical literature dealing with random walks. For an
overview see e.g. the papers by Lov\'asz~\cite{lovasz} and R\'ev\'esz~\cite{revesz2013random}. 
One of the main reasons that random walk techniques are so appealing for networking applications is their robustness to dynamics.
%A random walk on a graph is the process of visiting the nodes of the graph in some sequential random order. 
%The walk starts at some fixed node, and at each step it moves to a neighbor of the current node chosen randomly. The random walk is called simple when the next node is chosen uniformly at random from the set of neighbors. 
It has been noted that random walk presents locality, simplicity, low-overhead and robustness to structural changes~\cite{chenthesis}. Because of these characteristics, applications based on random walks are becoming more and more popular in the networking community. In recent years, different authors have proposed the use of random walks for querying/searching, routing and self-stabilization in wireless networks, peer-to-peer networks, and other distributed systems~\cite{avin2008power}.

Relevant works in robotics are using criteria involving instants of visits of each node \cite{machado2002multi,srivastava2013stochastic}, where the main objective is to reduce the period between two consecutive visits to any vertex in order to minimize the detection delay for intruders. In the patrolling problem for communication scenarios, some models have been considered by Pasqualetti et al. ~\cite{pasqualetti2012cooperative} but, the evaluation criterion is also related to instants of visits. In this paper, two new measures are added to a criteria based on instants of visits to precisely ensure communication of the team dealing with random strategies:
%Relevant works in robotics are related to the concept of idle time \cite{machado2002multi,srivastava2013stochastic}, where the main objective is to reduce the period between two consecutive visits to any vertex in order to minimize the detection delay for intruders. In the patrolling problem for communication scenarios, some models have been considered by Pasqualetti et al.~\cite{pasqualetti2012cooperative} but, the evaluation criterion also is the idle time. In this paper, two new measures are added to the idle time to precisely ensure communication of the team dealing with random strategies: 
the isolation time and the broadcast time. Recently, the concept of Flying Ad-Hoc Network (FANET), which is basically an ad hoc network between UAVs, has been introduced by Bekmezci  et al.~\cite{bekmezci2013flying}. FANETs can be seen as a subset of the well-known mobile ad hoc networks (MANETs), where the use of random walks has been intensive \cite{gupta2013performance}. Thus, an opportunity to extend the stochastic methods to UAVs has been opened. Our contribution is to
provide an example of using random strategies as an alternative to existing protocols for UAVs networks.

\section{Random strategies and measures}\label{sec:strategies-measures}

In the deterministic protocol proposed by D\'iaz-B\'a\~nez et al.~\cite{diaz2017}, a robot performs a shifting operation at the communication link every time it detects the absence of a neighbor. Two alternative random strategies are proposed in this work.
%, which is named \textbf{deterministic strategy} in this paper:
%The first is easier to study theoretically and the second is more convenient in practical scenarios.

\newcommand{\rndstr}{\textbf{random\;}}
\emph{The random strategy:} Every time a robot arrives at a communication link between its current trajectory and a neighboring one, it chooses independently to remain in its trajectory or to pass to the neighboring trajectory by means of a shifting operation. The probability of remaining or shifting is $p=\frac{1}{2}$. Note that by following this protocol, more than one robot may share the same position. For instance, if two robots arrive at a communication link and one of them decides to maintain its trajectory and the other one decides to make a shifting operation, then they will move together like a single robot.
Note that in the basic model at hand we are considering an abstraction in which the trajectories are tangent and the communication link is a point. However, from the practical point of view, we can assume a kind of interval so that the robots have a safety margin to avoid collision.

%Working with aerial robots, this drawback can be solved by flying at different altitudes or by flying side by side within a safety margin. Also note that while the robots are flying together, they can maintain permanent communication.

\newcommand{\qrndstr}{\textbf{quasi-random\;}}
\emph{The quasi-random strategy:} Every time a robot arrives at a communication link, the following rule is applied: if there is no robot in the neighboring trajectory, then it decides whether to remain in its trajectory or to pass to the neighboring one by a shifting operation (the probability of remaining or shifting is $p=\frac{1}{2}$). Otherwise; that is, if there is a robot in the neighboring trajectory, it remains in its trajectory. Note that with this protocol, no two robots will travel on the same trajectory and a collision avoidance strategy is not required in this case.

\newcommand{\detstr}{\textbf{deterministic}}
\begin{definition}
	A synchronized system where the robots are applying the %\rndstr 
	random strategy is called a \emph{randomized SCS (R-SCS)}. A \emph{quasi-randomized SCS (QR-SCS)} is defined
	analogously. 
\end{definition}

\begin{remark}\label{rmk:movement-dependency}
	 The behavior of a robot in an R-SCS is independent of the other robots; 
	the movement of an robot in a QR-SCS depends on the behavior of the other robots.
	The behavior of robots following the deterministic strategy is totally co-dependent.
\end{remark}

%\begin{remark}\label{rmk:movement-dependency}
%	Notice that the first strategy is totally random because the robots take a random decision to make a shifting regardless if there is a neighboring robot at the communication link. %Thus, the movement of a robot in an R-SCS is independent of the movement of the others. However, in the second strategy 
%	%is not totally random
%	%notice that when a robot $u$ arrives at a link position, if there is another robot in the neighboring trajectory then $u$'s behavior is determined: $u$ remains in its current %trajectory and does not perform a shifting, so, 
%	the movement of a robot depends on the behavior of the other ones. Because of that, we have called these strategies \emph{random} and \emph{quasi-random}, respectively. Finally, the %movement of the robots following the deterministic strategy is totally co-dependent.
%\end{remark}

In what follows, three criteria are introduced to compare the performance of these strategies. The metrics carry valuable information about the coverage and communication performance of an SCS.

\begin{itemize}
	\item  The \textbf{idle time} is the average time that a point in the union of the trajectories remains unobserved by a robot.

	\item The  \textbf{isolation time} is the average time that a robot is without communication with any other robot.
	
	\item The  \textbf{broadcast time} is the average time elapsed from the moment a robot emits a message until it is received by all the other robots.
\end{itemize}

It is worth noting that these metrics are somehow related to some resilience measures that have recently been defined for SCS's:
the idle time is a coverage measure and then it is related to the coverage-resilience,  defined by Caraballo~\cite{ThesisEvaristo} and Bereg et al.~\cite{bereg2020robustness} as the minimum number of
robots whose removal may result in a non-covered subarea, that is, the idle time is infinity.  Similarly, the isolation time is related to the 1-isolation resilience, defined by Caraballo~\cite{ThesisEvaristo} and Bereg et al.~\cite{bereg2018computing}  as the
cardinality of a smallest set of robots whose failure is sufficient to cause that at least one surviving robot to operate without communication. Finally,
the broadcast time is related to the broadcasting resilience, defined by Caraballo~\cite{ThesisEvaristo} and Bereg et al.~\cite{bereg2020robustness} as the minimum number of robots whose removal may
disconnect the network.

\section{Theoretical results}\label{theoretical}
%\subsection{Some notions of random walks}\label{sec:random-walks}
%Let $G=(V,E)$ be a graph. Starting at
%a vertex $v$, randomly select a neighbor $u$ of $v$. Then randomly select a neighbor of $u$. Continuing this process yields a sequence of vertices of $G$; this sequence is called a \emph{random walk} on $G$.
%There is a vast literature dealing with random walks. For an overview see e.g. \cite{lovasz}. 

%One of the main reasons that random walk techniques are so appealing for networking applications is their robustness to dynamics.
%A random walk on a graph is the process of visiting the nodes of the graph in some sequential random order. 
%The walk starts at some fixed node, and at each step it moves to a neighbor of the current node chosen randomly. The random walk is called simple when the next node is chosen uniformly at random from the set of neighbors. 
%A random walk presents locality, simplicity, low-overhead and robustness to structural changes. Because of these characteristics, applications based on random walks are becoming more and more popular in the networking community. In recent years different authors have proposed the use of random walks for querying/searching, routing and self-stabilization in wireless networks, peer-to-peer networks, and other distributed systems~\cite{powerOfChoice}.
In this section, some theoretical bounds on the above metrics are presented and then, these values are compared with experimental results. First, we review some well-known notions of random walks since random walks are the main tool used in this paper
to study the behavior of the aforementioned metrics in a SCS. We follow the notation of Lov\'asz~\cite{lovasz}. Let $G=(V, E)$ be a connected (di)graph with $n$ vertices. The process to generate a random walk is as follows. Starting at
a vertex $v_0$, randomly select a neighbor $v_1$ of $v_0$. Then randomly select a neighbor of $v_1$ and repeat the rule. This process yields a sequence of vertices of $G$, which it is called a \emph{random walk} on $G$. Let $v_t$ denote the vertex of the random walk at the $t$-th step. Let $P_0$ be the probability distribution from which $v_0$ was chosen; i.e., $P_0(v)=\mathrm{Pr}(v_0=v)$ for all $v\in V$. Denote by $M=(p_{vw})_{v,w\in V}$ the transition matrix, where $p_{vw}$ is the probability of moving from vertex $v$ to vertex $w$. Then, $P_t=M^tP_0$ denotes the probability distribution of $v_t$; that is, $P_t(v)=\mathrm{Pr}(v_t=v)$ for all $v\in V$. 
%It follows that $p^t_{vw}$ that, starting at $v$, we reach $w$ in $t$ steps is given by the $vw$-entry of the matrix $M^t$.

The \emph{period} $k$ of a vertex $v$ is defined as $k=\text{gcd}\{t > 0 \vert \Pr(v_t = v \vert v_0 = v) > 0 \}$. In other words, $v$ has period $k$ if it is maximal such that any subsequent visit to $v$ can occur in multiples of $k$ time steps. If $k=1$, the vertex $v$ is \emph{aperiodic}. If every vertex in $G$ is aperiodic, $G$ is said to be \emph{aperiodic}. 
%This is equivalent to saying that the greatest common divisor of all the cycle lengths in $G$ is one.

A probability distribution $P$ that satisfies $P = M\cdot P$ is called a \emph{stationary distribution}. If the graph $G$ is (strongly) connected, then the stationary distribution exists and it is unique. Moreover, if $G$ is also aperiodic, $P_t$ converges to the stationary distribution $P$ as $t \rightarrow \infty$.

% \begin{itemize}
%     \item Distribution in a random walks -- We use a vector $\pi$ to denote a distribution.
%     \item Stationary distributions -- We use $\pi^*$ to denote stationary distributions
%     \item \dots
% \end{itemize}

%An alternative strategy, also random, is:  Every time that a robot arrives at a communication position between its current trajectory and a neighboring one, \textbf{and there is no robot in the neighboring circle}, then it decides with probability 1/2 if it remains in its trajectory or if it passes to the neighboring one by a shifting operation. \textbf{If there is a robot in the neighboring circle} then it remains in its trajectory. In this model there would be no more than one robot per trajectory. However, this is very difficult to study theoretically because the \emph{transition probabilities} are conditional. So, we are limited to perform experiments to measure the quality of this strategy.

%In the next section we focus on the study of the \rndstr using random walks.

\subsection[Discretizing a partial SCS]{The discrete model in a partial SCS}\label{sec:discretization}

Let $(\alpha, \delta)$ be the synchronization schedule of a partial SCS $\mathcal{F}$ with $n$ trajectories $\{C_1,\dots,C_n\}$. Informally, 
$(\alpha, \delta)$ can be seen as a snapshot of a moving synchronized system.
In order to apply the random walks theory we must discretize our model. 
%Suppose that we have $k$ robots following a strategy $\Gamma\in\{\rndstr,$ $\qrndstr,$ $\detstr\}$ in a synchronized system $\mathcal{F}$ with schedule $(\alpha,\delta)$. Let $A=\{\alpha_1,$\dots, $\alpha_n\}$ be a set of starting positions of the system, i.e., $\alpha_i=\alpha(C_i)$ (see Definition~\ref{def:schedule}). %Note that the $n$ points of $A$ are fixed at the beginning and, for the sake of simplicity, we consider that no point of $A$ is at a link position. % \ruy{Un conjunto de puntos sincronizados son los puntos discretos. Revisar que usemos la terminologââ a de 
% cadenas de Markov y Random Walks bien}  
Consider the
%The graph where we perform the random walk is a 
digraph $G = (V, E)$, where $V=\{1,\dots,n\}$ and $(i, j)$ is an arc of $E$ from $i$ to $j$ 
if there is a path $p$ 
%(not necessarily in the same ring) 
of length $2\pi$ from $\alpha_i$ to $\alpha_j$  following the assigned travel directions in the circles. That is, consider a vertex per circle and connect two vertices $i,j$ if a robot can travel from $i$ to $j$ in one unit of time\footnote{A path of length $2\pi$ is traveled in one unit of time in the model.}. See Figure~\ref{fig:discretization-SCS}, where an SCS is shown and a path between the starting positions $\alpha_6$ and $\alpha_5$ is marked in red.
Without loss of generality, it can be assumed that the starting positions are not link positions. Note that if some
starting positions are on a
communication link, then taking the
trajectory points after (or before)
$\epsilon$ units of time as starting
positions, an equivalent
synchronization schedule is obtained. %Notice that, this SCS has two rings, one ring contains the starting positions $\alpha_1, \alpha_4, \alpha_2 $ and $\alpha_6$ (in this order by following the travel direction from $\alpha_1$) and, the other ring contains $\alpha_3$ and $\alpha_5$. If two starting positions $\alpha_i$ and $\alpha_j$ are the link positions between two neighboring trajectories $C_i$ and $C_j$, respectively, then $(i,j)\in E$ considering the path of length $2\pi$ from $\alpha_i$ to $\alpha_j$ that surrounds $C_j$ and assuming that it crosses the communication link $(i,j)$ at the beginning. Analogously, $(j,i)\in E$, see Figure~\ref{fig:discretization-link-positions} where the two considered paths (from $\alpha_i$ to $\alpha_j$ and, from $\alpha_j$ to $\alpha_i$) are shown.
Table~\ref{tab:adjacency-transitions}a shows the adjacency matrix of $G$ corresponding to the SCS of Figure~\ref{fig:discretization-SCS}. Note that the diagonal entries $(i,i)$ have value equal to 1 because a robot can make a tour along its trajectory and come back to the same starting position in one unit of time. We call $G$ the \emph{discrete motion graph} of the system.

Due to synchronization, if a robot is at some position $\alpha_i$ in  
$\alpha$ then all other robots are also at some position $\alpha_j$ in  
$\alpha$ (regardless the strategy being used).
Moreover, if a robot is at a position in $\alpha$ then after one unit of time, it will also be at some position in $\alpha$.
The movement of a robot in $\mathcal{F}$ can be modeled by a `walk' on $G$ and one step taken by the `walker' on $G$ corresponds to one time unit on $\mathcal{F}$. 
%\textbf{Aqui usamos que hay a lo mas una trayectoria de longitud 1 entre dos puntos del sistema.}
Also, %since every edge in $G$ corresponds to a path in $\mathcal{F}$ of length $2\pi$ between two points in $\alpha$, then, 
given a sequence of edges (a path) on $G$ traversed by a walker, the path traversed by the corresponding robot 
in $\mathcal{F}$ is known. Thus, the behavior of $k$ available robots in $\mathcal{F}$ is modeled by $k$ simultaneous walkers on $G$.

%Let $\{u_1,\dots,u_m\}$ be the set of available robots in $\mathcal{F}$.  Recall that, due to synchronization, if a robot $u_i$ is at some position $p\in A$ then for all $1\leq i'\leq m$ robot $u_{i'}$ is at some position $p'\in A$ (regardless the strategy used by the $k$ robots of the system). Also, notice that, if $u_i$ is at a position $\alpha_j\in A$ then after one time unit $u_i$ will be at some position $\alpha_{j'}\in A$ such that $(j,j')\in E$. Then, the movement of a robot in $\mathcal{F}$ can be modeled by a `walk' on $G$ and a step of the `walker' or agent on $G$ corresponds to one time unit on $\mathcal{F}$. Also, since every edge in $G$ corresponds to a path in $\mathcal{F}$ of length $2\pi$ between two points in $A$, then, given a sequence of edges (a path) on $G$ traversed by a walker we are able to get the traversed path by the corresponding robot in $\mathcal{F}$. So, since we have $k$ available robots in $\mathcal{F}$, their behavior can be modeled by using $k$ simultaneous walkers on $G$. However, if $\Gamma$ is $\qrndstr$- or $\detstr$- strategy, then the movement of an agent in $G$ is not ``independent'', it depends on the motion of the other agents of the system (Remark~\ref{rmk:movement-dependency}). Therefore, these strategies can not be modeled using standard random walks on $G$.

%
In the next section it is proved that there exists at most one path of length $2\pi$ between two starting points in the synchronization schedule $(\alpha, \delta)$. Thus, the 
%\rndstr 
random strategy can be modeled using standard random walks on $G$. This does not happen for quasi-random- or deterministic- strategies, since the movement of an agent in $G$ depends on the motion of the other agents of the system (Remark~\ref{rmk:movement-dependency}). 

\subsection{Uniqueness of paths}

The following results are constrained to the simple model considered in this paper, that is, the trajectories are pairwise non-intersecting unit circles  and there exists a communication link between two circles if they are tangent to each other (they have a single common point).

In some cases, to simplify the notation, we may use the term SCS to actually refer the communication graph of a SCS. For example, when we are referring to a cycle in the communication graph of an SCS we will simply say: \emph{a cycle in an SCS}.
%Note that, in this model, the communication graph of an SCS is an spanning subgraph of the trajectories contact graph.
    \begin{figure}
        \centering
        \begin{minipage}{.3\textwidth}
            \centering
            \includegraphics[scale=.3, page=1]{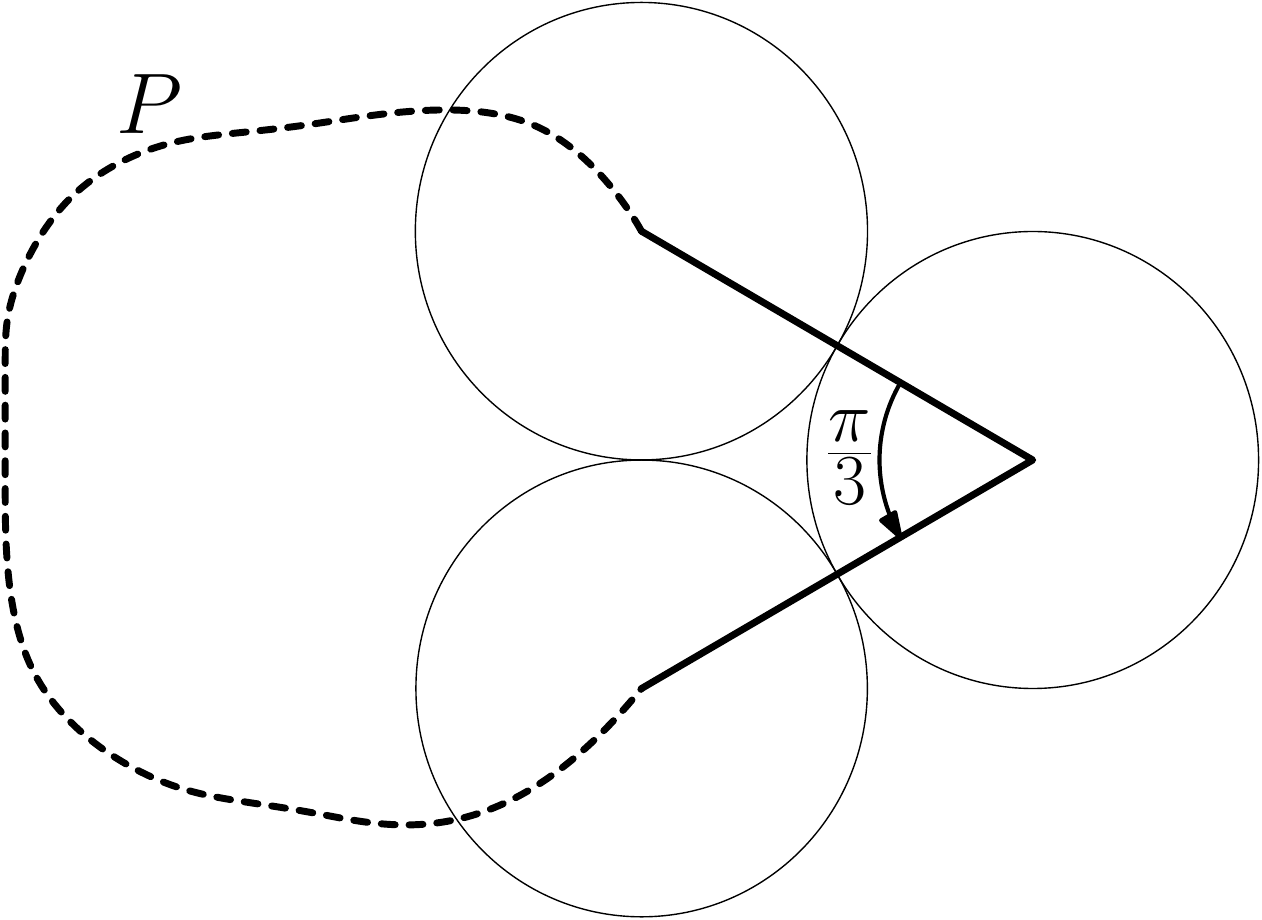}\\
            (a)
        \end{minipage}\qquad
        \begin{minipage}{.3\textwidth}
            \centering
            \includegraphics[scale=.3, page=2]{inner_polygon_angles.pdf}\\
            (b)
        \end{minipage}
        \begin{minipage}{.35\textwidth}
            \centering
            \includegraphics[scale=.7]{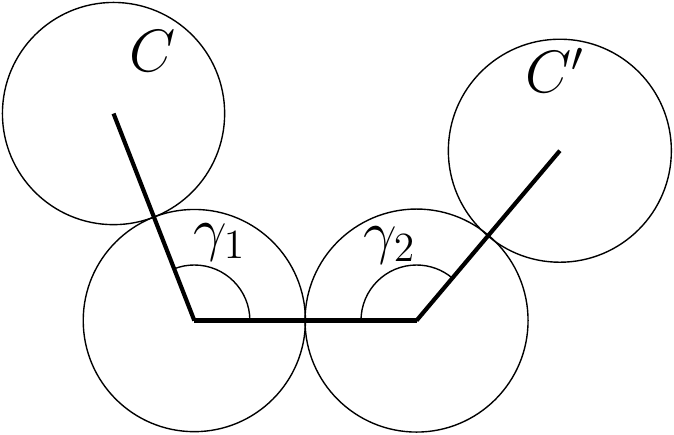}\\
            (c)
        \end{minipage}
        \caption{(a) The minimum amplitude of an inner angle in $P$ is $\frac{\pi}{3}$. (b) The maximum amplitude of an inner angle in $P$ is $\frac{5\pi}{3}$. Note that outer angles of $P$ fulfill the same restrictions. (c) Two consecutive angles at the same side of a path of length 3.}
        \label{fig:restrictions_in_P}
    \end{figure}

    %\begin{figure}
    %\centering
    %    \includegraphics[scale=.7]{consecutives-greater-pi.pdf}
    %   \caption{Two consecutive angles at the same side of a path of length 3.}
    %   \label{fig:consecutive_angles}
    %\end{figure}

    \begin{remark}\label{rmk:in-polygon}
        Let $\mathcal{C}$ be a cycle in an SCS. 
        
        \begin{enumerate}
            \item The polygon $P$ formed by connecting the centers of consecutive circles in $\mathcal{C}$ is simple and their sides have length 2. Note that the sides of the polygon are edges of the communication graph.
            \item Every inner or outer angle $\gamma$ of $P$ fulfills: $\frac{\pi}{3}\leq \gamma \leq \frac{5\pi}{3}$. 
            See Figures \ref{fig:restrictions_in_P}(a) and \ref{fig:restrictions_in_P}(b).
            \item Two consecutive (inner or outer) angles $\gamma_1$ and $\gamma_2$ at the same side of a path of length 3 on the boundary of $P$ fulfill that $\gamma_1+\gamma_2\geq \pi$. Note that if $\gamma_1+\gamma_2<\pi$ then $C$ and $C'$ are not disjoint. See Figure~\ref{fig:restrictions_in_P}(c) for an illustration. 
        \end{enumerate}
        
    \end{remark}

    \begin{lemma}\label{lem:cycle}
       Let $\mathcal{C}$ be a cycle in an SCS. Let $\ell_1$ and $\ell_2$ be two different communication links in $\mathcal{C}$ such that two paths from $\ell_1$ to $\ell_2$ in $\mathcal{C}$ have the same length $L$. Then $L\geq 2\pi$.
    \end{lemma}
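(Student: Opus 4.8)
The plan is to translate the statement about arc lengths on the circles of $\mathcal{C}$ into a statement about the interior angles of the polygon $P$ from Remark~\ref{rmk:in-polygon}, and then to read off the bound $L\geq 2\pi$ from the angle inequalities listed there. Write the cycle as $C_1,\dots,C_n$ in cyclic order, let $O_i$ be the centre of $C_i$, and let $\beta_i$ be the interior angle of $P$ at $O_i$. Each communication link of $\mathcal{C}$ is the midpoint of a side of $P$, so the two links incident to $C_i$ (towards $O_{i-1}$ and towards $O_{i+1}$) are seen from $O_i$ under the central angle $\beta_i$. Hence the arc that a robot sweeps on $C_i$, while following its assigned direction between these two links, equals either $\beta_i$ or its complement $2\pi-\beta_i$; which of the two occurs is dictated by the orientation $\delta_i$.

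First I would fix, once and for all, which arc is used on each circle. Since neighbouring trajectories carry opposite orientations, the headings agree at every link, so a path that switches circles at each link is a genuine directed curve, and the two paths from $\ell_1$ to $\ell_2$ are precisely the two ways of going around $\mathcal{C}$. Orienting $P$ counterclockwise and writing the turning angle at $O_i$ as $\tau_i=\pi-\beta_i$, I would verify that a counterclockwise circle contributes the arc $2\pi-\beta_i$ while a clockwise circle contributes the arc $\beta_i$; because orientations alternate along the cycle, the swept arcs alternate between these two forms. This bookkeeping is the step I expect to be the most delicate, since one must track the entry/exit link and the sign of $\delta_i$ uniformly across convex and reflex vertices of $P$.

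With the arcs identified, let $S_+$ and $S_-$ be the two sets of circles used by the two paths, so that $L=\sum_{i\in S_+}a_i=\sum_{i\in S_-}(2\pi-a_i)$, where $a_i\in\{\beta_i,\,2\pi-\beta_i\}$ is the directed arc on $C_i$. Eliminating $\sum_{i\in S_-}a_i$ through $\sum_i a_i=\sum_{S_+}a_i+\sum_{S_-}a_i$ turns the hypothesis into a single linear identity among the $\beta_i$; concretely it forces the full directed tour to have length $\sum_i a_i=2\pi|S_-|$. Since $\sum_i a_i=(2n-2)\pi-2\sum_{\delta_i=+1}\beta_i>2\pi$ (using $\sum_i\beta_i=(n-2)\pi$ and positivity of the remaining angles), this already rules out $|S_-|=1$, and symmetrically $|S_+|=1$, so each path uses at least two circles.

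Finally I would feed the angle inequalities of Remark~\ref{rmk:in-polygon} into this identity. The bound $\tfrac{\pi}{3}\le\beta_i\le\tfrac{5\pi}{3}$ keeps every arc positive, but the decisive ingredient is that consecutive angles satisfy $\beta_i+\beta_{i+1}\ge\pi$: combined with the linear relation coming from $L_+=L_-$, this lets the angle differences $\beta_i-\beta_{i+1}$ appearing in $L-2\pi$ be regrouped and shown to be nonnegative. In the smallest case, where each path uses two circles, the forced identity pins a sum of two interior angles to $\pi$ and the consecutive bound supplies exactly the missing inequality, giving $L=2\pi$ with equality. The main work, and the part most likely to require a careful pairing or induction, is showing that this telescoping goes through for an arbitrary number of circles and an arbitrary choice of the two links $\ell_1,\ell_2$.
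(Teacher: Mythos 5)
Your setup is correct and is essentially the paper's own framework: the polygon $P$ of centers, the identification of each swept arc with either the interior angle $\beta_i$ or its complement $2\pi-\beta_i$, and the plan to finish via the consecutive-angle bound of Remark~\ref{rmk:in-polygon}. Your intermediate deductions also check out: with $a_i$ the arc swept in the reference direction around the cycle, the hypothesis $L_+=L_-$ does force $\sum_i a_i = 2\pi\,|S_-|$, and this correctly rules out single-circle paths. The genuine gap is in your last paragraph, which is exactly where the lemma lives: you never establish $L\geq 2\pi$ beyond the case of a $4$-cycle. For a general cycle the ``forced identity'' is a single linear relation among all the interior angles; it no longer pins any particular pair of angles to $\pi$, the consecutive-angle inequality is only available for adjacent pairs, and you give no argument that the differences $\beta_i-\beta_{i+1}$ appearing in $L-2\pi$ can actually be regrouped into nonnegative combinations of these ingredients. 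You explicitly defer this ``careful pairing or induction'' as ``the main work,'' so as written the proposal proves only the quadrilateral case, not the lemma.

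For contrast, the paper closes this step with no induction or telescoping, by an accounting trick you could adopt: assume for contradiction that $L<2\pi$ and add the \emph{two} path lengths, so that $A_1+B_1+A_2+B_2<4\pi$, where $A_j$ (resp.\ $B_j$) collects the inner (resp.\ outer) angles swept by path $p_j$. The interior-angle sum of $P$, namely $(2k-2)\pi$, together with $\gamma_i+\beta_i=2\pi$, yields a linear relation between $A_1+A_2$ and $B_1+B_2$ in each of four cases, distinguished by whether the paths leave $\ell_1$, resp.\ reach $\ell_2$, through inner or outer angles. The consecutive-angle inequality is then needed only for the two pairs of angles flanking $\ell_1$ and $\ell_2$ (e.g.\ $\gamma_1+\gamma_{2k}\geq\pi$), and in every case one gets $A_1+A_2+B_1+B_2\geq 4\pi$, a contradiction. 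The idea you are missing is to bound the sum of both paths at once, so that the global polygon angle sum does the bulk of the work and the local inequality is invoked only at the two junction links, uniformly in the cycle length and in the positions of $\ell_1,\ell_2$.
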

    
     \begin{figure}[!h]
        \centering
        \begin{minipage}{.33\textwidth}
            \centering
            \includegraphics[width=\columnwidth, page=2]{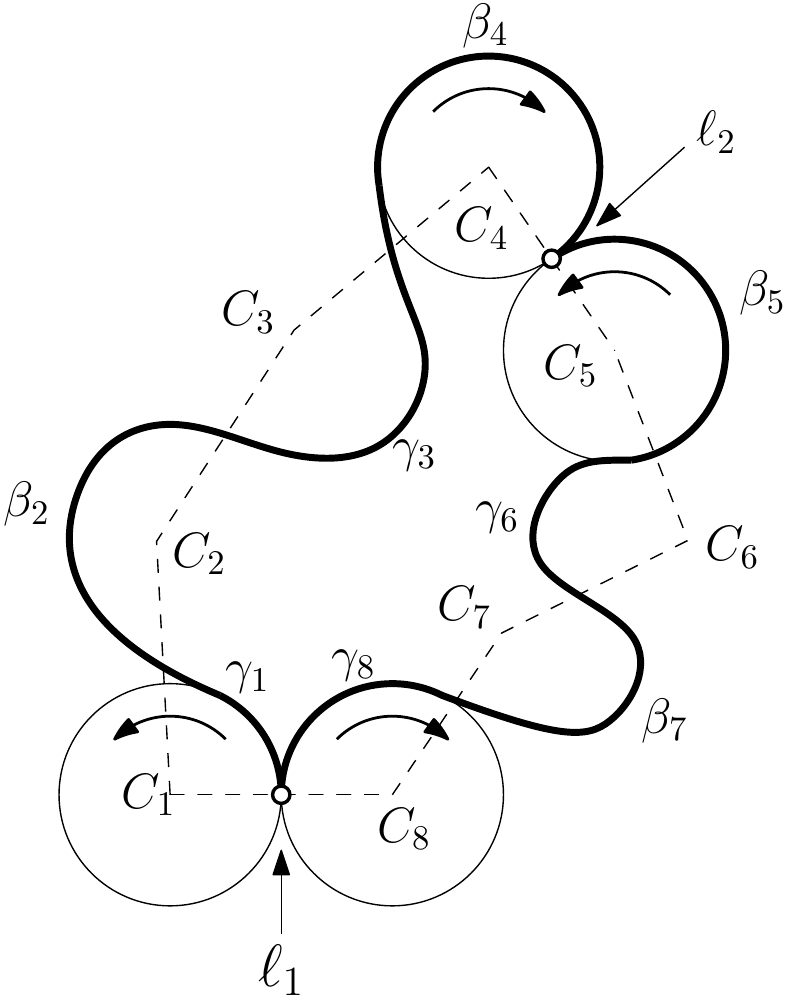}\\
            (a)
        \end{minipage}\qquad
        \begin{minipage}{.33\textwidth}
            \centering
            \includegraphics[width=\columnwidth, page=1]{thm-proof.pdf}\\
            (b)
        \end{minipage}\\
        \begin{minipage}{.33\textwidth}
            \centering
            \includegraphics[width=\columnwidth, page=4]{thm-proof.pdf}\\
            (c)
        \end{minipage}\qquad
        \begin{minipage}{.33\textwidth}
            \centering
            \includegraphics[width=\columnwidth, page=3]{thm-proof.pdf}\\
            (d)
        \end{minipage}
        \caption{Two paths between two communication links $\ell_1$ and $\ell_2$ in a cycle of an SCS. There are only four different possibilities and they are determined by the travel direction of the circles adjacent to $\ell_1$ and $\ell_2$.}
        \label{fig:2-paths}
    \end{figure}

    \begin{proof}
        Let $P$ be the simple polygon formed by connecting the centers of consecutive circles in $\mathcal{C}$. Let $C_1, C_2,\dots, C_{2k}$ be the sequence of circles in $\mathcal{C}$ enumerated from $\ell_1$ in counterclockwise direction, see Figure~\ref{fig:2-paths}. Note that $\mathcal{C}$ has even length. Let $p_1$ be the path from $\ell_1$ to $\ell_2$ that starts traveling in $C_1$. Let $p_2$ be the path from $\ell_1$ to $\ell_2$ that starts traveling in $C_{2k}$. For the sake of contradiction, assume that $|p_1|=|p_2|=L<2\pi$. Let $A_1$  and $B_1$ be the sum of the inner and outer angles turned by $p_1$ in $P$, respectively. Let $A_2$  and $B_2$ be the sum of the inner and outer angles of $P$ in $p_2$, respectively. Then we have the following:
        
        \begin{align}
            |p_1|=A_1+B_1=L<2\pi, \label{eq:p_1=L}\\
            |p_2|=A_2+B_2=L<2\pi, \label{eq:p_2=L}\\
            A_1+A_2+B_1+B_2<4\pi.
            \label{eq:sum-p_1-p_2}
        \end{align}

       % Adding these two previous equation we get:
       % \begin{equation}\label{eq:sum-p_1-p_2}
       %     A_1+A_2+B_1+B_2<4\pi.
       % \end{equation}

        Depending on the travel direction of the circles adjacent to $\ell_1$ and $\ell_2$ four cases can be distinguished:
        \begin{enumerate}
            \item $p_1$ and $p_2$ depart from $\ell_1$ traversing inner angles of $P$, and arrive at $\ell_2$ traversing inner angles of $P$ (Figure~\ref{fig:2-paths}a).
            \item $p_1$ and $p_2$ depart from $\ell_1$ traversing inner angles of $P$, and arrive at $\ell_2$ traversing outer angles of $P$ (Figure~\ref{fig:2-paths}b).
            \item $p_1$ and $p_2$ depart from $\ell_1$ traversing outer angles of $P$, and arrive at $\ell_2$ traversing inner angles of $P$ (Figure~\ref{fig:2-paths}c).
            \item $p_1$ and $p_2$ depart from $\ell_1$ traversing outer angles of $P$, and arrive at $\ell_2$ traversing outer angles of $P$ (Figure~\ref{fig:2-paths}d).
        \end{enumerate}

        Cases 2 and 3 are analogous. If we reverse the travel direction on every cycle, case 2 becomes case 3 and viceversa. Then, we will focus on cases 1, 2 and 4.

        Let us denote by $\gamma_i$ (resp. $\beta_i$) the angle of $C_i$ which is inside (resp. outside) $P$. We use the same notation for the corresponding arcs. Note that:

        \begin{equation}
            \gamma_i+\beta_i=2\pi. \label{eq: complement-angles}
        \end{equation}

        \textbf{Case 1:} (Figure~\ref{fig:2-paths}(a))
        \begin{align*}
            p_1 &=\gamma_1,\beta_2,\gamma_3,\dots,\gamma_{2r+1}\\
            p_2 &= \gamma_{2r+2}, \beta_{2r+3}, \dots, \gamma_{2k}.
        \end{align*}

        Thus,
        \begin{align*}
            A_1&=\gamma_1+\gamma_3+\dots+\gamma_{2r+1},\\ 
            B_1&= \beta_2+\beta_4+\dots+\beta_{2r},\\
            A_2&=\gamma_{2r+2}+\gamma_{2r+4}+\dots+\gamma_{2k}, \text{ and} \\
            B_2&= \beta_{2r+3}+\beta_{2r+5}+\dots+\beta_{2k-1}.
        \end{align*}

        By other hand, the sum of the inner angles of $P$ is $\pi(2k-2)=2\pi(k-1)$.\footnote{The sum of the inner angles of a simple polygon of $n$ sides is $\pi(n-2)$ In our case, the polygon $P$ has $n=2k$ sides.} Note that $B_1$ and $B_2$ are the sum of $r$ and $(k-r-1)$ outer angles of $P$, respectively. Using equation (\ref{eq: complement-angles}) we have that:
        \begin{align}
            A_1+2r\pi-B_1+A_2+2(k-r-1)\pi-B_2&=2\pi(k-1)\nonumber\\
            A_1+A_2-B_1-B_2&=0\nonumber\\
            A_1+A_2&=B_1+B_2 \label{eq:case1-sum_P}.
        \end{align}

        The angles $\gamma_1$ and $\gamma_{2k}$ are consecutive in $P$, then $\gamma_1+\gamma_{2k}\geq \pi$ (Remark~\ref{rmk:in-polygon}, 3rd item). Also, the angles $\gamma_{2r+1}$ and $\gamma_{2r+2}$ are consecutive in $P$, then $\gamma_{2r+1}+\gamma_{2r+2}\geq \pi$. Therefore: $$A_1+A_2\geq \gamma_1+\gamma_{2r+1}+\gamma_{2r+2}+\gamma_{2k}\geq 2\pi.$$ 
        Finally, from equation (\ref{eq:case1-sum_P}) we get that $A_1+A_2+B_1+B_2\geq 4\pi$ which is a contradiction with equation (\ref{eq:sum-p_1-p_2}) and the result is proved for Case 1.

        \vspace{.5cm}
        \textbf{Case 2:} (Figure~\ref{fig:2-paths}(b))

        \begin{align*}
            p_1 &=\gamma_1,\beta_2,\gamma_3,\dots,\beta_{2r}\\
            p_2 &= \beta_{2r+1}, \gamma_{2r+2}, \dots, \gamma_{2k}.
        \end{align*}

        Thus,

        \begin{align*}
            A_1&=\gamma_1+\gamma_3+\dots+\gamma_{2r-1},\\
            B_1&= \beta_2+\beta_4+\dots+\beta_{2r}, \\
            A_2&=\gamma_{2r+2}+\gamma_{2r+4}+\dots+\gamma_{2k}, \text{ and}\\
            B_2&= \beta_{2r+1}+\beta_{2r+5}+\dots+\beta_{2k-1}.
        \end{align*}

        Notice that $B_1$ and $B_2$ are the sum of $r$ and $(k-r)$ outer angles of $P$, respectively. From equation (\ref{eq: complement-angles}) we have that:
        \begin{align}
            A_1+2r\pi-B_1+A_2+2(k-r)\pi-B_2&=2\pi(k-1)\nonumber\\
            A_1+A_2-B_1-B_2&=-2\pi\nonumber\\
            A_1+A_2&=B_1+B_2-2\pi \label{eq:case2-sum_P}.
        \end{align}

        The angles $\gamma_1$ and $\gamma_{2k}$ are consecutive in $P$, then $\gamma_1+\gamma_{2k}\geq \pi$ (Remark~\ref{rmk:in-polygon}, 3rd item). Therefore:
        
        \begin{equation}\label{eq:sum_A}
            A_1+A_2\geq\pi,
        \end{equation}
        and using equation (\ref{eq:case2-sum_P}) we get that:
        \begin{equation}\label{eq:sum_B}
            B_1+B_2\geq 3\pi.
        \end{equation}
        Finally, adding (\ref{eq:sum_A}) and (\ref{eq:sum_B}) we deduce that $A_1+A_2+B_1+B_2\geq 4\pi$ which is a contradiction with equation (\ref{eq:sum-p_1-p_2}). Case 2 is done.

        %As we have said cases 2 and 3 are analogous. So, let us focus on case 4.

        \vspace{.5cm}
        \textbf{Case 4:} (Figure~\ref{fig:2-paths}(d))

        \begin{align*}
            p_1 &=\beta_1,\gamma_2,\beta_3,\dots,\beta_{2r+1}\\
            p_2 &= \beta_{2r+2}, \gamma_{2r+3}, \dots, \beta_{2k}.
        \end{align*}

        Thus,

        \begin{align*}
            A_1&=\gamma_2+\gamma_4+\dots+\gamma_{2r},\\
            B_1&= \beta_1+\beta_3+\dots+\beta_{2r+1}, \\
            A_2&=\gamma_{2r+3}+\gamma_{2r+5}+\dots+\gamma_{2k-1}, \text{ and}\\
            B_2&= \beta_{2r+2}+\beta_{2r+4}+\dots+\beta_{2k}.
        \end{align*}

        Notice that $B_1$ and $B_2$ are the sum of $(r+1)$ and $(k-r)$ outer angles of $P$, respectively. Then, using equation (\ref{eq: complement-angles}) we have that:
        \begin{align}
            A_1+2(r+1)\pi-B_1+A_2+2(k-r)\pi-B_2&=2\pi(k-1)\nonumber\\
            A_1+A_2-B_1-B_2+2\pi&=-2\pi\nonumber\\
            A_1+A_2&=B_1+B_2-4\pi \label{eq:case4-sum_P}.
        \end{align}

        Obviously $A_1+A_2\geq 0$, and from equation (\ref{eq:case4-sum_P}) we get: $B_1+B_2\geq 4\pi$. Thus, $A_1+A_2+B_1+B_2\geq 4\pi$, contradicting equation (\ref{eq:sum-p_1-p_2}) and the Lemma follows.
       
    \end{proof}

The following result has been proved by Bereg et al.~\cite{bereg2018computing}:

\begin{lemma}\label{lem:length}
In a SCS, the length of a path between any two starting positions in a synchronized schedule is in $2\pi\mathbb{N}$.
\end{lemma}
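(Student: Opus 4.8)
The plan is to exploit the synchronization directly: interpret the path as the trajectory of a single virtual robot moving at the model's constant speed, and show that, because switches between circles occur only at synchronized communication links, this virtual robot always occupies the position of some \emph{real} robot of the system. Since a real robot returns to its own starting position only at integer times, the traversal time of the path---hence its length divided by $2\pi$---is forced to be an integer.

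Concretely, I would first parametrize the path by time, letting $f(t)$ denote the point reached after traveling arc length $2\pi t$ from $\alpha_i$ along the assigned directions, so that the endpoint $\alpha_j$ is reached at time $T=L/(2\pi)$ and the goal reduces to $T\in\mathbb{Z}$. On the initial segment, which lies on $C_i$, we have $f(t)=\alpha_i+t\cdot 2\pi\cdot\delta_i$, exactly the position of the real robot on $C_i$ at time $t$. I would then propagate this coincidence across each switch: if the virtual robot leaves $C_a$ for $C_b$ at a communication link $\ell$ at time $t_0$, then because $\ell$ is a link of the SCS the two real robots meet there, so the real robot on $C_b$ is also at $\ell$ at time $t_0$; hence immediately after the switch $f$ continues to agree with the real robot on $C_b$, namely $\alpha_b+t\cdot 2\pi\cdot\delta_b$. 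By induction over the finitely many segments, $f(t)$ coincides at every time $t$ with the position of some real robot.

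Finally I would read off the conclusion at the endpoint. Since starting positions are assumed (without loss of generality) not to be link points, $\alpha_j$ lies on the single circle $C_j$, so the last segment of the path runs along $C_j$ and $f(T)=\alpha_j$ equals the position of the real robot on $C_j$ at time $T$. That robot satisfies $\alpha_j+T\cdot 2\pi\cdot\delta_j\equiv\alpha_j\pmod{2\pi}$, i.e.\ $T\,\delta_j\in\mathbb{Z}$, which forces $T\in\mathbb{Z}$; as $L\ge 0$ we obtain $L=2\pi T\in 2\pi\mathbb{N}$.

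The step I expect to be delicate is the propagation across switches: it relies essentially on the fact that every switch occurs at a \emph{synchronized} communication link (an edge of the communication graph $H$), so that the departing and arriving real robots are simultaneously present, rather than at an arbitrary tangency of the underlying potential graph. Equivalently, one can phrase the entire argument through a phase function $\varphi$ valued in $\mathbb{R}/\mathbb{Z}$ that assigns to each trajectory point the time (mod $1$) at which its robot visits it: synchronization is precisely what makes $\varphi$ well defined at the links, $\varphi$ increases at unit rate along the path, and $\varphi(\alpha_i)=\varphi(\alpha_j)=0$, so the total increase $T=L/(2\pi)$ is an integer.
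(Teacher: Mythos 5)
The paper does not actually prove this lemma: it is imported as a known result of Bereg et al.~\cite{bereg2018computing}, so there is no internal proof to compare yours against, and your blind attempt should be judged on its own merits. Judged that way, it is correct and self-contained. The invariant you propagate --- that the point moving along the path at speed $2\pi$ coincides at every instant with the position of some actual robot of the synchronized system --- is exactly the right one, and the step where synchronization enters is handled properly: at a switch from $C_a$ to $C_b$ at a link $\ell$, each of the two robots visits $\ell$ exactly once per unit of time, so the fact that they meet at $\ell$ at all forces their visit times to agree modulo $1$; hence the robot on $C_b$ is present at the instant your virtual robot arrives, and the coincidence survives the switch. The endgame is also sound: $\alpha_j+T\cdot 2\pi\cdot\delta_j\equiv\alpha_j\pmod{2\pi}$ with $\delta_j=\pm1$ forces $T\in\mathbb{Z}$, hence $L=2\pi T\in 2\pi\mathbb{N}$. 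Your closing reformulation via the phase function $\varphi$ (visit time modulo $1$), which is well defined at links precisely because the system is synchronized, increases at unit rate along the path, and vanishes at every starting position, is the cleanest way to phrase the argument, and it generalizes verbatim to synchronized systems whose trajectories are closed curves of unit period rather than circles. The one caveat, which you flag yourself, is that the argument requires every switch of the path to occur at an edge of the communication graph $H$ (a synchronized link), not merely at a tangency of the potential-link graph $G$; that is indeed how paths are understood throughout the paper, so your proof is complete as written.
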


    \begin{theorem}\label{unicity}
        In an SCS, there exists at most one path of length $2\pi$ between two starting positions.
    \end{theorem}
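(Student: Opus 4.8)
The plan is to assume, for contradiction, that two distinct paths of length $2\pi$ join $\alpha_i$ and $\alpha_j$, and then to extract from them a single cycle to which Lemma~\ref{lem:cycle} applies. Call the two paths $p_1$ and $p_2$. Since the travel direction on each circle is fixed by $\delta$, both paths leave $\alpha_i$ along the same arc and continue together until they first make different choices. A choice between staying on a circle and shifting to a tangent one can only be made at a communication link, so the first point of divergence, call it $\ell_1$, is a communication link. Let $\ell_2$ be the first position after $\ell_1$ at which $p_1$ and $p_2$ coincide again. I would first check that $\ell_2$ is also a communication link: two paths travelling on a common circle in the prescribed direction can only be at the same point if they entered that circle together at its preceding link (recall that starting positions are assumed not to lie on links, so no circle is entered at an interior starting point). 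In particular the coincidence at $\alpha_j$ forces a genuine link $\ell_2$ strictly before the end, and $\ell_2 \ne \ell_1$.

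Next I would isolate the cycle. By the minimality in the choice of $\ell_1$ and $\ell_2$, the subpath $q_1 \subseteq p_1$ and the subpath $q_2 \subseteq p_2$ running from $\ell_1$ to $\ell_2$ share no intermediate link; hence $q_1 \cup q_2$ is a cycle $\mathcal{C}$ in the communication graph, and $q_1, q_2$ are exactly the two paths from $\ell_1$ to $\ell_2$ inside $\mathcal{C}$. Writing $a$ for the (common) length of the prefix from $\alpha_i$ to $\ell_1$ and $b$ for the (common) length of the suffix from $\ell_2$ to $\alpha_j$, the two identities $a+|q_1|+b=2\pi$ and $a+|q_2|+b=2\pi$ immediately give $|q_1|=|q_2|=:L$.

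Finally, applying Lemma~\ref{lem:cycle} to the cycle $\mathcal{C}$, its distinct links $\ell_1 \ne \ell_2$, and the two equal-length paths $q_1, q_2$ yields $L \ge 2\pi$. On the other hand, since $\alpha_i$ is not a communication link while $\ell_1$ is, the prefix is nontrivial, so $a>0$ and therefore $L=2\pi-a-b \le 2\pi - a < 2\pi$, a contradiction. This forces $p_1=p_2$, proving the uniqueness claim.

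I expect the delicate point to be the structural claim that the first reconvergence of the two paths occurs at a communication link and that $q_1\cup q_2$ is a genuine (simple) cycle rather than a degenerate configuration; this is precisely where the hypothesis that starting positions avoid links is essential, and it also warrants a brief separate check when $i=j$, so that $p_1,p_2$ are full tours returning to $\alpha_i$. Once this structural step is secured, the length bookkeeping and the single invocation of Lemma~\ref{lem:cycle} are routine.
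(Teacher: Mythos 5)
Your overall strategy---locate the first divergence link $\ell_1$ and the first reconvergence point $\ell_2$, then feed the two middle sections into Lemma~\ref{lem:cycle}---is the same as the paper's, but there is a genuine gap at the step where you claim that, by minimality, $q_1$ and $q_2$ ``share no intermediate link; hence $q_1\cup q_2$ is a cycle.'' The problem is that your argument silently switches between two different notions of coincidence. Your justification that $\ell_2$ is a link (``two paths travelling on a common circle \dots can only be at the same point if they entered that circle together'') and your bookkeeping $a+|q_1|+b=a+|q_2|+b=2\pi$ both require $\ell_2$ to be the first point where the paths occupy the same position \emph{at the same arc-length along the paths}; with that (temporal) definition, $|q_1|=|q_2|$ is indeed automatic. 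But temporal minimality only rules out the two paths being at the same place after traveling the same distance; it does not prevent $q_1$ and $q_2$ from passing through a common communication link (or a common circle) after traveling \emph{different} distances. In that case $q_1\cup q_2$ is not a cycle of the communication graph and Lemma~\ref{lem:cycle} does not apply. If instead you define $\ell_2$ as the first common link regardless of the distances traveled (which does give internal disjointness as point sets), then the equal-length claim $|q_1|=|q_2|$ is no longer automatic, and your proposal offers no argument for it.

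This is exactly the point where the paper spends the other half of its proof, invoking Lemma~\ref{lem:length} (every path between starting positions has length in $2\pi\mathbb{N}$), a lemma you never use. The repair is the paper's splicing argument: if the two paths reach a common link $\ell$ after lengths $t_1\neq t_2$, concatenating the prefix of one path up to $\ell$ with the suffix of the other from $\ell$ onward yields a valid path between the two starting positions of length $t_1+(2\pi-t_2)$ or $t_2+(2\pi-t_1)$; one of these is positive and strictly less than $2\pi$, contradicting Lemma~\ref{lem:length}. This simultaneously forces the sections between shared links to have equal length and rules out links being shared ``out of sync,'' after which Lemma~\ref{lem:cycle} applies as you intend. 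So your proposal is the paper's argument in spirit, but the ``routine bookkeeping'' you defer to is precisely where Lemma~\ref{lem:length} is needed, and without it the proof does not go through.
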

    \begin{proof}
     As noted before, we can assume that the starting positions of an SCS are not in communication links. 
     %Note that if we have an SCS, $\mathcal{F}$, with some starting positions on communication links, we can shift an small value $\epsilon>0$ the system to build an equivalent SCS, $\mathcal{F'}$, whose starting positions are $\epsilon$ units length behind the starting positions in $\mathcal{F}$ and $\mathcal{F'}$ meets the property.
     For the sake of contradiction, let us suppose that there exist a pair of starting positions $\alpha_i$ and $\alpha_{i}'$ connected by two different paths $p$ and $p'$ of length $2\pi$. Let $\ell_1$ be the first communication link where $p$ and $p'$ separate from each other. Let $\ell_2$ be the first communication link after $\ell_1$ where $p$ and $p'$ join again, see Figure~\ref{fig:two_paths}. Let $d_p(x,y)$ (resp. $d_{p'}(x,y)$) denote the length of the section of $p$ (resp. $p'$) from $x$ to $y$. Also denote by $p(x,y)$ the section of the path $p$ from $x$ to $y$. By hypothesis, we have that $d_p(\alpha_i, \alpha_{i}')=d_{p'}(\alpha_i, \alpha_{i}')=2\pi$. By definition of $\ell_1$, we also have that $d_p(\alpha_i,\ell_1)=d_{p'}(\alpha_i, \ell_1)$. 
     Now, let us suppose that $d_p(\ell_1,\ell_2)\neq d_{p'}(\ell_1,\ell_2)$. Assume that $d_p(\ell_1,\ell_2)<d_{p'}(\ell_1,\ell_2)$. Then, $$d_p(\ell_2,\alpha_{i}')=2\pi-d_p(\alpha_i, \ell_1)-d_p(\ell_1,\ell_2)>2\pi-d_{p'}(\alpha_i, \ell_1)-d_{p'}(\ell_1,\ell_2)=d_{p'}(\ell_2,\alpha_{i}').$$
     As a consequence, the path obtained by joining the sections $p(\alpha_i,\ell_1),p'(\ell_1,\ell_2),p(\ell_2,\alpha_{i}')$ has length less than $2\pi$, contradicting Lemma \ref{lem:length}.
     A similar argument can be used if $d_p(\ell_1,\ell_2)>d_{p'}(\ell_1,\ell_2)$. Therefore, $d_p(\ell_1,\ell_2)=d_{p'}(\ell_1,\ell_2)<2\pi$. However, this is a contradiction with Lemma \ref{lem:cycle} and the result follows.
    \end{proof}

    \begin{figure}
        \centering
        \includegraphics{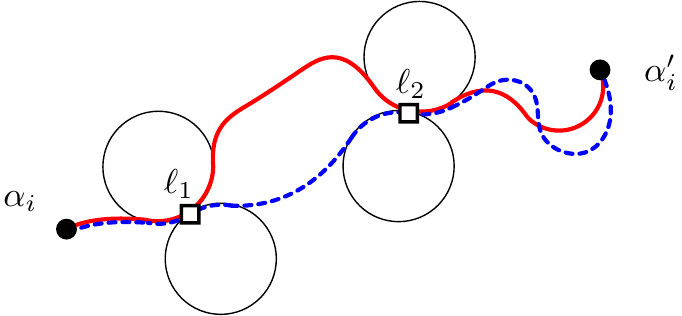}
        \caption{Two paths between two starting positions $\alpha_i$ and $\alpha{_i}'$.}
        \label{fig:two_paths}
    \end{figure}

\begin{remark}
  % In this work, we assume that there exists at most one path of length $2\pi$ between two points in the synchronization schedule. Notice that it is easy to prove that this property holds in grid configurations as in Figure \ref{fig:synchro_sample} (b). 
   Note that for general configurations, considering non circular trajectories, Theorem \ref{unicity} could be not true. However, the discretization of the model can be generalized by considering a multigraph and uniform probability for all edges representing the $2\pi$-length trajectories between two points. In any case, an SCS that uses the %\rndstr 
   random strategy can be modeled by random walks for practical configurations for which Theorem \ref{unicity} holds.
\end{remark}

\subsection{Randomized SCS's}\label{sec:transition-theoretical}

\begin{figure}[h]
	\centering
%	\begin{subfigure}{.45\textwidth}
		\centering
		\includegraphics[scale=1.5]{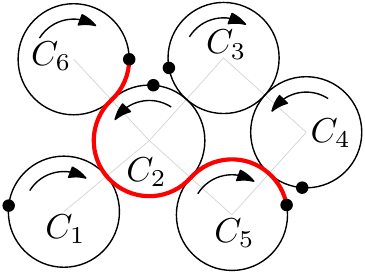}
%		\caption{}
%		\label{fig:discretization-SCS}
%	\end{subfigure}
%	\begin{subfigure}{.45\textwidth}
%		\centering
%		\includegraphics[scale=1.5, page=2]{random-discretization.pdf}
%		\caption{}
%		\label{fig:discretization-link-positions}
%	\end{subfigure}
	\caption{The solid points are the starting positions of a synchronization schedule. The path (in red) from $\alpha_6$ to $\alpha_5$ has length $2\pi$ and follows the travel directions along the visited trajectories. This path traverses four communication links. %(b) The starting positions $\alpha_i$ and $\alpha_j$ are link positions around the communication link $(i,j)$. The non-solid square points represent the communication links. The red (resp. blue) path from $\alpha_i$ (resp. $\alpha_j$) to $\alpha_j$ (resp. $\alpha_i$) has length $2\pi$ and traverses three (resp. two) communication links.
	}
\label{fig:discretization-SCS}
\end{figure}

\begin{table}[h]
	\centering
	\begin{subfigure}{.45\textwidth}
		\centering
		\def\arraystretch{1.3}% 
		\begin{tabular}{l llllll}
			\toprule
			& 1 & 2 & 3 & 4 & 5 & 6\\
			\cmidrule{2-7}
			1 & 1 & 1 & 1 & 1 & 1 & 0\\
			2 & 1 & 1 & 1 & 1 & 1 & 1\\
			3 & 0 & 1 & 1 & 1 & 1 & 0\\
			4 & 0 & 1 & 1 & 1 & 1 & 0\\
			5 & 0 & 1 & 1 & 1 & 1 & 0\\
			6 & 1 & 1 & 1 & 1 & 1 & 1\\
			\bottomrule		
		\end{tabular}
		\caption{}
	\end{subfigure}
	\begin{subfigure}{.45\textwidth}
		\centering
		\def\arraystretch{1.3}%
		\begin{tabular}{l llllll}
			\toprule
			& 1 & 2 & 3 & 4 & 5 & 6\\
			\cmidrule{2-7}
			1 & $\frac{1}{2}$ & $\frac{1}{8}$ & $\frac{1}{8}$ & $\frac{1}{8}$ & $\frac{1}{8}$ & 0\\
			2 & $\frac{1}{4}$ & $\frac{1}{16}$ & $\frac{1}{16}$ & $\frac{1}{16}$ & $\frac{1}{16}$ & $\frac{1}{2}$\\
			3 & 0 & $\frac{1}{4}$ & $\frac{1}{4}$ & $\frac{1}{4}$ & $\frac{1}{4}$ & 0\\
			4 & 0 & $\frac{1}{4}$ & $\frac{1}{4}$ & $\frac{1}{4}$ & $\frac{1}{4}$ & 0\\
			5 & 0 & $\frac{1}{4}$ & $\frac{1}{4}$ & $\frac{1}{4}$ & $\frac{1}{4}$ & 0\\
			6 & $\frac{1}{4}$ & $\frac{1}{16}$ & $\frac{1}{16}$ & $\frac{1}{16}$ & $\frac{1}{16}$ & $\frac{1}{2}$\\
			\bottomrule
		\end{tabular}
		\caption{}
	\end{subfigure}
	\caption{(a) Adjacency matrix of the discrete motion graph on Figure~\ref{fig:discretization-SCS}. The value $0$ means that a robot needs to travel more that $2\pi$ to connect the corresponding starting positions. (b) Matrix of transition probabilities of the discrete motion graph on Figure~\ref{fig:discretization-SCS} following the random strategy.} 
	\label{tab:adjacency-transitions}
\end{table}

This section focuses on a theoretical study of the 
%\rndstr 
random strategy using random walks on the discrete motion graph $G$.
Let $G=(V,E)$ be the discrete motion graph of an R-SCS with synchronization schedule $(\alpha,\delta)$. Let $\alpha_i$ and $\alpha_j$ be the starting positions of trajectories $C_i$ and $C_j$, respectively. If $(i,j)\in E$, there exists a path $p$ of length $2\pi$ from $\alpha_i$ to $\alpha_j$. Let $c_{ij}$ be the number of communication links traversed by $p$. As an illustration, in Figure~\ref{fig:discretization-SCS} the path from $\alpha_6$ to $\alpha_5$ traverses four communication links and $c_{6,5}=4$. %If $\alpha_i$ and $\alpha_j$ are link positions then the path $P$ starts right traversing the communication link between $C_i$ and $C_j$. In Figure~\ref{fig:discretization-link-positions}, the path from $\alpha_i$ to $\alpha_j$ traverses three communication links and $c_{ij}=3$. Analogously, notice that $c_{ji}=2$. 
The transition matrix of the process $M=(p_{ij})_{i,j\in V}$ is defined as follows:

\[M_{ij}=\displaystyle\left\{\begin{array}{cl}
(1/2)^{c_{ij}} &  \text{if }(i,j)\in E\\
0 & \text{otherwise}.
\end{array}\right.\]

% See Figure~\ref{fig:discretization-SCS}, the path from $\alpha_6$ to $\alpha_5$, traverses four communication links, so, $M_{6,5}=\frac{1}{16}$. 

Table~\ref{tab:adjacency-transitions}b shows the transition matrix associated with the discrete motion graph of the SCS shown in Figure~\ref{fig:discretization-SCS}. Using the definition of $M$ and taking into account that there are two possible paths to follow at every communication link, the following is immediately derived.
 
 \begin{property}\label{prop:stochastic-matrix}
 	The matrix $M$ is a right stochastic matrix; i.e., $\sum_j M_{ij}=1$. 	
 \end{property}
   %If there is no arc between two starting positions $\alpha_i$ and $\alpha_j$, then $M_{ij}=0$.

%We will call `random' robot to a robot which is following the \rndstr.
%Notice that the movement of a random robot on $\mathcal{F}$ is modeled by a random walk on $G$ and a step of the `random walker' or agent on $G$ corresponds to one time unit on $\mathcal{F}$. Also, since every edge in $G$ corresponds to a path in $\mathcal{F}$ of length $2\pi$ between two points in $S$, then, given a sequence of edges (a path) on $G$ traversed by a random walker we are able to get the traversed path by the corresponding random robot in $\mathcal{F}$. So, since we have $k$ available robots, in the following we study the behavior of $k$ random walkers on $G$.

%The transition matrix $M$ of a random walk on $G$ is an $n\times n$ matrix defined as follows:
%$$M_{ij}=\displaystyle\left\{\begin{array}{cl}
%(1/2)^{c_{ij}} &  \text{if }(i,j)\in E\\
%0 & \text{otherwise}.
%\end{array}\right.$$
%Table~\ref{tab:weights} is precisely the transition matrix of the SCS in Figure~\ref{fig:discretization}.

%Let $W_i = W_i(t)$ be a simple random walk starting from node $i$ on the state space $V$ with transition matrix $M$;  if the walk is at node $v$, the probability to move in the next step to $u$ is $(1/2)$ if $(v,u)\in E$ and 0 otherwise.

%With the graph above we can prove the following:

\begin{lemma}\label{lem:doubly-stoch}
	The transition matrix $M$ associated to a random walk on a discrete motion graph is \emph{doubly stochastic}. That is,
	\[\sum_{i} M_{ij}=\sum_{j} M_{ij}=1.\]
	% Let $M$ be transition matrix associated to a random walk on $S$ using $\mathcal{F}$. Let $\pi$ is the distribution that assigns to every point $x \in S$ the probability $\pi(x)=1/n$. Then $\pi$ is the stationary distribution for $M$.
\end{lemma}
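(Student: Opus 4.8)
The plan is to exploit a time-reversal symmetry of the system together with Property~\ref{prop:stochastic-matrix}. By Property~\ref{prop:stochastic-matrix} the rows of $M$ already sum to $1$, so it suffices to prove that every column sums to $1$, that is $\sum_i M_{ij}=1$ for each fixed $j$. The idea is to exhibit another R-SCS whose transition matrix is exactly $M^{T}$, and then apply the (already proved) row-sum property to it.

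First I would consider the \emph{reversed} schedule $(\alpha,-\delta)$, obtained from $(\alpha,\delta)$ by flipping the travel direction of every trajectory while keeping the starting positions fixed. The key observation is that direction reversal does not change which pairs of robots meet at a communication link: if a tangent point $\ell$ between $C_a$ and $C_b$ sits at angles $\theta_a$ and $\theta_b$ on the two circles, then the two robots coincide at $\ell$ precisely when $\delta_a(\theta_a-\alpha_a)\equiv\delta_b(\theta_b-\alpha_b)\pmod{2\pi}$, a condition manifestly invariant under $\delta\mapsto-\delta$. Hence $(\alpha,-\delta)$ has the same communication graph $H$; in particular it is again an SCS, so Theorem~\ref{unicity} applies to it and its transition matrix $M'$ is well defined (each $c'_{ji}$ comes from a unique length-$2\pi$ path).

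Next I would set up the correspondence between the length-$2\pi$ paths of the two systems. A length-$2\pi$ path $p$ from $\alpha_i$ to $\alpha_j$ that follows the directions $\delta$ becomes, when traversed backwards, a path from $\alpha_j$ to $\alpha_i$; along each visited trajectory the backward traversal moves according to $-\delta$, so it is exactly the length-$2\pi$ path of the reversed system from $\alpha_j$ to $\alpha_i$. It visits the same tangent points and therefore crosses the same number of communication links, giving $c'_{ji}=c_{ij}$. Consequently $(i,j)\in E\iff(j,i)\in E'$ and $M'_{ji}=(1/2)^{c'_{ji}}=(1/2)^{c_{ij}}=M_{ij}$, i.e. $M'=M^{T}$.

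Finally, since $M'$ is itself the transition matrix of an R-SCS, Property~\ref{prop:stochastic-matrix} gives $\sum_{k}M'_{jk}=1$ for every $j$. Substituting $M'_{jk}=M_{kj}$ yields $\sum_{k}M_{kj}=1$, which is exactly the statement that the columns of $M$ sum to $1$; combined with the row sums this shows $M$ is doubly stochastic. I expect the main obstacle to lie in the middle steps: carefully justifying that the reversed schedule is genuinely an SCS (so that Theorem~\ref{unicity} and the well-definedness of $M'$ are legitimate) and that the path reversal is a length- and link-count-preserving bijection, since once the identity $M'=M^{T}$ is established the rest is purely formal.
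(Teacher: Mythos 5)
Your proposal is correct and follows essentially the same route as the paper: reverse every travel direction to obtain the schedule $(\alpha,-\delta)$, observe that its transition matrix is $M^{\intercal}$, and apply Property~\ref{prop:stochastic-matrix} to that matrix to conclude the columns of $M$ sum to $1$. The only difference is that you spell out the details the paper leaves implicit (that the reversed schedule is still an SCS and that path reversal preserves length and link count), which is a welcome but inessential elaboration.
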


\begin{proof}
	Let $S=(\alpha,\delta)$ be the synchronization schedule used in the system.	By Property~\ref{prop:stochastic-matrix} we have $\sum_j M_{ij}=1$. Now, let us prove that $\sum_i M_{ij}=1$.
	
	Let $S'=(\alpha,-\delta)$ be the %(\emph{similar}\footnote{To know more about similar schedules go to Section~\ref{sec:theoretical}.}) 
	synchronization schedule  that results from reversing the orientation of every circle and maintaining the synchronized starting points. Let $G'=(V,E')$ be the discrete motion graph of the system using $S'$ and let $M'$ be the transition matrix associated with a random walk on $G'$. Note that $G$ and $G'$ have the same set of vertices $V$ and $(i,j)\in E$ if and only if $(j,i)\in E'$. Also note that $M_{ij}=M'_{ji}$, and then $M'=M^\intercal$. From Property~\ref{prop:stochastic-matrix} we know that $\sum_j M'_{ij}=1$; that is, each row of $M'$ adds up to $1$. Therefore, each column of $M$ adds up to $1$ and the result follows.
\end{proof}
%\begin{proof}[First proof]
%For trees and induction on the leaves.
%\end{proof}

%\begin{proof}[Second proof]
%Let $\mathcal{F}'$ be the synchronized system that results from reversing the orientation of every circle
%in $\mathcal{F}$. By Lemma~\ref{} $S$ is a set of synchronized points for $\mathcal{F}'$ also.
%Let $M'$ be the transition matrix associated to a random walk on $S$ using $\mathcal{F}'$.
%By Evaristo's observation $M'=M^\intercal$. Since $M$ is a transition matrix, each row of $M'$ adds up to $1$.
%Therefore, each column of $M$ adds up to $1$. The result follows.
%\end{proof}

\begin{lemma}\label{lem:strongly-connected}
The discrete motion graph of a synchronized system is strongly connected.
\end{lemma}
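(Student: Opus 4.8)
The plan is to exhibit a strongly connected spanning subgraph of $G$ directly from the communication graph $H$. Recall that, by definition of an SCS, the communication graph $H$ is connected; I will show that for every edge $\{i,i'\}$ of $H$ both arcs $(i,i')$ and $(i',i)$ belong to $E$. Since replacing each edge of a connected undirected graph by the two opposite arcs yields a strongly connected digraph, this immediately gives that $G$ is strongly connected.

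The heart of the argument is a timing computation exploiting the synchronization. Fix an edge $\{i,i'\}$ of $H$ and let $\ell$ be the communication link (the unique tangency point of $C_i$ and $C_{i'}$) at which the two robots actually meet; let $t_\ell\in(0,1)$ be the instant, within one period, at which this meeting occurs (here I use that starting positions can be assumed not to lie on links). I would then construct a path in $\mathcal{F}$ as follows: starting at $\alpha_i$, follow $C_i$ in direction $\delta_i$ without shifting at any intermediate link until reaching $\ell$ at time $t_\ell$; shift to $C_{i'}$; and then follow $C_{i'}$ in direction $\delta_{i'}$, again without shifting, until time $1$. Because the two robots meet at $\ell$ at time $t_\ell$, the arc of $C_i$ traversed from $\alpha_i$ to $\ell$ has length $2\pi t_\ell$, while the arc of $C_{i'}$ traversed from $\ell$ on to $\alpha_{i'}$ has length $2\pi(1-t_\ell)$; their sum is exactly $2\pi$. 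Hence this is a path of length $2\pi$ from $\alpha_i$ to $\alpha_{i'}$ following the assigned travel directions, so $(i,i')\in E$. Swapping the roles of $i$ and $i'$ gives $(i',i)\in E$ by the identical computation.

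The main point requiring care, rather than a genuine obstacle, is to verify that the constructed walk really has length $2\pi$ and really terminates at a starting position. This is where I rely on the fact (already noted from the synchronization) that a robot located at a position of $\alpha$ is again at a position of $\alpha$ after one unit of time, together with the observation that the robot, after shifting at $\ell$, coincides from time $t_\ell$ onward with the original robot of $C_{i'}$ and therefore returns to $\alpha_{i'}$ at time $1$. Note that choosing not to shift at the other links encountered along $C_i$ and $C_{i'}$ is a legal move and keeps the walk confined to these two circles, so no further bookkeeping is needed, and the uniqueness of the tangency point guarantees there is a single link between $C_i$ and $C_{i'}$. With both arcs of every $H$-edge placed in $E$, the bidirected version of $H$ is a spanning subgraph of $G$, and the connectivity of $H$ finishes the proof.
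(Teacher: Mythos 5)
Your proof is correct, and it takes a genuinely more direct route than the paper's. The paper argues by induction on the number of trajectories: it fixes a spanning tree $T$ of the communication graph, removes a circle $C_i$ corresponding to a leaf of $T$ (noting that what remains is still a synchronized system), applies the inductive hypothesis to the smaller system, and then reattaches vertex $i$ by observing that, since the robots of $C_i$ and its tree-neighbor $C_j$ meet at their link, both arcs $(i,j)$ and $(j,i)$ lie in $G$. That last observation—which the paper justifies essentially by pointing to a figure—is exactly the meeting-time computation you carry out explicitly for an arbitrary edge of $H$: the walker covers an arc of length $2\pi t_\ell$ on $C_i$, and after shifting it coincides with the robot of $C_{i'}$, hence covers the remaining arc of length $2\pi(1-t_\ell)$ and lands at $\alpha_{i'}$ at time $1$, for a total length of exactly $2\pi$. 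By applying this to every edge of $H$ simultaneously you dispense with the induction: the bidirection of $H$ is a spanning subgraph of $G$, and connectivity of $H$ (part of the definition of an SCS) finishes the proof. Your version is shorter, makes rigorous the step the paper leaves to a picture, and yields slightly more information (directed paths in $G$ can always be routed inside the bidirected $H$); the paper's induction buys only the minor convenience of checking the arc-existence claim for one edge per step, at the price of verifying that deleting a leaf circle preserves synchronization.
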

\begin{proof}
	We can prove this lemma by induction on the number of trajectories of the system. It is easy to check that the lemma holds for a synchronized system with one trajectory or with two trajectories. Suppose, as inductive hypothesis, that the lemma holds for any synchronized system of $n$ trajectories, $n\geq 2$. 
	
	Let $\mathcal{F}$ be a synchronized system with $n+1$ trajectories. Let $G$ be the discrete motion graph of $\mathcal{F}$. Let $T$ be a spanning tree of the communication graph of $\mathcal{F}$. Let $C_i$ and $C_j$ be two trajectories of the system such that $C_i$ corresponds to a leaf of $T$ and $C_j$ is adjacent to $C_i$ in $T$, see Figure~\ref{fig:strongly-connected}. Note that by removing $C_i$ from the system and keeping the starting positions and travel directions in the other trajectories, we get a synchronized system $\mathcal{F'}$ with $n$ trajectories. Let $G'$ be the discrete motion graph of $\mathcal{F'}$. Let $i'$ be an arbitrary vertex of $G'$. By the inductive hypothesis there are two paths in $G'$, one from $i'$ to $j$ and the other from $j$ to $i'$. Then there are paths in $\mathcal{F'}$ from $\alpha_{i'}$ to $\alpha_j$ and viceversa, Figure~\ref{fig:strongly-connected} shows these paths in red in subfigures (a) and (b), respectively. Observe that, in $\mathcal{F}$ there are paths of length $2\pi$ from $\alpha_i$ to $\alpha_j$ and vice versa (these paths are shown in blue in Figure~\ref{fig:strongly-connected}, (a) and (b), respectively). Therefore, the arcs $(i,j)$ and $(j,i)$ are in $G$. As a consequence, the path from $i'$ to $j$ (resp. from \ $j$ to $i'$) in $G'$ can be extended using the edge $(j,i)$ (resp. \ $(i,j)$) and the lemma is fulfilled.
\end{proof}

\begin{figure}
	\centering
	\begin{subfigure}{.45\textwidth}
		\centering
		\includegraphics[page=1]{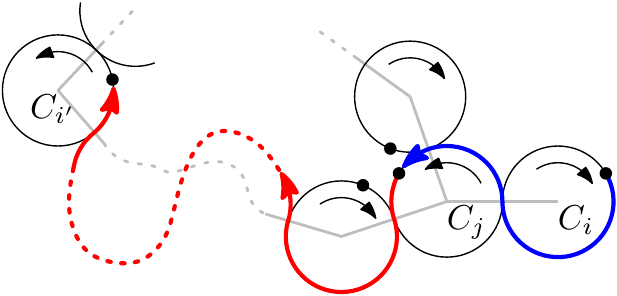}
		\caption{}
	\end{subfigure}\qquad
	\begin{subfigure}{.45\textwidth}
		\centering
		\includegraphics[page=2]{strongly-connected.pdf}
		\caption{}
	\end{subfigure}
	\caption{Illustration of the proof of Lemma~\ref{lem:strongly-connected}. (a) In red: path from $\alpha_j$ to $\alpha_{i'}$. In blue: path from $\alpha_i$ to $\alpha_j$. (b) In red: path from $\alpha_{i'}$ to $\alpha_{j}$. In blue: path from $\alpha_j$ to $\alpha_i$.}
	\label{fig:strongly-connected}
\end{figure}

From the previous lemma, the following is directly deduced.
\begin{corollary}
A random walk in a discrete motion graph is \emph{irreducible}
\footnote{A random walk in a graph is irreducible if, for every pair of vertices $u$ and $v$, the probability of transiting from $u$ to $v$ is greater than zero \cite{lovasz}.}.
\end{corollary}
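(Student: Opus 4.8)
The plan is to unpack the definition of irreducibility given in the footnote and match it directly to strong connectivity, which is exactly what Lemma~\ref{lem:strongly-connected} supplies. Irreducibility asks that for every ordered pair of vertices $u,v$ there be a positive probability of eventually transiting from $u$ to $v$; equivalently, that some power $M^t$ have a strictly positive $(u,v)$ entry. The key observation making the reduction work is that every arc of the discrete motion graph $G$ carries a strictly positive transition probability, since the definition of $M$ gives $M_{ij}=(1/2)^{c_{ij}}>0$ whenever $(i,j)\in E$. In other words, the support of $M$ coincides with the arc set of $G$.

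First I would invoke Lemma~\ref{lem:strongly-connected}: since $G$ is strongly connected, for any two vertices $u$ and $v$ there is a directed path $u=w_0,w_1,\dots,w_m=v$ in $G$. Then I would bound the $m$-step transition probability from $u$ to $v$ from below by the probability of traversing exactly this path, namely the product $\prod_{t=0}^{m-1}M_{w_t w_{t+1}}$. Because each arc $(w_t,w_{t+1})$ lies in $E$, each factor is strictly positive by the remark above, so the whole product is positive and hence $(M^m)_{uv}>0$. This is precisely the irreducibility condition stated in the footnote, applied to the arbitrary pair $(u,v)$.

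I do not expect a genuine obstacle here: the corollary is essentially a dictionary translation of strong connectivity into the language of Markov chains, and the only ingredient beyond Lemma~\ref{lem:strongly-connected} is the trivial observation that the transition probabilities along existing arcs are never zero. The one point I would take care to state cleanly is the equivalence between ``reachable via a directed path in $G$'' and ``reachable with positive probability,'' which holds exactly because $M$ and the adjacency structure of $G$ share the same support; once this is made explicit, the strong connectivity furnished by the preceding lemma closes the argument immediately.
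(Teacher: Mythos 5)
Your proposal is correct and follows exactly the route the paper takes: the paper deduces the corollary directly from Lemma~\ref{lem:strongly-connected}, and your argument simply spells out the standard step the paper leaves implicit, namely that each arc of $G$ carries positive probability $M_{ij}=(1/2)^{c_{ij}}>0$, so a directed path from $u$ to $v$ yields a positive product of transition probabilities and hence $(M^m)_{uv}>0$. No gap; this is the same proof written in full detail.
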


Also, from Lemma
\ref{lem:strongly-connected} and taking into account that $M_{ii}>0$ for all $i$, we obtain the following.
\begin{corollary}\label{cor:aperiodic}
	A random walk in a discrete motion graph is \emph{aperiodic}.
\end{corollary}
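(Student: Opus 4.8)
The plan is to read off aperiodicity directly from the definition given above, exploiting the fact that $G$ has a self-loop at every vertex. Recall that a vertex $i$ has period $k = \gcd\{t > 0 \mid \Pr(v_t = i \mid v_0 = i) > 0\}$, and that $G$ is declared aperiodic exactly when every vertex has period $1$. So it suffices to exhibit, for each vertex $i$, one time step at which a return to $i$ has positive probability; the smallest such step being $1$ will force the greatest common divisor down to $1$.

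The key observation I would rely on is that every vertex of the discrete motion graph carries a self-loop. Indeed, as remarked when $G$ was introduced, a robot can traverse its own circle $C_i$ and return to the starting position $\alpha_i$ in exactly one unit of time, so $(i,i) \in E$; consequently the diagonal entry of the transition matrix is $M_{ii} = (1/2)^{c_{ii}} > 0$. This is the single fact on which the whole argument turns.

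First I would fix an arbitrary vertex $i$ and note that $\Pr(v_1 = i \mid v_0 = i) = M_{ii} > 0$. Hence $1$ lies in the set $\{t > 0 \mid \Pr(v_t = i \mid v_0 = i) > 0\}$, and its greatest common divisor is therefore $1$, so $i$ is aperiodic. Since $i$ was arbitrary, every vertex is aperiodic and thus $G$ is aperiodic by definition. Combined with the strong connectivity established in Lemma~\ref{lem:strongly-connected}, this places the random walk in the regime where the stationary distribution exists, is unique, and $P_t$ converges to it.

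There is essentially no obstacle to overcome: aperiodicity is a per-vertex property, and its verification reduces entirely to producing a closed walk of length one at each vertex, which the unit-time self-loop supplies automatically. The only point I would state with care is precisely this reduction, so that it is clear no appeal to the more delicate cycle-length estimates of Lemma~\ref{lem:cycle} or Theorem~\ref{unicity} is needed here.
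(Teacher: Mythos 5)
Your proof is correct and takes essentially the same route as the paper, whose one-line derivation likewise rests on the positive diagonal entries $M_{ii}>0$ arising from the unit-time self-loops $(i,i)\in E$. If anything, you sharpen the paper's phrasing slightly by noting that, under the per-vertex definition of aperiodicity used here, the additional appeal to strong connectivity (Lemma~\ref{lem:strongly-connected}) is not needed for this corollary itself, only for the subsequent existence of and convergence to the stationary distribution.
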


Now, from Corollary~\ref{cor:aperiodic}, Lemma~\ref{lem:strongly-connected} and Lemma~\ref{lem:doubly-stoch} we arrive at the main result of this section.

\begin{theorem}\label{thm:stationary}
	A random walk on a discrete motion graph has a stationary distribution $\pi^*$, and $\pi^*$ is uniform. That is, $\pi^*(i)=1/n$ for all $1\leq i\leq n$, where $n$ is the number of trajectories in the system.
\end{theorem}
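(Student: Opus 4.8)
The plan is to assemble the theorem from the three structural properties already established, rather than to compute anything about $M$ directly. By Lemma~\ref{lem:strongly-connected} the discrete motion graph $G$ is strongly connected, so the associated random walk is irreducible; as recalled at the start of Section~\ref{theoretical}, an irreducible finite chain possesses a stationary distribution $\pi^*$ and this distribution is \emph{unique}. Hence it suffices to exhibit \emph{one} stationary distribution and to recognize it as the uniform one; uniqueness then finishes the argument.

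First I would verify that the uniform vector $u$ with $u(i)=1/n$ satisfies the stationarity (balance) equation $\pi^*(j)=\sum_i M_{ij}\,\pi^*(i)$ for every $j$, which expresses that the probability mass flowing into $j$ reproduces $\pi^*(j)$. Substituting $u$ gives
\[
\sum_i M_{ij}\,u(i)=\frac1n\sum_i M_{ij},
\]
so $u$ is stationary if and only if every column of $M$ sums to $1$. This is exactly the column-sum half of the \emph{doubly stochastic} property proved in Lemma~\ref{lem:doubly-stoch}; therefore $\sum_i M_{ij}=1$ and the right-hand side equals $1/n=u(j)$. Thus $u$ is a stationary distribution, and by the uniqueness noted above $\pi^*=u$, i.e. $\pi^*(i)=1/n$ for all $i$.

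I expect no genuine obstacle here: all the real mathematical work lives in the preceding lemmas, and the theorem is essentially their corollary via the ergodic theorem for finite Markov chains. The one point demanding care is the bookkeeping of conventions. Since $M$ is defined so that $M_{ij}$ is the probability of moving from $i$ to $j$, it is automatically row-stochastic (Property~\ref{prop:stochastic-matrix}); the uniform distribution being stationary is a statement about the \emph{columns} of $M$, so one must invoke Lemma~\ref{lem:doubly-stoch} and not merely Property~\ref{prop:stochastic-matrix}. Finally, Corollary~\ref{cor:aperiodic} contributes aperiodicity, which upgrades the conclusion: $P_t$ actually converges to $\pi^*$ as $t\to\infty$, so the walk equidistributes over the trajectories in the limit.
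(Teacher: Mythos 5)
Your proposal is correct and follows exactly the paper's route: the paper deduces the theorem from Lemma~\ref{lem:strongly-connected} (irreducibility, hence existence and uniqueness of the stationary distribution), Lemma~\ref{lem:doubly-stoch} (column sums equal to $1$, so the uniform vector satisfies the balance equations), and Corollary~\ref{cor:aperiodic} (aperiodicity, giving convergence of $P_t$). You merely spell out the verification that the paper leaves implicit, including the correct attention to the row-versus-column stochasticity convention.
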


%Let $x$ be a point in a circle $C$. The point $x+2\pi d t$, with $d\in\{1,-1\}$ and $t\in\mathbb{R}, t>0$, is the point obtained by traveling from $x$ during $t$ units of time on $C$ in counterclockwise if $d=1$ or clockwise if $d=-1$.

From Theorem~\ref{thm:stationary} and using the properties of a synchronized system, the following result is a direct consequence.

\begin{corollary}\label{cor:stationary_SCS}
	Let $\mathcal{F}$ be an R-SCS. Let $(\alpha,\delta)$ denote the synchronization schedule used on $\mathcal{F}$.  Assuming that the starting position of a robot is chosen with uniform probability in $\alpha=\{\alpha_1,\dots, \alpha_n\}$, then after $t\in \mathbb{R}^+$ units of time, the robot is at some point of $\alpha'=\{\alpha_1+2\pi \delta_1 t,\dots,\alpha_n+2\pi \delta_n t\}$ with uniform probability. Moreover, $(\alpha',\delta)$ is a new synchronization schedule (which is equivalent
	%\footnote{See Definition~\ref{def:equivalent-schedules} of Chapter~\ref{chap:scs}.} 
	to $(\alpha,\delta)$).
	
	%Let $\mathcal{F}$ be an R-SCS. Let $(\alpha,\delta)$ denote the synchronization schedule used on $\mathcal{F}$.  Assuming that the starting position of a robot is picked with uniform probability in $\{\alpha_1,\dots, \alpha_n\}$, then after $t\in\mathbb{R}, t>0$ units of time the robot is at some point of $A'=\{\alpha_1+2\pi \delta_1 t,\dots,\alpha_n+2\pi \delta_n t\}$ with uniform probability. The schedule obtained by using $A'$ as starting position and the same travel directions is \emph{equivalent}\footnote{See Definition~\ref{def:equivalent-schedules} of Chapter~\ref{chap:scs}.} to the schedule $(\alpha,\delta)$.
\end{corollary}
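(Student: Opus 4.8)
The plan is to route through Theorem~\ref{thm:stationary}, splitting the elapsed time into its integer and fractional parts and handling each separately; the fractional sub-step is where the real work lies. First I would write $t = m + s$ with $m = \lfloor t \rfloor \in \mathbb{Z}_{\ge 0}$ and $s \in [0,1)$. Since a displacement of $2\pi$ returns a robot to the same angular position, $\alpha_i + 2\pi\delta_i t \equiv \alpha_i + 2\pi\delta_i s \pmod{2\pi}$, so $\alpha' = \{\alpha_i + 2\pi\delta_i s\}$; in particular $\alpha' = \alpha$ when $t \in \mathbb{Z}$. The claim that $(\alpha',\delta)$ is an equivalent synchronization schedule is then immediate from the snapshot observation of Section~\ref{sec:discretization}: reading off positions $s$ time units later yields an equivalent schedule, because all robots advance in lockstep and hence still meet at the same communication links at the same relative instants.

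Next, because the initial position is uniform on $\alpha$, we have $P_0 = \pi^*$, and since $\pi^*$ is stationary (Theorem~\ref{thm:stationary}) the distribution after the $m$ whole units is still the uniform $\pi^*$ on $\alpha$. So it remains to prove the single assertion: starting from the uniform distribution on $\alpha$ at an integer time, after a further $s \in [0,1)$ units the robot is uniform on $\alpha' = \{\alpha_i + 2\pi\delta_i s\}$. The key structural observation is that at time $s$ the robot always sits at one of the snapshot points $\alpha_k + 2\pi\delta_k s$: if it has stayed on its circle this is clear, and if it performed a shifting operation at some link $\ell$ then by synchronization it met there the robot already travelling the neighbouring circle and from then on follows that robot's path, hence at time $s$ occupies that robot's snapshot point.

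The crux, and the main obstacle, is to show the resulting sub-unit transition preserves uniformity, i.e.\ that the matrix $Q(s)$ with entries $Q(s)_{ik} = \Pr(\text{a walker at }\alpha_i\text{ at time }0\text{ is at }\alpha_k+2\pi\delta_k s\text{ at time }s)$ is doubly stochastic. Rows sum to $1$ by the previous observation. For the columns I would use a flow-conservation argument, the continuous analogue of Lemma~\ref{lem:doubly-stoch}: imagine one independent walker launched from each $\alpha_i$, so that each trajectory initially carries expected occupancy $1$. Between communication links the motion is a rigid translation and preserves occupancy, and at each link between two circles the four equiprobable stay/shift outcomes redistribute the two incoming unit expectations into expected occupancy exactly $1$ on each outgoing circle (a direct $\tfrac14,\tfrac14,\tfrac14,\tfrac14$ count). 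By induction over the finitely many link events in $[0,s]$ every snapshot point therefore retains expected occupancy $1$, that is $\sum_i Q(s)_{ik} = 1$.

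A doubly stochastic matrix fixes the uniform law, so applying $Q(s)$ to the uniform distribution on $\alpha$ yields the uniform distribution on $\alpha'$, and composing with the integer part gives $P_t$ uniform on $\alpha'$ as required. The delicate points to get right are the meeting property underlying ``every walker sits at a snapshot point'' and the bookkeeping of the per-link redistribution when walkers coincide, but neither requires more than the synchronization hypothesis already in force.
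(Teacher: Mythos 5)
Your proposal is correct, and it actually supplies more than the paper does: the paper states this corollary with no written proof at all, calling it a ``direct consequence'' of Theorem~\ref{thm:stationary} and ``the properties of a synchronized system.'' Your decomposition $t=m+s$ with the integer part handled by stationarity is exactly the intended route; the genuine added content is your treatment of the fractional part, where you isolate the right obstacle (the sub-unit transition matrix $Q(s)$ must be doubly stochastic, not merely stochastic) and close it with the flow-conservation induction over link events. That argument is sound: each walker's coin flip at a link is independent of its past, so expected occupancy $1$ per snapshot point is preserved through every stay/shift event, and linearity of expectation turns occupancy into column sums. Two small remarks. First, in your justification that a shifting walker ``follows that robot's path,'' the object it follows should be the moving snapshot point $\alpha_j+2\pi\delta_j s$ of the neighbouring circle, which is defined by the schedule alone --- in an R-SCS with $k<n$ robots there may be no robot on that circle, but the snapshot point is always there at the link by synchronization, so the claim survives unchanged. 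Second, the column-sum step admits a shorter proof in the style the paper already uses for Lemma~\ref{lem:doubly-stoch}: reversing all travel directions puts sub-unit paths from $\alpha_i$ to $\alpha_k+2\pi\delta_k s$ in bijection with sub-unit paths of the reversed system traversing the same links, so $Q(s)^{\intercal}$ is the sub-unit transition matrix of the reversed system and its rows sum to $1$ by the same binary-decision-tree argument that gives your row sums; your occupancy argument and this reversal argument are interchangeable here, with yours being more self-contained and the reversal being more economical given Lemma~\ref{lem:doubly-stoch}.
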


Let $\mathcal{F}$ be an R-SCS and let $G=(V,E)$ be its discrete motion graph.
Knowing that the stationary distribution of a random walk on $G$ is uniform, in the following two subsections some theoretical results about the idle time and the isolation time of $\mathcal{F}$ are presented. Obtaining theoretical results for the broadcast time for our model seems to be hard and remains open for future works. Indeed, previous work on broadcast time is limited to specific
graph families, namely grids (using a protocol different from ours, since all possible directions at each vertex are considered in the literature) and random graphs. See the paper by Giakkoupis et al.~\cite{saribekyan2019spread} and references therein for a recent work on the broadcast time.

\subsubsection{Idle time}\label{sec:idle}
%\lec{This is a study of the idle time for visiting the starting positions, not any point of the trajectories. Ruy said something about that this reasoning can be repeated for any shifting of the starting positions, so, this is valid for any point in the trajectories. We need to explain this more carefully. Another way is to study the idle time of passing through the arcs.}

In this subsection, some results related to the idle time of an R-SCS with $k$ robots are presented.
%The measure to be explored in this subsection is the idle time. %Note that it is different to the \emph{cover time}, the required time to visit all points in the terrain.

\newcommand{\stepscount}{\lceil\frac{n}{k}\rceil}
\begin{theorem}\label{thm:patrolling}
	Suppose that $k$ agents are performing a random walk on a discrete motion graph $G=(V,E)$ and their starting vertices are taken uniformly on $V$. Let $v$ be an arbitrary vertex of $V$. The expected number of robots that visit $v$ during an interval of $\stepscount$ steps is at least one $(n=|V|)$.
	%  Suppose that $k$ synchronized drones are doing a random walk on $\mathcal{F}$. Suppose that the starting position
	%  of each drone is taken uniformly on $S$. Let $x$ be any point of $S$. Then the expected number of drones that visit $x$
	%  during an interval of time of length $\frac{n}{k}+1$ is equal to one. 
	%\jmiguel{I think the length is $\frac{n}{k}$  }
\end{theorem}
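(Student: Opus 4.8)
The plan is to use linearity of expectation together with the fact, established in Theorem~\ref{thm:stationary}, that the uniform distribution is the (unique) stationary distribution of a random walk on $G$. The crucial feature of the hypothesis is that the robots are started from the uniform distribution, which is \emph{exactly} the stationary distribution; hence the marginal law of each robot's position does not change over time, and the per-step probability of being at $v$ is the same constant $1/n$ for every robot and every step.

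First I would record the following observation. Since $P_0$ is uniform and the uniform distribution $\pi^*$ is stationary (Theorem~\ref{thm:stationary}, which itself rests on $M$ being doubly stochastic, Lemma~\ref{lem:doubly-stoch}), we have $P_t = M^t P_0 = \pi^*$ for every $t \ge 0$. Consequently, for each individual robot $r$ and each time step $t$, the probability that robot $r$ occupies the fixed vertex $v$ equals exactly $1/n$, independently of $t$ and of the index $r$.

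Next, for an interval consisting of the $T := \lceil n/k\rceil$ steps, I would introduce the count
\[
N \;=\; \sum_{r=1}^{k}\ \sum_{t}\ \mathbf{1}\bigl[\text{robot } r \text{ is at } v \text{ at step } t\bigr],
\]
where the inner sum ranges over the $T$ steps of the interval, so that $N$ is the total number of visits to $v$, counted over the $k$ robots and over the $T$ steps. By linearity of expectation (which holds irrespective of any dependence between the robots) together with the per-step probability computed above,
\[
\mathbb{E}[N] \;=\; \sum_{r=1}^{k}\ \sum_{t}\ \Pr\bigl(\text{robot } r \text{ at } v \text{ at step } t\bigr) \;=\; k\cdot T\cdot \frac{1}{n}.
\]
Finally, using $T = \lceil n/k\rceil \ge n/k$ yields $\mathbb{E}[N] \ge k\cdot (n/k)\cdot (1/n) = 1$, which is the claim.

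I expect the only genuine subtlety to lie not in the arithmetic but in the modelling step: one must read ``the expected number of robots that visit $v$'' as the expected number of visit events, equivalently the sum over the interval of the number of robots present at $v$ at each step, rather than the expected number of \emph{distinct} robots that visit $v$ at least once. The linearity argument gives a clean lower bound for the former, whereas for the latter each robot contributes only $\Pr(\text{visits at least once}) \le T/n$, an upper bound that would not yield the stated inequality. I would therefore make this interpretation explicit at the outset. The secondary point worth stating carefully is that the per-step marginal is independent of $t$; this is precisely what starting from the stationary distribution buys us, and it is the reason the bound holds uniformly over the whole interval rather than only in the $t\to\infty$ limit.
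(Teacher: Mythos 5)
Your proof is correct and is essentially identical to the paper's: both invoke Theorem~\ref{thm:stationary} to get a per-step, per-robot probability of $1/n$, sum indicator variables over the $k$ robots and the $\lceil n/k\rceil$ steps, and apply linearity of expectation to obtain $\lceil n/k\rceil\cdot k/n \ge 1$. Your closing remark on reading the statement as counting visit events (the paper's $E[Y]$) rather than distinct robots is exactly the interpretation the paper's own proof adopts, so there is no discrepancy.
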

\begin{proof}
	Let $t_1,\dots,t_c$ be an arbitrary sequence of $c=\stepscount$ steps of the random walk ($t_i=t_1+i-1$ for all $1<i\leq c$). By Theorem~\ref{thm:stationary}, at step $t_i$ ($1\leq i \leq c$) each robot is at $v$ with probability $1/n$. For every $1\leq i \leq c$ and every $1\leq j\leq k$, let $X_{ij}$ be the random variable given by
	%  Let $t_1 < t_2 < \dots < t_{n/k}$ be the times at which the drones are located
	%  at points of $S$ during the given interval of time.
	%  By lemma~\ref{lem:stationary}, each drone is at $x$ at time $t_i$  with probability $1/n$.
	%  For every $1 \le i \le n/k$ and every $1 \le j \le k$, let $X_{ij}$ be the random variable given by:
	\begin{equation*}
	X_{ij} = 
	\begin{cases}
	1 & \text{if the }j\text{-th robot is at } v \text{ at step } t_i,\\
	0 & \text{otherwise.}
	\end{cases} 
	\end{equation*}
	Therefore, \[ E[X_{ij}]=\frac{1}{n}.\]
	Let \[Y=\sum_{i=1}^{c}\sum_{j=1}^{k} X_{ij}.\]
	Note that the expected number of robots that visit $v$ during the given interval
	of steps is equal to $E[Y]$.
	By linearity of expectation, 
	\[E[Y]=\sum_{i=1}^{c}\sum_{j=1}^{k} E[X_{ij}]=\sum_{i=1}^{c}\sum_{j=1}^{k} \frac{1}{n}=c\frac{k}{n}=\left\lceil\frac{n}{k}\right\rceil\frac{k}{n}\geq 1.\qedhere\]
\end{proof}

%From Theorem \ref{thm:patrolling}, we obtain the following result.

%\begin{corollary}
%	Suppose that $k$ agents are performing a random walk on a discrete motion graph $G=(V,E)$ and their starting vertices are taken uniformly on $V$. For all $v\in V$, let $I_v$ be the random variable that indicates the number of steps between two consecutive visits to $v$. The expected value of $I_v$ is $E[I_v]\leq \stepscount$, where $n=|V|$.
%\end{corollary}

\begin{theorem}\label{thm:idle time}
	Let $\mathcal{F}$ be an R-SCS where $k$ robots are operating. If the robots start at randomly chosen positions then the idle time is at most $\frac{n}{k}+1$, where $n=|V|$.
\end{theorem}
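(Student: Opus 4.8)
The plan is to convert the lower bound on the expected number of visits (Theorem~\ref{thm:patrolling}) into an upper bound on the idle time by relating the discrete-step count to continuous time. The key observation is that in the discrete motion graph $G$, one step of the random walk corresponds to exactly one unit of time in $\mathcal{F}$, and a vertex $v$ of $G$ corresponds to a starting position $\alpha_v$. By Corollary~\ref{cor:stationary_SCS}, the uniform stationary distribution is preserved along the continuous trajectory: the configuration at any real time $t$ is again an equivalent synchronization schedule with the uniform distribution. Thus the random-walk analysis at the discrete starting positions actually governs coverage of the entire continuous terrain, not just the finitely many points $\alpha_i$.

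The main steps I would carry out are as follows. First, I would invoke Theorem~\ref{thm:patrolling} to conclude that over any window of $c=\lceil n/k\rceil$ consecutive steps, the expected number of robot-visits to a fixed vertex $v$ is $c\cdot k/n \geq 1$. Second, I would translate this into a statement about a fixed point $x$ of the terrain: by Corollary~\ref{cor:stationary_SCS}, fixing an arbitrary point $x$ on some trajectory and considering the times at which a robot could observe $x$, the same uniform-distribution argument applies, so the expected number of observations of $x$ in an interval of length $\lceil n/k\rceil$ is at least one. Third, since the idle time is the expected gap between consecutive observations of a point, and a window of length $\lceil n/k\rceil$ contains at least one observation in expectation, the expected waiting time until the next observation is bounded by this window length. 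Finally, I would note $\lceil n/k\rceil \le n/k + 1$ to obtain the stated bound.

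The \textbf{main obstacle} I anticipate is the passage from ``expected number of visits in a window is at least one'' to ``expected time-to-next-visit is at most the window length.'' These are not logically equivalent in general — a random variable can have expected count $\geq 1$ over an interval while still having large expected gaps if visits are clustered. The cleanest way around this is to argue directly: fix a point $x$ and a reference time, and bound the expected idle time at $x$ by observing that the probability no robot visits $x$ within a window of $\lceil n/k\rceil$ steps must be controlled, or alternatively to appeal to the stationarity and a renewal-type argument showing the mean inter-visit time equals the reciprocal of the visit rate. Since each of the $k$ walkers visits $v$ with stationary probability $1/n$ per step, the aggregate visit rate to $v$ is $k/n$ per unit time, and by the elementary renewal identity the mean inter-visit time is $n/k$; the extra $+1$ absorbs the discretization (the ceiling and the fact that observations of a continuous point $x$ occur at a single instant within each unit-time window). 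I expect the authors to handle this either by the renewal/rate argument or by directly bounding the expected position of the first visit within the window, and I would follow whichever keeps the constants matching the clean bound $\frac{n}{k}+1$.
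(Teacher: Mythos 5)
There is a genuine gap. Your main route (Theorem~\ref{thm:patrolling} $\Rightarrow$ gap bound) fails for exactly the reason you yourself flag: an expected count of at least one visit per window does not bound the expected gap between visits. The problem is that neither of your two proposed repairs actually closes this hole, and both miss the mechanism the paper uses. The paper's proof does not invoke Theorem~\ref{thm:patrolling} at all. It fixes a point $q$ on trajectory $C_1$, observes that $q$ can only be observed at the discrete instants $t^\ast+i$ (with $t^\ast\in[0,1)$ determined by $q$), and then uses two ingredients you never supply: (i) the independence of the $k$ robots under the random strategy, which together with Corollary~\ref{cor:stationary_SCS} gives the \emph{exact} per-instant observation probability $p=1-\left(1-\frac{1}{n}\right)^k$; and (ii) the inequality $(1-x)^k\le \frac{1}{1+kx}$ (proved via AM--GM), which yields $p\ge \frac{k/n}{1+k/n}$ and hence an expected gap of at most $E[X]=\frac{1}{p}\le \frac{n}{k}+1$ for $X$ geometric with parameter $p$. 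The $+1$ in the statement is not a discretization slack; it is precisely the loss in passing from the union-bound rate $k/n$ to the true observation probability $p$.

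This is also why your renewal fallback is not just incomplete but pointed the wrong way. The aggregate robot-visit rate $k/n$ counts visits with multiplicity, but the idle time concerns gaps between instants at which \emph{at least one} robot is at $q$. Under the random strategy robots may coincide and travel together (the paper notes this explicitly), so the rate of observation instants is $p=1-\left(1-\frac{1}{n}\right)^k<\frac{k}{n}$ for $k\ge 2$, and the mean gap $\frac{1}{p}$ is therefore \emph{larger} than $\frac{n}{k}$, not equal to it. Clustering makes gaps longer, so a rate-$\frac{k}{n}$ renewal identity gives a lower bound on the gap, and the claim that ``the extra $+1$ absorbs the discretization'' is exactly the step that needs proof; quantifying how much larger than $\frac{n}{k}$ the quantity $\frac{1}{p}$ can be \emph{is} the inequality $\left(1-\frac{1}{n}\right)^k\le\frac{1}{1+k/n}$. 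Your first alternative (``control the probability that no robot visits $x$'') gestures at the right object, namely the complement probability $\left(1-\frac{1}{n}\right)^k$, but without computing it via robot independence and bounding its reciprocal, the bound $\frac{n}{k}+1$ does not follow.
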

\begin{proof}
    Let $q$ be point in the union of the trajectories. Without loss of generality assume that $q \in C_1$.
    Let $t^\ast \in [0,1)$ be such that $q=\alpha_1+\delta_1 2 \pi t^\ast.$ For every integer
    $i \ge 0$, let $p$ be the probability that at time $t^\ast+i$ there is a robot at $q$. By Corollary~\ref{cor:stationary_SCS},
    for every $1 \le j \le k$, the $j$th-robot is at $q$ with probability $1/n$. Therefore, $p$ is independent of the choice of $i$
    and 
\[    p =1-\left (1-\frac{1}{n} \right )^k
      \ge 1-\frac{1}{1+\frac{k}{n}}
      = \frac{\frac{k}{n}}{1+\frac{k}{n}}, \]
     where the inequality follows from the inequality 
     \begin{equation} \label{eq:ber}
        (1-x)^n \le \frac{1}{1+nx}, \ 0 \le x \le 1 \textrm{ and } n \in \mathbb{N}.
     \end{equation}
     Equation~\ref{eq:ber} follows from the AM-GM inequality. The AM-GM inequality states
     that the arithmetic mean (AM) of a list of non-negative real numbers is greater or equal to
     its geometric mean (GM). Thus, \[(1-x)^n(1+nx)=GM(1-x,1-x,\ldots,1-x,1+nx)^{n+1}\leq
AM(1-x,1-x,\ldots,1-x,1+nx)^{n+1}=1.\footnote{ see \texttt{https://math.stackexchange.com/questions/2078342/a-simpler-proof-of-1-xn-frac11nx}} \]
    Let $X$ be the geometric random variable with parameter $p$. We have that
    the expected time between two consecutive visits to $q$ by a robot is at most
    \[E[X]=\frac{1}{p}\le \frac{n}{k}+1.\]
\end{proof}

%Taking into account that one step in the discrete motion graph corresponds to one time unit and by using Corollary~\ref{cor:stationary_SCS} and Theorem~\ref{thm:patrolling}, the following conclusion can be obtained.
%\begin{corollary}\label{cor:idle time}
%	Let $\mathcal{F}$ be an R-SCS where $k$ robots are operating. If the robots start at randomly chosen positions then the idle time is at most $\stepscount$, where $n=|V|$.
%\end{corollary}

%  \begin{proof}
%  Let $x$ be a point in a circle. Consider $S$ the set of synchronized points so that $x\in S$. Using Theorem \ref{patrolling} the result follows.
%   \end{proof}
This result assumes that the robots are uniformly located at the beginning and that the system lies in the stationary distribution. However, in practical applications the robots are all deployed at a given position; for instance in the trajectories nearest to the boundary of the global region. In this case, the time it takes for the system to reach a stationary distribution; that is, the \emph{mixing time} must be studied. Informally, the mixing time is the time needed for a random walk to reach its stationary distribution, or, more specifically, the number of steps
before the distribution of a random walk is \emph{close} to its stationary distribution. In the experiments, it will be shown that the mixing time in the model is very small for grids in real scenarios and then it can be said that, after a few units of time (for instance, 5 steps for a $5\times 5$ grid; i.e., for 25 trajectories), the idle time of the random strategy for $k$ drones is at most $\stepscount$.

\subsubsection{Isolation time}\label{sec:isolation}

%In this subsection we study a measure related to the isolation resilience of an R-SCS. 
A \emph{meeting} occurs when two robots arrive at a common communication link between their trajectories at the same time. The expected time that a robot is isolated, that is, the expected time between two consecutive meetings that a robot has is studied in this section. This measure is theoretically studied in the discrete motion graph using resources from random walks as in the previous section. Therefore, in this theoretical study, a meeting between two agents when they visit the same vertex of the discrete motion graph at the same time is studied.

Let $\mathcal{F}$ be an R-SCS and let $G$ be its corresponding discrete motion graph.
Suppose that, in $G$, an agent $u$ has two consecutive meetings at time steps $t$ and $t+s$. Agent $u$ has thus been isolated in $G$ by $s$ time units. Let $v_0,\dots,v_s$ be the consecutive sequence of vertices of $G$ visited by $u$ in this time interval. Let $u'$ and $u''$ be the agents that meet $u$ in $v_0$ and $v_s$, respectively. Taking into account that the vertices of $G$ are not communication links, then in $\mathcal{F}$, $u$ and $u'$ separate their paths at a communication link $p$ traversed by them at some time $t'$ such that $t<t'<t+1$, see Figure \ref{fig:isolation-SCSvsDiscrete}. Similarly, $u$ and $u''$ meet at a communication link $q$ of $\mathcal{F}$ at some time $t''$ such that $t+s-1<t''<t+s$, %see Figure \ref{fig:isolation-SCSvsDiscrete}. 
Therefore, in $\mathcal{F}$, $u$ has been isolated for fewer than $s$ time units. Also, notice that, if two robots in $\mathcal{F}$ meet at a communication link ($o$ in Figure \ref{fig:isolation-SCSvsDiscrete}) coming from two different starting positions ($v_0$ and $v_0'$ in Figure \ref{fig:isolation-SCSvsDiscrete}) and, after the meeting, they go toward different starting positions ($v_1$ and $v_1'$ in Figure \ref{fig:isolation-SCSvsDiscrete}), then this meeting is not reported in $G$. 

\begin{figure}
\centering
\includegraphics[scale=1.2]{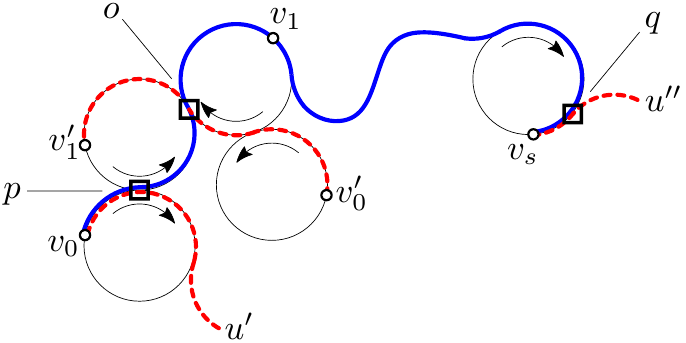}
\caption{The points $v_0,v_1,\dots,v_s, v_0', v_1'$ are the corresponding starting positions on their circles. The path $v_0,v_1,\dots, v_s$, traversed by a robot $u$, is shown in blue. The robot $u'$ departs with $u$ from $v_0$ at time $t$ and their paths separate at the communication link $p$. The robot $u''$ meets $u$ at the communication link $q$ and they arrive at $v_s$ traveling together. If a robot $u'''$ departs from $v_0'$ at time $t$ following the red dashed path that ends at $v_1'$, then, $u$ and $u'''$ meet at the communication link $o$ without visiting any starting position together.}
\label{fig:isolation-SCSvsDiscrete}
\end{figure}

From these observations, it is straightforward to deduce the following.
\begin{remark}\label{rmk:isolation-upper-bound}
	%Notice that, the meetings in the discrete motion graph are reported only when two robots arrive at the same time at a starting position of the R-SCS. Therefore, 
	The isolation time of a robot in an R-SCS is always less than the isolation time in the corresponding discrete motion graph. 
\end{remark}

% \begin{theorem}\label{pair}
%  Let $r$ and $r'$ be two agents doing a random walk on $G=(V,E)$ with starting position
%  taken uniformly on $V$. Then the expected
%  number of times that $r$ and $r'$ meet at some vertex of $V$ during an interval of time of length $n$ is equal to one.
% \end{theorem}
% \begin{proof}
%  Let $t_1, \dots, t_{n}$ be an arbitrary sequence of steps of the random walk ($t_i=t_1+i-1$ for $1\leq i\leq n$). By Corollary~\ref{cor:stationary}, $r$ and $r'$ are both at a vertex $v\in V$ at step $t_i$  with probability $1/n^2$.
%  For every $1 \le i \le n$ and every $v \in V$ let $X_{i,v}$ be the random variable given by:
%  \begin{equation*}
%   X_{i,v} = 
%   \begin{cases}
%   1 & \text{if } r \text{ and } r' \text{ are at } v \text{ at step } t_i,\\
%   0 & \text{otherwise.}
%   \end{cases} 
%  \end{equation*}
%  Note that \[ E[X_{i,v}]=\frac{1}{n^2}.\]
%  Let \[Y=\sum_{i=1}^{n} \sum_{v \in V} X_{i, v}.\]
%  Note that the expected number of times that $r$ and $r'$ meet at some vertex of $V$ during the given interval of steps is equal to $E[Y]$.
%  By linearity of expectation,
%  \[E[Y]=\sum_{i=1}^{n}\sum_{v \in V} E[X_{i,v}]=\sum_{i=1}^{n}\sum_{v \in V} \frac{1}{n^2}=1.\]
% \end{proof}

% From the previous theorem it is deduced the following result:
% \begin{corollary}

% \end{corollary}

\begin{theorem}\label{thm:starvation}
	Suppose that $k$ agents are performing a random walk on a discrete motion graph $G=(V,E)$ and their starting vertices are taken uniformly on $V$.  Then the expected number of times that one agent meets another agent at some vertex of $V$ during an interval of $\displaystyle\left\lceil\frac{n^{k-1}}{n^{k-1}-(n-1)^{k-1}}\right\rceil$ steps is at least one.
\end{theorem}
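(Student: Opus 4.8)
The plan is to mirror the argument of Theorem~\ref{thm:patrolling}, replacing the event ``a robot visits $v$'' by the event ``a fixed agent shares its vertex with some other agent.'' First I would fix one agent $u$ and, for an arbitrary block of $c=\left\lceil\frac{n^{k-1}}{n^{k-1}-(n-1)^{k-1}}\right\rceil$ consecutive steps $t_1,\dots,t_c$, introduce the indicator random variable $X_i$ that equals $1$ precisely when at step $t_i$ at least one of the remaining $k-1$ agents occupies the same vertex as $u$. The number of meetings of $u$ during the block is then $Y=\sum_{i=1}^{c}X_i$, and the statement reduces to showing that $E[Y]\ge 1$.

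The key computation is the common value $p=E[X_i]$. By Theorem~\ref{thm:stationary} the stationary distribution of the walk is uniform, and since the $k$ walks are mutually independent in an R-SCS (Remark~\ref{rmk:movement-dependency}), at step $t_i$ the positions of the $k$ agents are independent and each is uniform on $V$. Conditioning on the position $v$ of $u$, each of the other $k-1$ agents avoids $v$ independently with probability $1-\frac{1}{n}$, so the probability that all of them avoid $v$ is $\left(1-\frac{1}{n}\right)^{k-1}$. Therefore
\[
p \;=\; 1-\left(1-\frac{1}{n}\right)^{k-1}\;=\;\frac{n^{k-1}-(n-1)^{k-1}}{n^{k-1}},
\]
and, crucially, stationarity makes this value independent of the index $i$.

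It then remains only to package these facts together. By linearity of expectation $E[Y]=\sum_{i=1}^{c}E[X_i]=c\,p$, and since $c=\lceil 1/p\rceil\ge 1/p$ we obtain $E[Y]=c\,p\ge 1$, which is exactly what we want. Note that the ceiling in the interval length is precisely chosen so that $c\ge 1/p$, just as in the idle-time analysis.

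I expect the only delicate point to be the justification that the agents' positions at a fixed step are genuinely independent and each uniform. This is where Theorem~\ref{thm:stationary} (uniformity of the stationary distribution) and the independence of the walks in an R-SCS (Remark~\ref{rmk:movement-dependency}) must both be invoked, together with the hypothesis that the starting vertices are chosen uniformly so that the chain already sits in its stationary distribution at every step $t_i$. Everything else is the same linearity-of-expectation argument already used in Theorem~\ref{thm:patrolling}, and no new structural property of $G$ is needed.
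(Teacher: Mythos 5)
Your proposal is correct and is essentially the paper's own proof: both rest on Theorem~\ref{thm:stationary} (uniform distribution at every step) together with the independence of the agents under the random strategy, compute the same per-step meeting probability $p=\frac{n^{k-1}-(n-1)^{k-1}}{n^{k-1}}$, and finish by linearity of expectation, $E[Y]=c\,p\ge 1$ because $c=\left\lceil 1/p\right\rceil$. The only difference is cosmetic: the paper decomposes the count into indicators $X_{i,v}$ over (step, vertex) pairs and sums over $v\in V$ (yielding $n\cdot\frac{n^{k-1}-(n-1)^{k-1}}{n^{k}}$ per step, the same $p$), whereas you marginalize over the position of $u$ directly and use a single indicator per step.
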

\begin{proof}
	Let $t_1, \dots, t_c$ be an arbitrary sequence of $c=\displaystyle\left\lceil\frac{n^{k-1}}{n^{k-1}-(n-1)^{k-1}}\right\rceil$ steps of a random walk ($t_i=t_1+i-1$ for all $1<i\leq c$) performed by an agent $u$.
	By Theorem~\ref{thm:stationary}, it can be deduced that $u$ is alone at a vertex $v$ of $V$ at step $t_i$ with probability $\frac{1}{n}\left(\frac{n-1}{n}\right)^{k-1}=\frac{(n-1)^{k-1}}{n^k}$.
	Also, $u$ is not at $v$ at step $t_i$ with probability $\frac{n-1}{n}$. Thus, the probability of $u$ being with at least one robot at $v$ at step $t_i$ is given by: \[1-\frac{(n-1)^{k-1}}{n^k}-\frac{n-1}{n}=\frac{n^{k-1}-(n-1)^{k-1}}{n^k}.\]
	
	Let $X_{i,v}$ be the random variable given by:
	\begin{equation*}
	X_{i,v} = 
	\begin{cases}
	1 & \text{if } u \text{ meets some robot}  \text{ at } v \text{ at step } t_i,\\
	0 & \text{otherwise.}
	\end{cases} 
	\end{equation*}
	It is easy to see that \[ E[X_{i,v}]=\frac{n^{k-1}-(n-1)^{k-1}}{n^k}.\]
	Let \[Y=\sum_{i=1}^{c} \sum_{v \in V} X_{i, v}.\]

	Now, the expected number of times that $u$ meets another agent at some vertex of $V$ during the given sequence of steps is equal to $E[Y]$ and
	\[E[Y]=\sum_{i=1}^{c}\sum_{v \in V} E[X_{i,v}]=\sum_{i=1}^{c}\sum_{v \in V} \left(\frac{n^{k-1}-(n-1)^{k-1}}{n^k}\right)\geq 1.\qedhere\]
\end{proof}

From this theorem, the following result is obtained.
\begin{corollary}\label{cor:meeting_G}
	Suppose that $k$ agents are performing a random walk on a discrete motion graph $G=(V,E)$ and their starting vertices are taken uniformly on $V$.   For every agent, the expected number of steps between two consecutive meetings is at most $\displaystyle\left\lceil\frac{n^{k-1}}{n^{k-1}-(n-1)^{k-1}}\right\rceil$.
\end{corollary}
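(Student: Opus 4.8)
The plan is to extract a single, time-independent per-step meeting probability from the proof of Theorem~\ref{thm:starvation} and then invert it, mirroring the way Theorem~\ref{thm:idle time} was obtained from its visit-counting counterpart. First I would fix an agent $u$ and note that at any fixed step $t_i$ the events ``$u$ meets some agent at $v$'', ranging over $v\in V$, are mutually exclusive, since $u$ occupies exactly one vertex at a time. Summing over the $n$ vertices the per-vertex probability already computed inside Theorem~\ref{thm:starvation} gives the probability that $u$ meets at least one other agent at step $t_i$,
\[
  p \;=\; n\cdot\frac{n^{k-1}-(n-1)^{k-1}}{n^{k}} \;=\; 1-\Bigl(\tfrac{n-1}{n}\Bigr)^{k-1},
\]
and by Theorem~\ref{thm:stationary} this value is independent of $i$. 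Its reciprocal is exactly $1/p=\frac{n^{k-1}}{n^{k-1}-(n-1)^{k-1}}$, the quantity appearing inside the ceiling in the statement.

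The second step is to convert the constant per-step success probability $p$ into the expected waiting time between consecutive meetings. The quick heuristic, in the spirit of the geometric-variable argument used for the idle time, is that the number of steps $X$ until $u$'s next meeting is geometric with parameter $p$, so $E[X]=1/p$, and since $1/p\le\lceil 1/p\rceil$ the corollary follows. To avoid relying on a false independence assumption I would instead argue through the per-window bound directly: partition a long horizon of $T$ steps into $\lfloor T/c\rfloor$ disjoint blocks of length $c=\lceil 1/p\rceil$, apply Theorem~\ref{thm:starvation} to obtain at least one expected meeting in each block, and conclude $E[N(T)]\ge\lfloor T/c\rfloor$, where $N(T)$ counts the meetings of $u$. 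Dividing by $T$ and letting $T\to\infty$ shows the long-run meeting rate is at least $1/c$, and the expected gap between consecutive meetings, being the reciprocal of this rate, is at most $c$.

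The step I expect to be the main obstacle is precisely this last passage from a meeting \emph{rate} to an expected inter-meeting \emph{gap}. The successive joint positions of the $k$ walkers form a Markov chain, so the steps are not independent and the naive geometric computation is only heuristic; what genuinely holds is that each marginal occupancy is uniform (Theorem~\ref{thm:stationary}), which pins down the rate but not the law of an individual gap. To make the deduction rigorous I would start the joint walk from its stationary (uniform) distribution, so that the meeting-indicator sequence of $u$ is a stationary process and its meetings form a stationary point process; the inversion formula relating a stationary point process to its Palm distribution (equivalently, the elementary renewal theorem when the gaps happen to be i.i.d.) then turns the rate bound $\ge 1/c$ into the expected-gap bound $\le c$. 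The only remaining checks -- that meetings recur almost surely, so that gaps are finite -- follow from the irreducibility and aperiodicity of the joint walk established for a single walker in Lemma~\ref{lem:strongly-connected} and Corollary~\ref{cor:aperiodic}.
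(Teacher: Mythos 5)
Your proposal is correct, and it in fact does strictly more than the paper does: the paper states this corollary with no proof at all, presenting it as an immediate consequence of Theorem~\ref{thm:starvation} (at least one expected meeting in every window of $c=\left\lceil\frac{n^{k-1}}{n^{k-1}-(n-1)^{k-1}}\right\rceil$ steps, hence an expected gap of at most $c$). Your per-step meeting probability $p = 1-\left(\tfrac{n-1}{n}\right)^{k-1}$ is exactly the quantity implicit in the paper's proof of that theorem, obtained the same way (summing the per-vertex probability over the $n$ mutually exclusive locations of $u$), so up to that point the two arguments coincide. What you genuinely add is the recognition that passing from a meeting \emph{rate} of at least $1/c$ to an expected inter-meeting \emph{gap} of at most $c$ is not a tautology---the meeting indicators are not independent across time, since the walkers' joint positions form a Markov chain---together with the standard repair: start the joint walk in its stationary product-uniform distribution (it is irreducible and aperiodic on $V^k$ because each factor chain is, by Lemma~\ref{lem:strongly-connected} and Corollary~\ref{cor:aperiodic}), note that the meeting-indicator sequence is then stationary, and invoke Palm inversion (Kac's lemma) to get a mean return time to the meeting event of exactly $1/p\le\lceil 1/p\rceil$. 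The paper's treatment buys brevity; yours buys an actual justification of the rate-to-gap inversion that the paper leaves silent, and which is also the unstated heuristic behind the geometric-variable step in the paper's proof of Theorem~\ref{thm:idle time}. One small point of care: when citing irreducibility and aperiodicity, state explicitly that you need them for the \emph{product} chain on $V^k$; the extension from the single-walker results is routine (aperiodicity of the factors is precisely what makes the product irreducible), but it is the one place where aperiodicity is genuinely used.
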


Now, extending the previous result to an R-SCS (using Remark~\ref{rmk:isolation-upper-bound}), we have the following.
\begin{corollary}\label{cor:meeting_F}
	Let $\mathcal{F}$ be an R-SCS where $k$ robots are operating. If the robots start at randomly chosen positions, then for every robot, the expected time between two consecutive meetings is less than $\displaystyle\left\lceil\frac{n^{k-1}}{n^{k-1}-(n-1)^{k-1}}\right\rceil$.
\end{corollary}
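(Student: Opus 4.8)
The plan is to derive this statement by combining the discrete bound of Corollary~\ref{cor:meeting_G} with the pathwise comparison recorded in Remark~\ref{rmk:isolation-upper-bound}; almost all of the substance already lives in those two results, and the corollary itself amounts to chaining two inequalities while respecting the dictionary between the continuous system $\mathcal{F}$ and its discrete motion graph $G$.

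First I would align the hypotheses. Each trajectory of $\mathcal{F}$ is a single vertex of $G$, so choosing each robot's starting position uniformly in $\alpha=\{\alpha_1,\dots,\alpha_n\}$ is exactly the hypothesis ``starting vertices taken uniformly on $V$'' of Corollary~\ref{cor:meeting_G} (and it is the regime of the stationary analysis in Corollary~\ref{cor:stationary_SCS}). Because one step of a walker on $G$ is one time unit in $\mathcal{F}$, the ``expected number of steps between two consecutive meetings'' furnished by Corollary~\ref{cor:meeting_G} is precisely the expected isolation time measured in the discrete model, and it is bounded above by $\left\lceil\frac{n^{k-1}}{n^{k-1}-(n-1)^{k-1}}\right\rceil$.

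Then I would invoke Remark~\ref{rmk:isolation-upper-bound}, recalling its justification (the discussion around Figure~\ref{fig:isolation-SCSvsDiscrete}): whenever a robot $u$ has consecutive $G$-meetings at steps $t$ and $t+s$, in $\mathcal{F}$ the companion it leaves at $v_0$ in fact separates from it at a communication link at some time $t'\in(t,t+1)$, and the companion it rejoins at $v_s$ reaches the common link at some time $t''\in(t+s-1,t+s)$, so $u$'s actual isolation interval $(t',t'')$ has length strictly less than $s$; moreover $\mathcal{F}$ may register meetings at communication links strictly between starting positions that $G$ never sees. Hence the isolation time in $\mathcal{F}$ is strictly smaller than the isolation time in $G$, which is the content of the Remark. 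Combining with the previous paragraph yields
\[
\text{(isolation time in }\mathcal{F}\text{)}<\text{(isolation time in }G\text{)}\le\left\lceil\frac{n^{k-1}}{n^{k-1}-(n-1)^{k-1}}\right\rceil,
\]
which is the assertion.

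The only point deserving attention is that the strict inequality of Remark~\ref{rmk:isolation-upper-bound} is what upgrades the ``at most'' of Corollary~\ref{cor:meeting_G} to the ``less than'' stated here; this passage from a pathwise strict inequality to a strict inequality of expectations is legitimate because the per-realization isolation length in $\mathcal{F}$ is strictly below the corresponding discrete length, so their difference is a strictly positive random variable and therefore has strictly positive expectation. I do not expect any genuine obstacle: no new probabilistic estimate is required and the geometric facts are already established. The only real care is bookkeeping---ensuring the uniform-start and one-step-per-unit-time correspondences are applied correctly so that the discrete bound transfers verbatim to $\mathcal{F}$.
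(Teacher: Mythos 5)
Your proposal is correct and follows exactly the paper's route: the paper derives this corollary precisely by combining the discrete bound of Corollary~\ref{cor:meeting_G} with the pathwise comparison of Remark~\ref{rmk:isolation-upper-bound}, which is what you do. Your additional remark on why the pathwise strict inequality transfers to a strict inequality of expectations is a sound (and slightly more careful) articulation of the step the paper leaves implicit.
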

%\begin{proof}
%	Let $\alpha$ denote the set of starting positions on $\mathcal{F}$.
%	By Corollary~\ref{cor:meeting_G}, we have that the expected time between two consecutive meetings of a robot at points of $\alpha$ is $\displaystyle\left\lceil\frac{n^{k-1}}{n^{k-1}-(n-1)^{k-1}}\right\rceil$. Notice that, if two robots met at some point $\alpha_i\in \alpha$ such that $\alpha_i$ is not a link position then they have been flying together, at least, since the last communication link before $\alpha_i$. Therefore the expected time between two meetings on $\mathcal{F}$ is actually less than or equal to the one on $G$.
%\end{proof}

Note that Corollary~\ref{cor:meeting_F} says that a robot meets another robot in an interval of time of length at most $\displaystyle\left\lceil\frac{n^{k-1}}{n^{k-1}-(n-1)^{k-1}}\right\rceil$. This implies that when there are only two robots in the system, they meet at most every $n$ time units (by substituting $k=2$ in the formula). %While $k$ increases the expected time between consecutive meetings decreases.

Figure~\ref{fig:theoretical_isolation} shows the behavior of the expected time when $k$ is increasing. The behavior of this function says that a small number $k<<n$ of drones is enough to produce an admissible isolation time with the random strategy.

\begin{figure}
	\centering
	%\missingfigure{Theoretical-isolation time.pdf}
	\includegraphics[width=.6\textwidth]{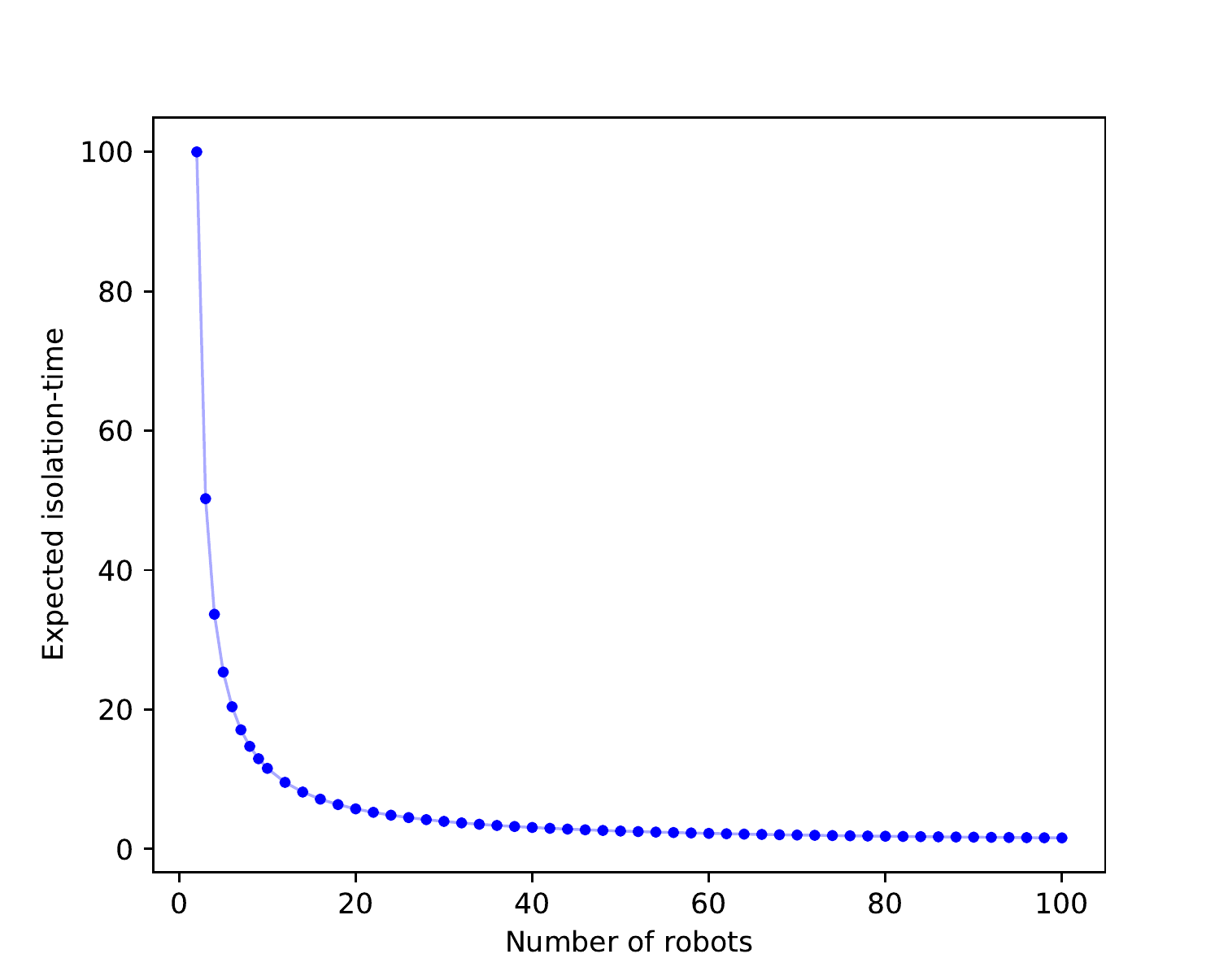}
	\caption{Behavior of the function $I(k)=\left\lceil\frac{n^{k-1}}{n^{k-1}-(n-1)^{k-1}}\right\rceil$, the expected isolation time for  $2\leq k\leq 100$ robots in a scenario of $n=100$ trajectories.}
	\label{fig:theoretical_isolation}
\end{figure}

\section{Experimental results}\label{experiments}
In this section, the validity of the proposed random strategies is evaluated by comparing the values of idle time, isolation time and broadcast time with the results obtained using the deterministic strategy. It is also shown that the obtained bounds for idle time and isolation time are
tight, and then the mixing time is evaluated to explore the rate at which the system converges to the uniform distribution.
These experiments were implemented in Python3.6 using NumPy 1.15.2. The code is available at GitHub\footnote{\url{https://github.com/varocaraballo/random-walks-synch-sq-grid}} for the sake of reproducibility. %Also we study the probability distributions of the idle time of the points in the systems (and the isolation time of a robot?). Finally, we study how quick the information of a detected event is spread out over the set of robots, that is the \emph{gossiping-time}.

\subsection{The experiments}

Typically, in surveillance tasks with small drones, the map is split into a grid of small cells \cite{acevedo2014one}. Thus, a series of experiments are performed on grid graphs of sizes $10\times 10$, $15\times 15$, $20\times 20$ and $30\times 30$. For each of these $N\times N$ grids,  the random strategies are simulated with $k$ robots, $1\leq k< N^2$. For every grid and each value $k$, 10 repetitions of the experiment are carried out, choosing the starting position of each robot uniformly at random from the synchronized positions. %, that is assuming that our system starts in the stationary distribution (see Corollary~\ref{cor:stationary}). 
Each experiment on an $N\times N$ grid ran for $4N^2$ units of time. In the following, $E_{N,k}^{(i)}$ denotes the $i$-th repetition of an experiment on an $N\times N$ grid using $k$ robots. 

An auxiliary graph is introduced to perform the experiments. Let $\mathcal{F}$ be a synchronized $N\times M$ grid-shaped system. Let $\mathcal{W}_{N,M}$ be a graph, the \emph{walking graph}, whose vertices are the communication links (the edges between neighboring circles of $\mathcal{F}$) and the touching points between the trajectories and an imaginary box circumscribing $\mathcal{F}$; see Figure~\ref{fig:waking_graph}. There is a directed edge $(v,w)$ in $\mathcal{W}_{N,M}$ if there exists an arc of length $\pi/2$ between the points corresponding to $v$ and $w$ following the assigned travel directions. In the following, let $V(\mathcal{W}_{N,M})$ and $E(\mathcal{W}_{N,M})$ denote the set of vertices and edges of $\mathcal{W}_{N,M}$, respectively.  
It is easy to see that $\mathcal{W}_{N,M}$ has $4(N+M)$ vertices and $4NM$ edges. Each robot moves on $\mathcal{F}$ following the travel directions and can perform a shifting operation only at the communication links. Thus, any (not necessarily simple) path in $\mathcal{W}_{N,M}$ corresponds to a valid sequence of movements of a robot in $\mathcal{F}$ where a step in $\mathcal{W}_{N,M}$ corresponds to 1/4 of a time unit in $\mathcal{F}$. In this way, the movement of an agent in $\mathcal{W}_{N,M}$ (following the arcs in $E(\mathcal{W}_{N,M})$) emulates a valid sequence of movements by a robot in $\mathcal{F}$. %Inspired in the idea of \emph{random walks} on graphs, in this paper we propose  two stochastic alternatives to the deterministic strategy of \cite{diaz2017}.
Experiments on the walking graph $\mathcal{W}_{N,N}$ are performed. In the following, the walking graph is denoted by $\mathcal{W}_N$ for easy.

\begin{figure}
	\centering
	\includegraphics[width=.3\textwidth, page =2]{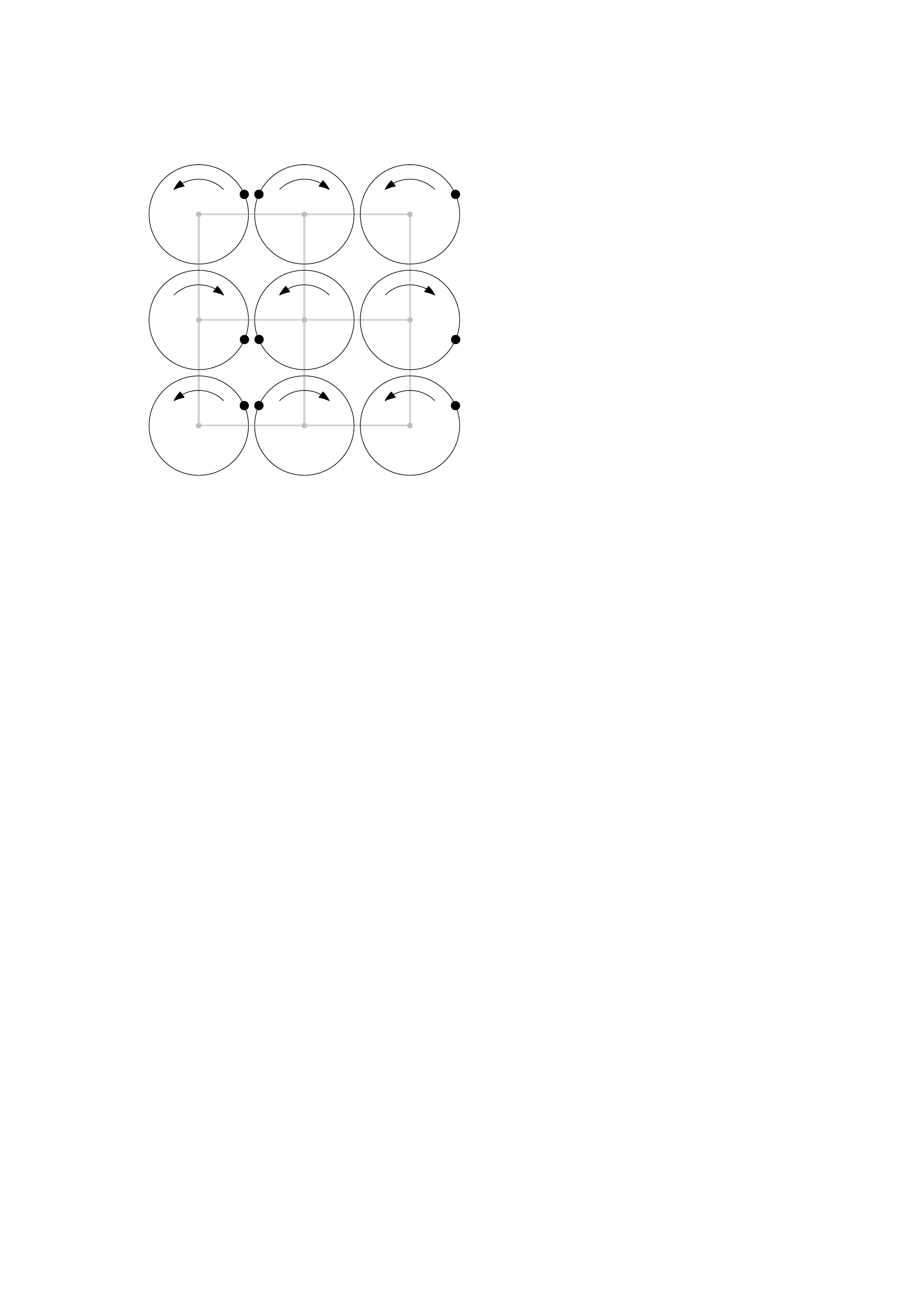}
	\caption{The walking graph of the $3\times 3$ grid SCS shown in Figure~\ref{fig:synchro_sample}(b). The imaginary box where the system is inscribed is indicated by the dotted line. The vertices of the walking graph are the open small circles. Note that the directed arcs between these nodes follow the travel directions assigned in Figure~\ref{fig:synchro_sample}(b).}
	\label{fig:waking_graph}
\end{figure}

\subsection{Idle time experiments}
 \begin{figure*}
	\centering
	\begin{subfigure}{.5\textwidth}
		\centering
		\includegraphics[width=\textwidth]{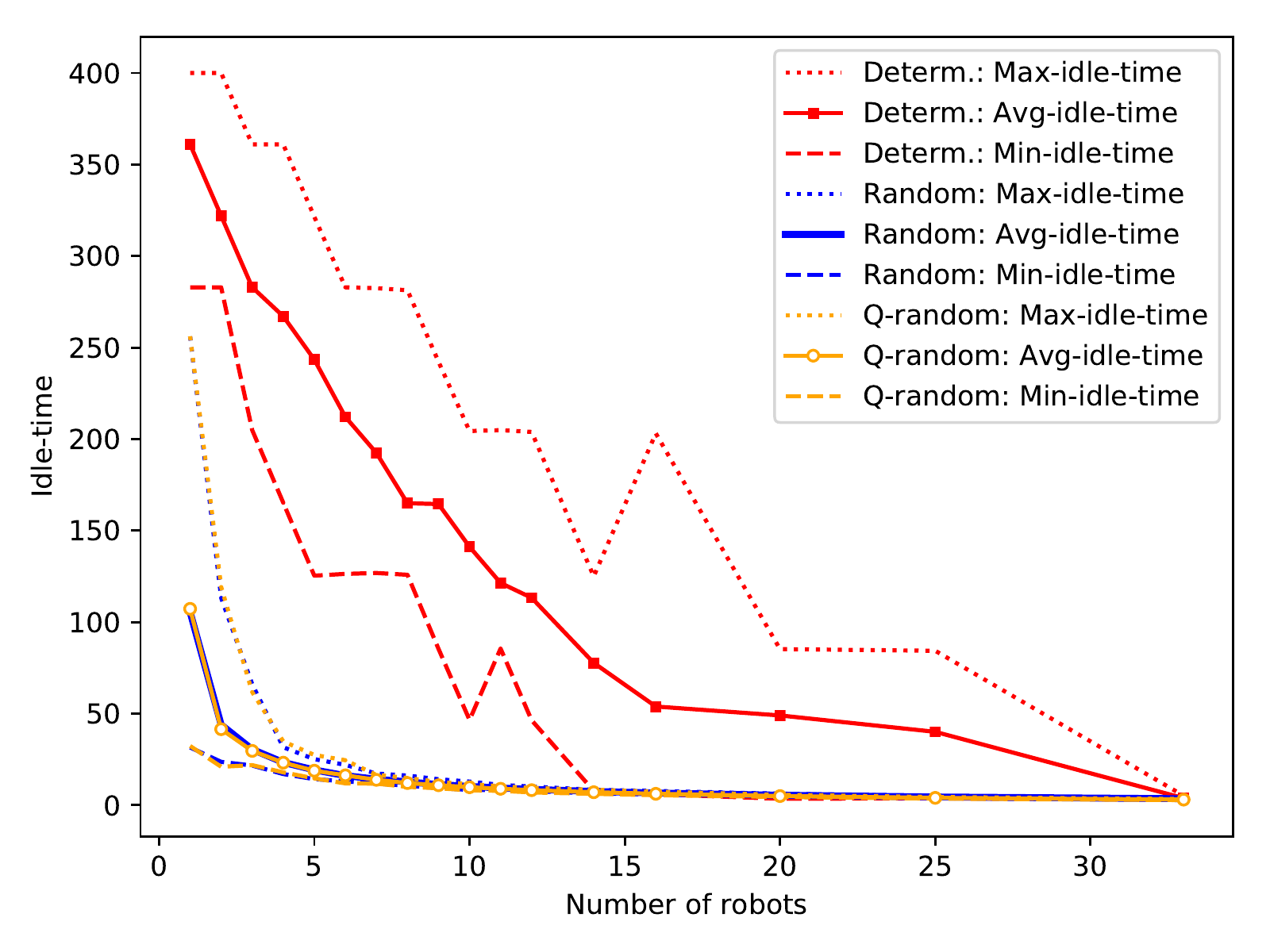}
		\caption{}
	\end{subfigure}%
	\begin{subfigure}{.5\textwidth}
		\centering
		\includegraphics[width=\textwidth]{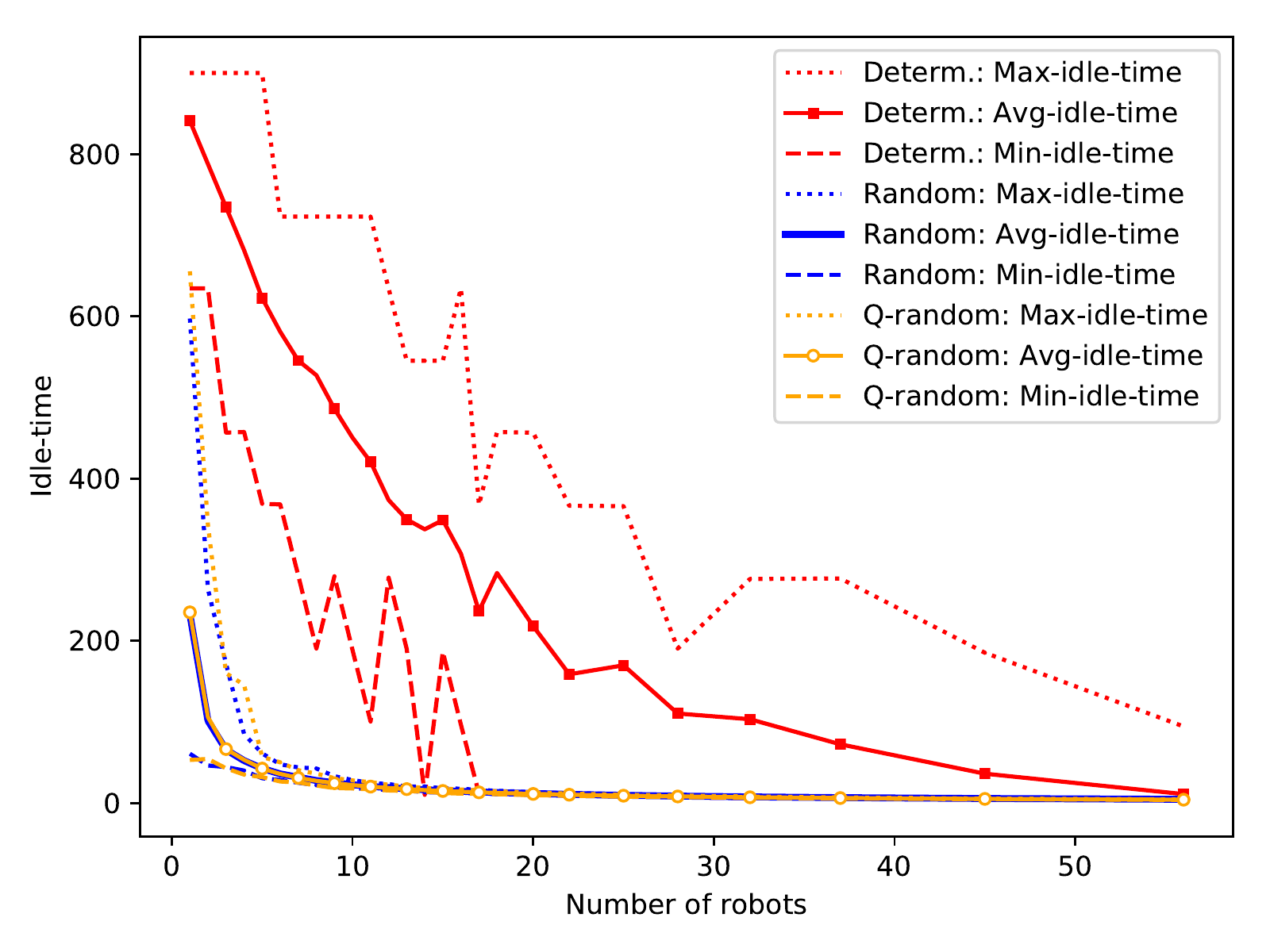}
		\caption{}
	\end{subfigure}\\
	\begin{subfigure}{.5\textwidth}
		\centering
		\includegraphics[width=\textwidth]{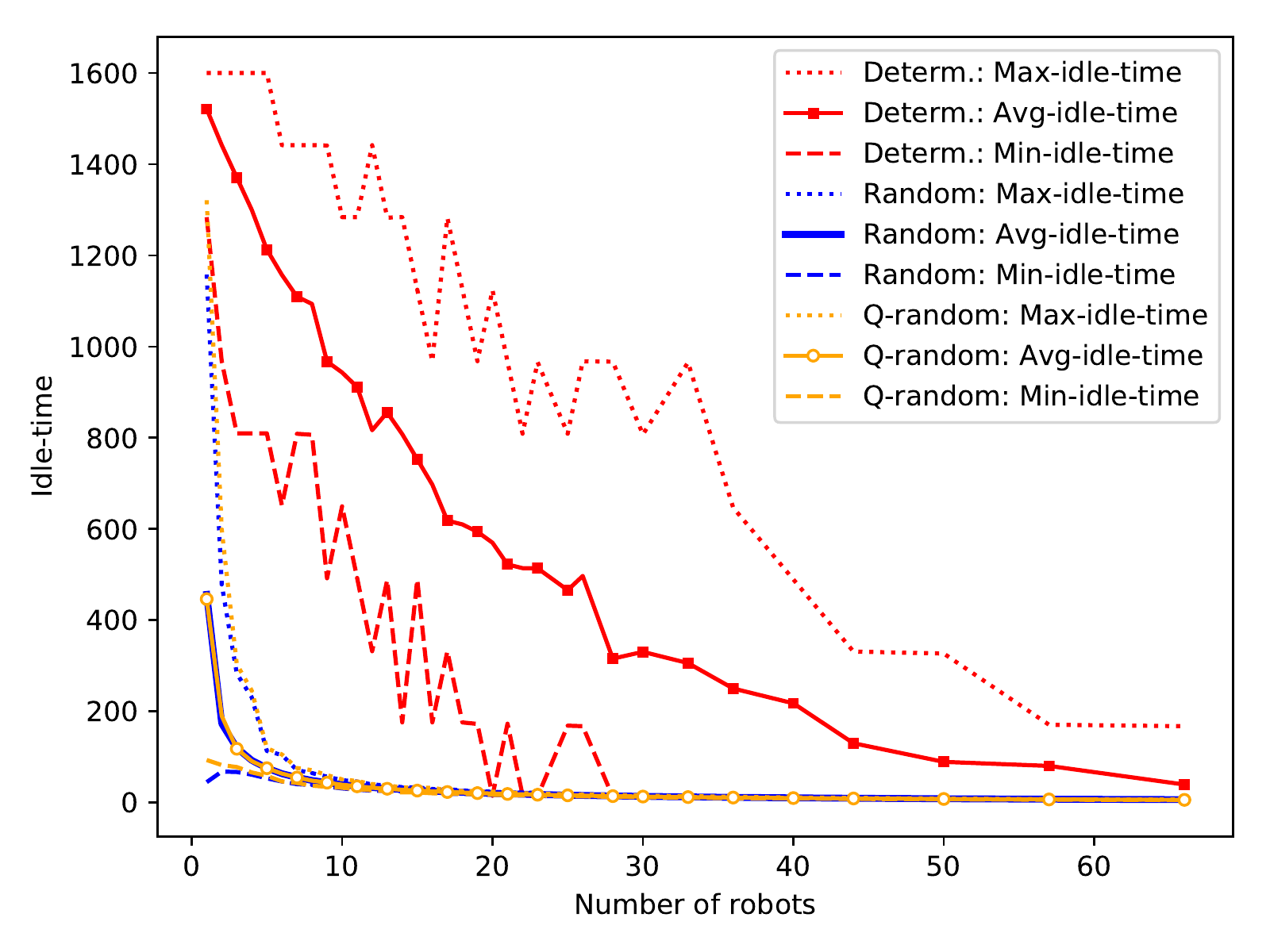}
		\caption{}
	\end{subfigure}%
	\begin{subfigure}{.5\textwidth}
		\centering
		\includegraphics[width=\textwidth]{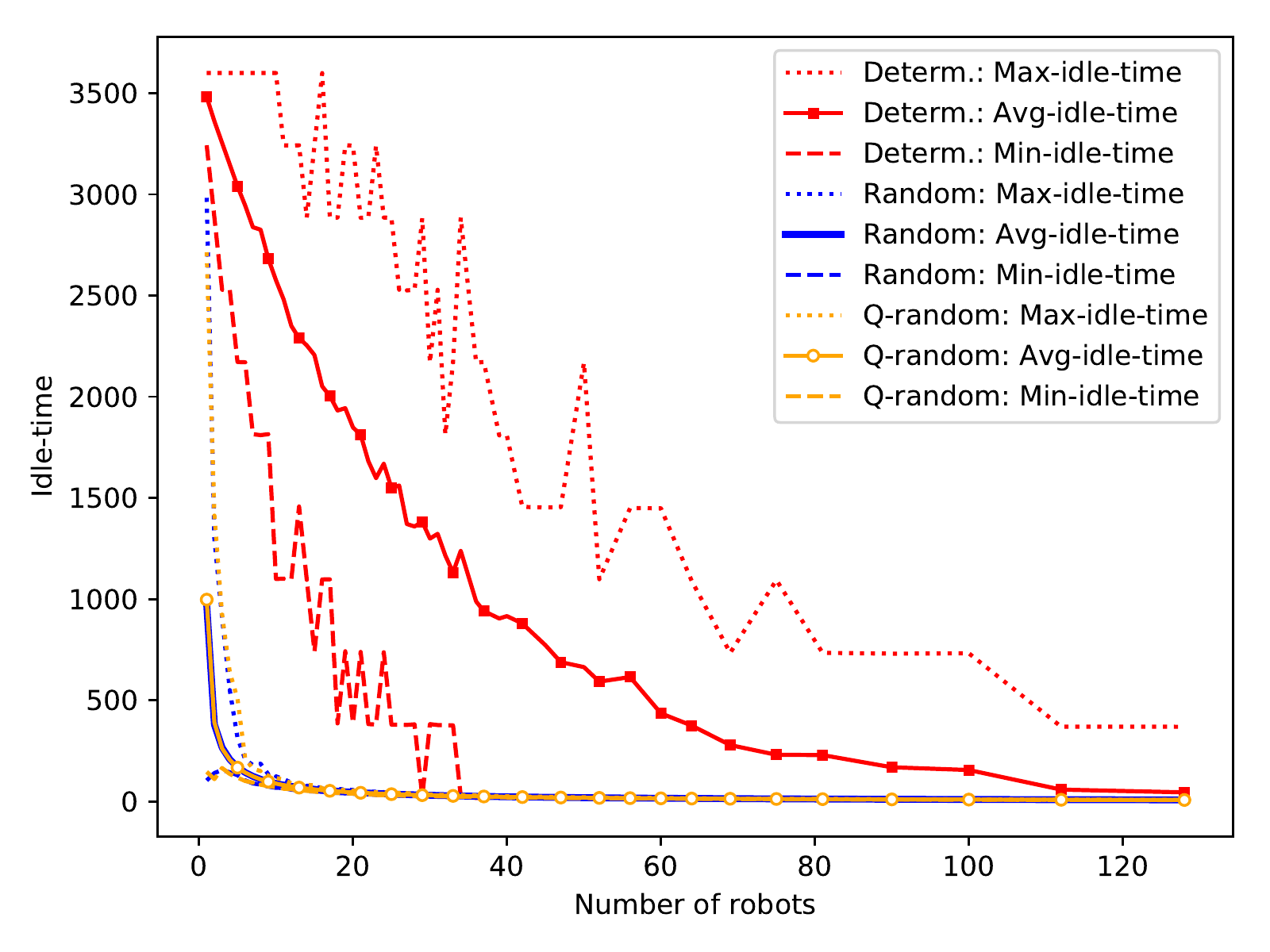}
		\caption{}
	\end{subfigure}\\
	\caption{Comparison of idle time obtained in the experiments using the random, quasi-random (Q-random) and deterministic strategies: (a) $10\times 10$ grid SCS, (b) $15\times 15$ grid SCS, (c) $20\times 20$ grid SCS and (d) $30\times 30$ grid SCS.}
	\label{fig:experimental_idle}
\end{figure*}
Note that every point on a circle of an $N\times N$ grid SCS is on some edge of $\mathcal{W}_N$. Hence, the idle time of an $N\times N$ grid SCS using $k$ robots is the idle time of the edges of $\mathcal{W}_N$ using $k$ robots. In order to measure this value, we do the following: in each experiment $E_{N,k}^{(i)}$, for every arc $e\in E(\mathcal{W}_N)$, we count the number of times that $e$ is traversed by a robot and when it is traversed. Having all the visits to an edge $e$, we can compute the average time $\texttt{idle}_e^{(i)}(k)$ between consecutive passes through edge $e$ in the experiment $E_{N,k}^{(i)}$.

Let $\texttt{idle}_e(k) = \frac{1}{10}\sum_{i=1}^{10}\texttt{idle}_e^{(i)}(k)$ denote the average time between consecutive passes through edge $e$ over all the 10 repetitions of the experiment using $k$ robots. Finally, in order to compare the idle time of these strategies, taking into account the worst case, average case and best case, we consider the functions
\begin{align*}
\displaystyle\texttt{max\_idle}_{\mathcal{W}_N}(k) &=\max_{e\in E(\mathcal{W}_N)}\{\texttt{idle}_e(k)\},\\
\displaystyle\texttt{avg\_idle}_{\mathcal{W}_N}(k) &= \frac{1}{|E(\mathcal{W}_N)|}\sum_{e\in E(\mathcal{W}_N)}\texttt{idle}_e(k) \text{,and}\\
\displaystyle\texttt{min\_idle}_{\mathcal{W}_N}(k) &= \min_{e\in E(\mathcal{W}_N)}\{\texttt{idle}_e(k)\}, \text{ respectively.}
\end{align*}

These three functions are computed using the three strategies and the results are shown in Figure~\ref{fig:experimental_idle}. The three functions $\texttt{max\_idle}$, $\texttt{avg\_idle}$ and $\texttt{min\_idle}$ are shown using dotted, solid and dashed lines, respectively. The functions for the
random strategy is shown in blue, for
the quasi-random strategy in orange,
and for the deterministic strategy in
red .

%Figure~\ref{fig:experimental_idle} shows a comparison between the expected theoretical values of idle time and the results obtained in the experiments. Every chart presents two functions:
%\begin{itemize}
%    \item $\texttt{idle}_{H_N}(k)$ (in orange color) which is the function of the obtained results and,
%    \item $f_n(k)=\frac{n}{k}$ (in blue color) which is the theoretical expected idle time for an SCS of $n$ trajectories with $k$ robots (Corollary~\ref{cor:idle time}). Notice that $n=N^2$.
%\end{itemize}

Figures \ref{fig:experimental_idle}(a),
(b), (c) and (d) show the results for
grid SCSs of sizes $10\times 10$,
$15\times 15$, $20\times 20$ and
$30\times 30$, respectively. Each
subfigure shows the evolution of
$\texttt{max\_idle}$,
$\texttt{avg\_idle}$ and
$\texttt{min\_idle}$ for the three
strategies. Note that the graphs of
Figure~\ref{fig:experimental_idle} show the behavior of the
functions until the maximum tested
value of $k $ (which is $N^2$) in order to emphasize the differences
between the three tested strategies,
and these differences are most notable
for low values of $k $ (after a certain
value of $k $, the functions are
similar for all the strategies). Note
that for low values of $k $, there is a
huge difference between the
deterministic strategy and the other
strategies. This difference is not
surprising because we know from
Bereg et al.~\cite{bereg2020robustness} that an
$N\times N$ grid SCS graph has $N$
\emph{rings}\footnote{ A ring in a SCS is the locus of points visited by a starving robot following the assigned movement direction in each trajectory and always shifting to the neighboring trajectory at the corresponding link positions.}, and to cover everything
using the deterministic strategy, at
least one robot per ring is required.
Therefore, with fewer than $N$ robots
it is not possible to cover every
point and a very high idle
time is obtained. In these cases, the idle time is
bounded by the duration of the
simulation ($4N^2$), although what
actually happens is that some points
are never visited. Note that because
the starting positions are assigned
randomly, even if $k \geq N$, some
rings could be empty of robots.
Therefore, all the points involved in
these rings are uncovered.

Note also that the $\texttt{max\_idle}$ is not $4N^2$ for every $k< N$. This is because the value $\texttt{idle}_e(k)$ is amortized among the 10 repetitions of the experiment (in some experiment $E_{N,k}^{(i)}$ a ring $r$ may be empty of robots and then $\texttt{idle}_e^{(i)}(k) = 4N^2$ for every edge $e$ involved in $r$, but in some other experiment $E_{N,k}^{(j)}$, the ring $r$ may have $c>0$ robots, thus $\texttt{idle}_e^{(j)}(k) = N/c$ for every edge $e$ involved in $r$). 

A very similar behavior can be observed
with the random and quasi-random
strategies. In fact, the function
$\texttt{avg\_idle}_{\mathcal{W}_N}$
is close to the function $f(k ) =
N^2/k $ and this is the best possible
behavior, taking into account that there is
a trajectory of length $2\pi N^2$
(the sum of the $N^2$ circles) to
cover and $k $ robots moving at $2\pi$
length units per time unit. Moreover,
when one of these random strategies is
used, the function
$\texttt{avg\_idle}_{\mathcal{W}_N}$
starts decreasing rapidly before $k =N=\sqrt{n}$ (recall $n=N^2$
in these grid SCSs), and then the rate of
decrease is notably reduced. This
behavior indicates that when one of
the random strategies is being used in
an SCS with $k \geq N$ robots, the
addition of more robots to the system
does not bring about a significant
improvement in idle time. Another
conclusion that can be drawn from the
experiments is that when either of the
two random strategies is used with few
robots ($k \approx N$), the idle time
is really good. However, to obtain similar
results using the deterministic
strategy the system needs, with high
probability, a much larger number of
robots.

%the experimental results match almost perfectly with the theoretical expected values. Also note that approximately with $k=n/4$ robots we get very good results, and, the increase in the number of robots from this value does not report a significant decrease in the idle time. However the most important thing is the following: we have $2n\pi$ length of trajectory to monitor and $k$ available robots to do it, no matter what strategy it is used the best possible idle time  is $n/k$ units of time (recall that a robot makes $2\pi$ units length in one time unit). So, we have proved that this simple strategy using random drones warranties the best possible idle time and it is very robust because every robot is independent, its behavior does not depend on the number of robots or their locations.

%We have also studied the probability distribution of the idle time of the edges. Let $e$ be an edge of $H_N$. Let $X_e$ be a random variable whose value if the time between two consecutive passes of some robot through $e$. Let $I_e(x)=P(X_e\leq x)$ be the probability that the elapsed time between two consecutive passes through $e$ is less than or equal to $x$.

%We conjecture that....

\subsection{Isolation time experiments}

\begin{figure*}
	\centering
	\begin{subfigure}{.5\textwidth}
		\centering
		\includegraphics[width=\columnwidth]{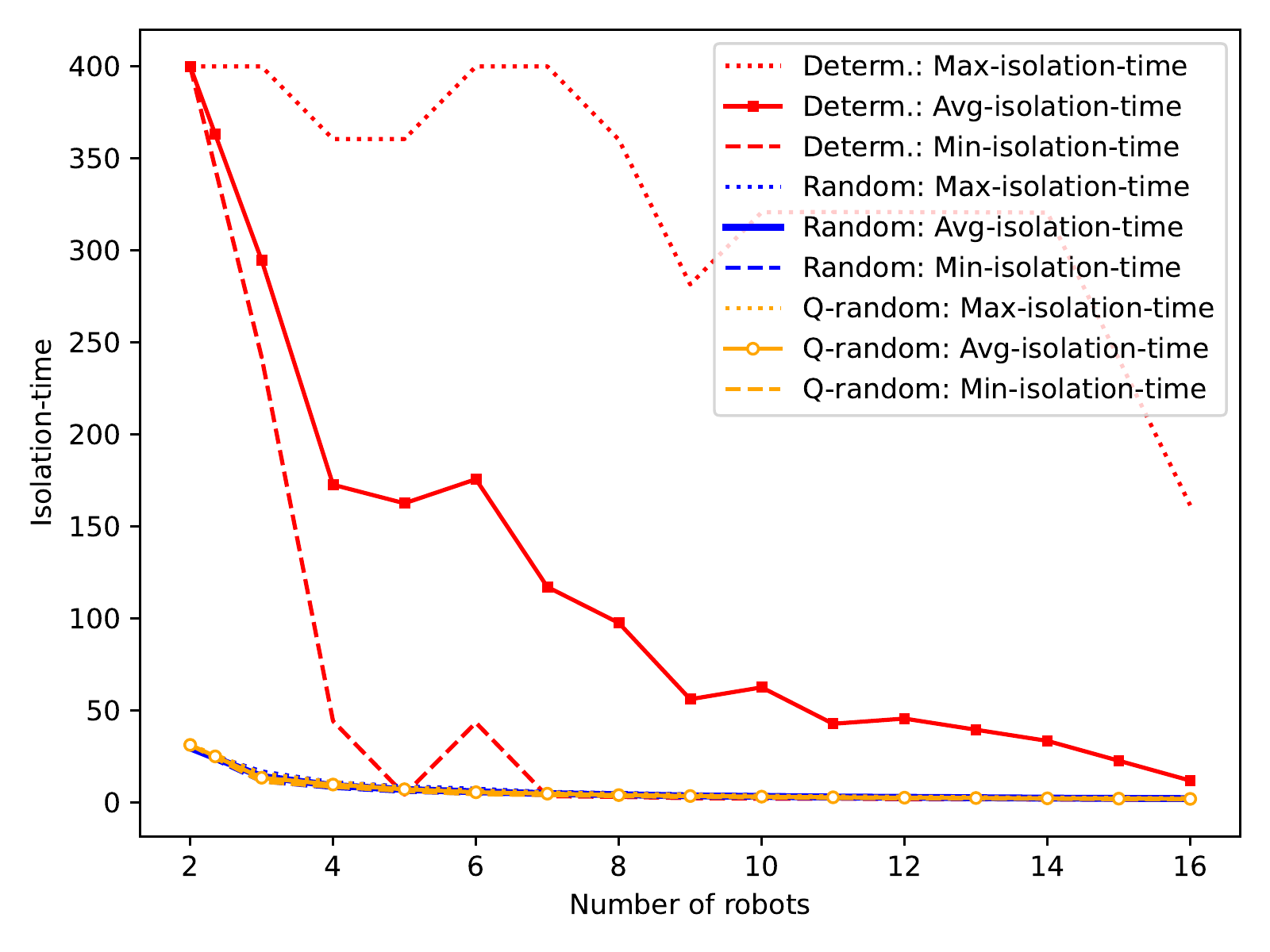}
		\caption{}
	\end{subfigure}%
	\begin{subfigure}{.5\textwidth}
		\centering
		\includegraphics[width=\columnwidth]{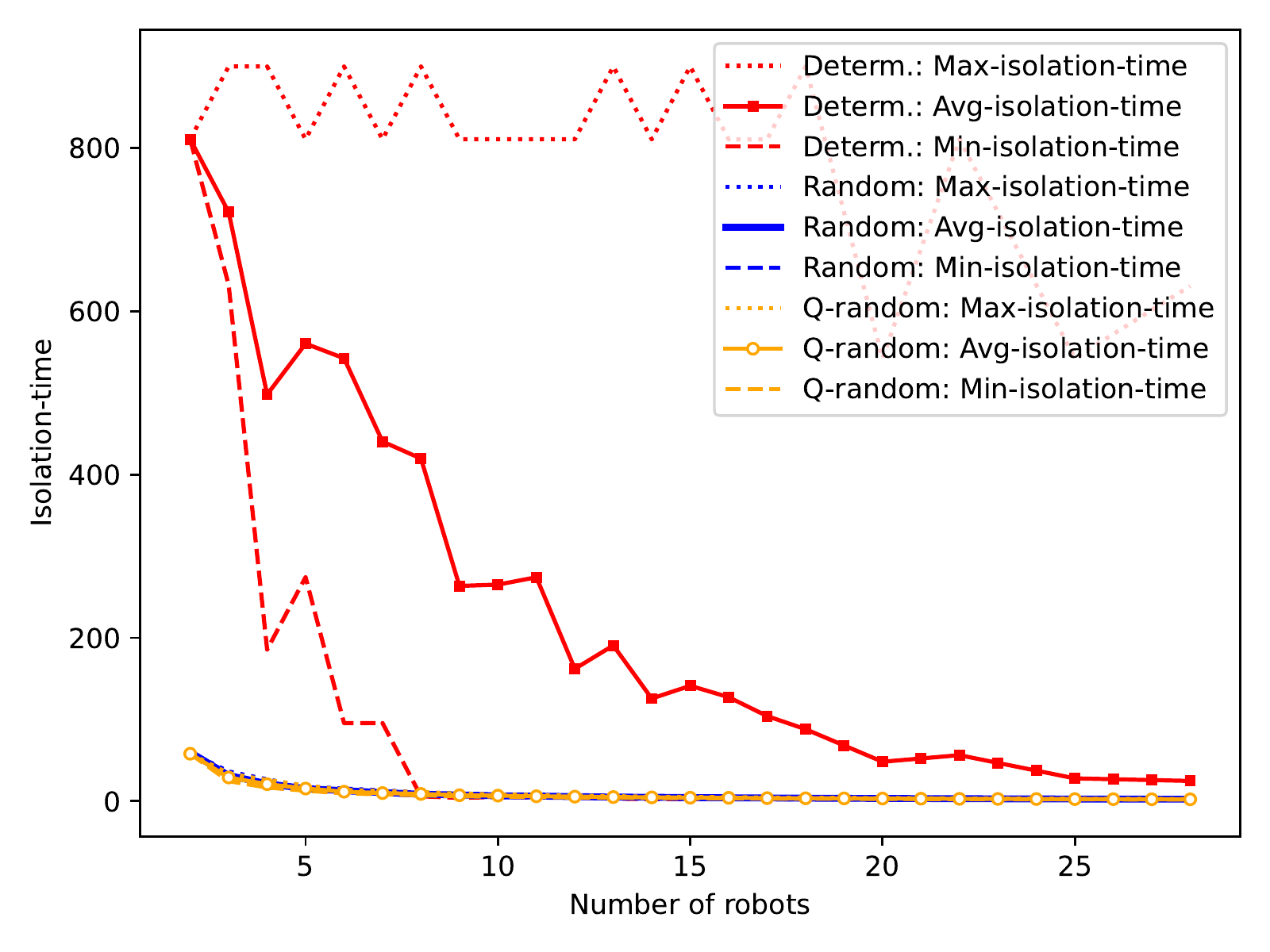}
		\caption{}
	\end{subfigure}\\
	\begin{subfigure}{.5\textwidth}
		\centering
		\includegraphics[width=\columnwidth]{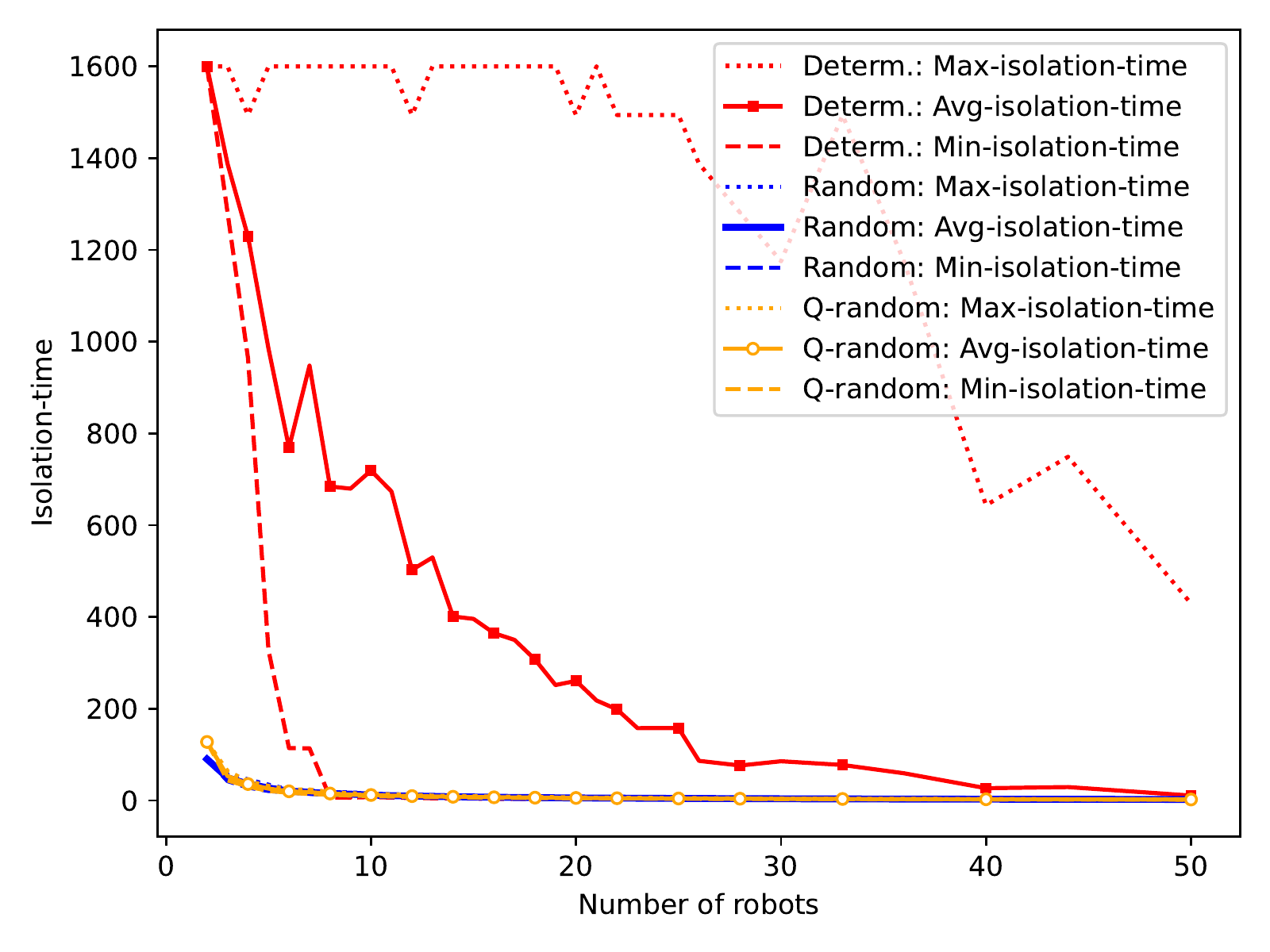}
		\caption{}
	\end{subfigure}%
	\begin{subfigure}{.5\textwidth}
		\centering
		\includegraphics[width=\columnwidth]{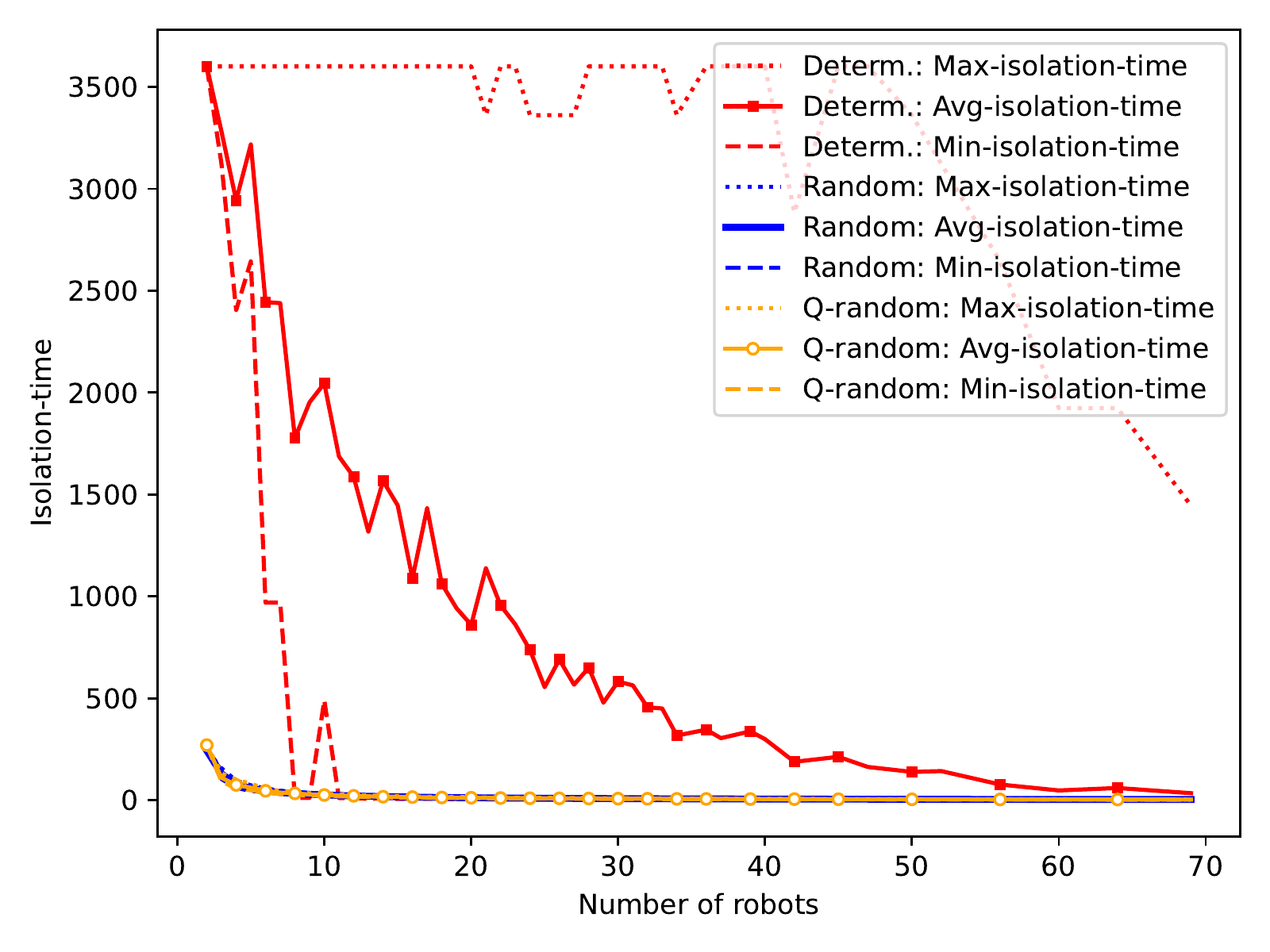}
		\caption{}
	\end{subfigure}\\
	\caption{Comparison of the isolation time obtained in the experiments using the random, quasi-random (Q-random) and deterministic strategies: (a) $10\times 10$ grid SCS, (b) $15\times 15$ grid SCS, (c) $20\times 20$ grid SCS and (d) $30\times 30$ grid SCS.}
	\label{fig:experimental_isolation}
\end{figure*}

Note that every meeting between two robots in an $N\times N$ grid SCS takes place at one of the vertices of the walking graph $\mathcal{W}_N$. Hence, in an $N\times N$ grid SCS using a set $U$ of $k$ robots, the isolation time of the system is the expected time between two consecutive meetings at a vertex of $\mathcal{W}_N$ using $k$ robots. In order to measure this value, we do the following: in each experiment $E_{N,k}^{(i)}$, for every robot $u\in U$, we count the number of times that $u$ meets some other robot and when these meetings occur. Having all the meetings of $u$ we can compute the average time $\texttt{isolation}_u^{(i)}(k)$ between consecutive meetings of $u$ in the experiment $E_{N,k}^{(i)}$. 

In order to compare the isolation time of the strategies, tacking into account the worst case, average case and best case, we define the functions:
\begin{align*}
\displaystyle\texttt{max\_isolation}^{(i)}_{\mathcal{W}_N}(k) &=\max_{u\in U}\{\texttt{isolation}^{(i)}_u(k)\},\\
\displaystyle\texttt{avg\_isolation}^{(i)}_{\mathcal{W}_N}(k) &= \frac{1}{k}\sum_{u\in U}\texttt{isolation}^{(i)}_u(k),\\
\displaystyle\texttt{min\_isolation}^{(i)}_{\mathcal{W}_N}(k) &= \min_{u\in U}\{\texttt{isolation}^{(i)}_u(k)\}, 
\end{align*}
respectively. 

Then we compute the averaged values over all 10 repetitions of the experiments:
\begin{align*}
\displaystyle\texttt{max\_isolation}_{\mathcal{W}_N}(k) &=
\sum_{i=1}^{10}\frac{\texttt{max\_isolation}^{(i)}_{\mathcal{W}_N}(k)}{10},\\
\displaystyle\texttt{avg\_isolation}_{\mathcal{W}_N}(k) &=\sum_{i=1}^{10}\frac{\texttt{avg\_isolation}^{(i)}_{\mathcal{W}_N}(k)}{10},\\
\displaystyle\texttt{min\_isolation}_{\mathcal{W}_N}(k) &= \sum_{i=1}^{10}\frac{\texttt{min\_isolation}^{(i)}_{\mathcal{W}_N}(k)}{10}.
\end{align*}

\begin{figure*}
	\centering
	\begin{subfigure}{.5\textwidth}
		\centering
		\includegraphics[width=\columnwidth]{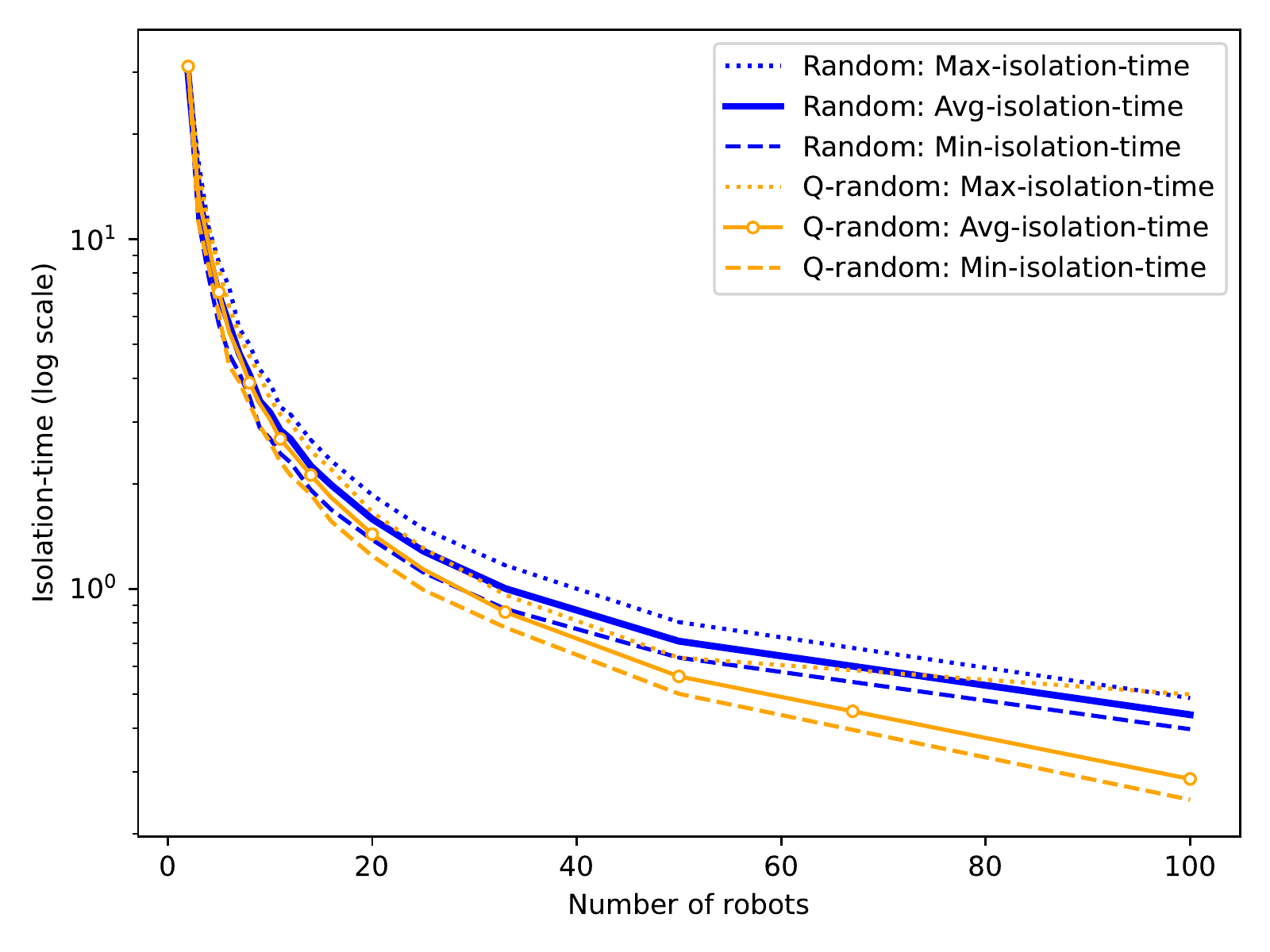}
		\caption{}
	\end{subfigure}%
	\begin{subfigure}{.5\textwidth}
		\centering
		\includegraphics[width=\columnwidth]{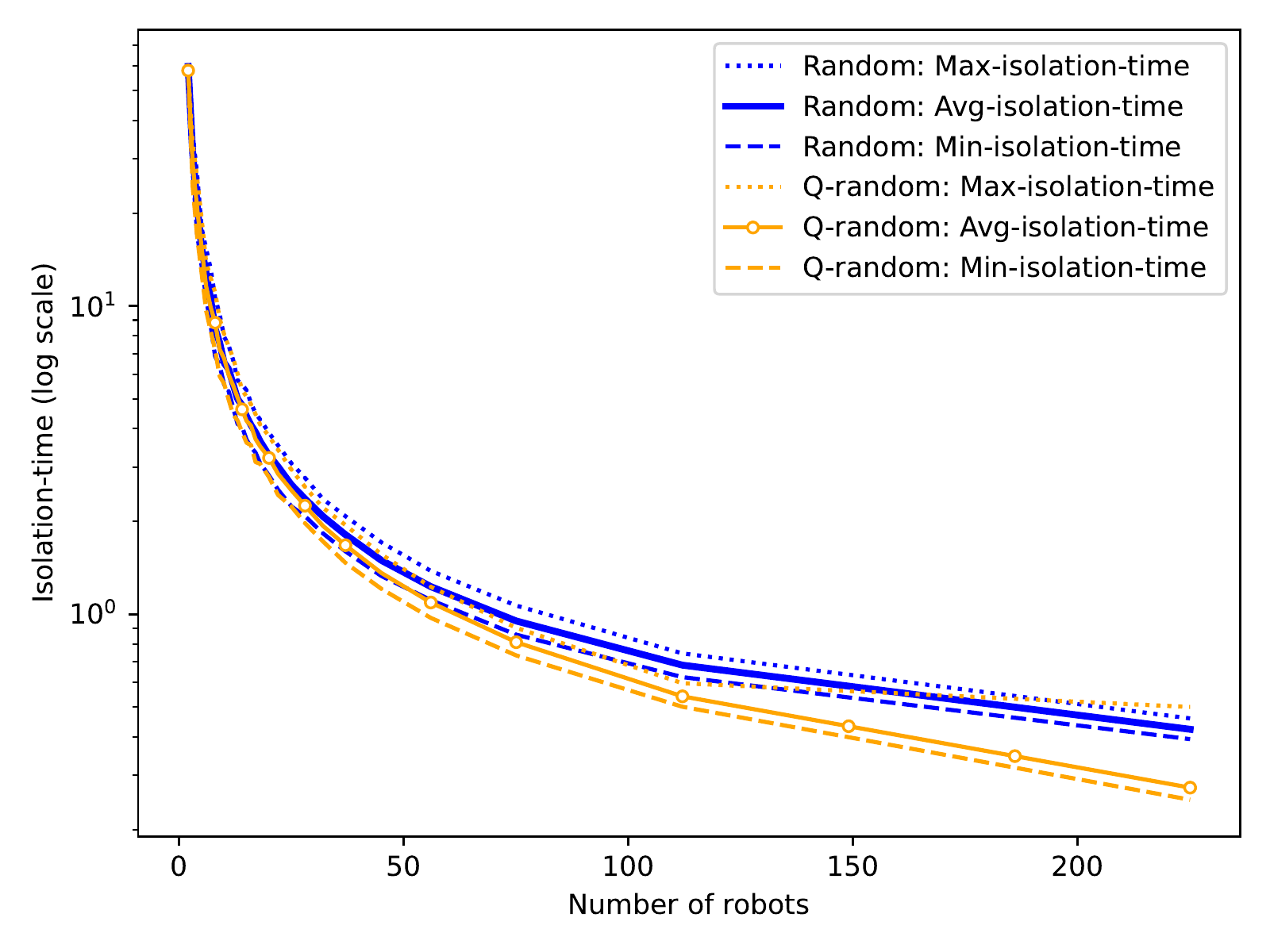}
		\caption{}
	\end{subfigure}\\
	\begin{subfigure}{.5\textwidth}
		\centering
		\includegraphics[width=\columnwidth]{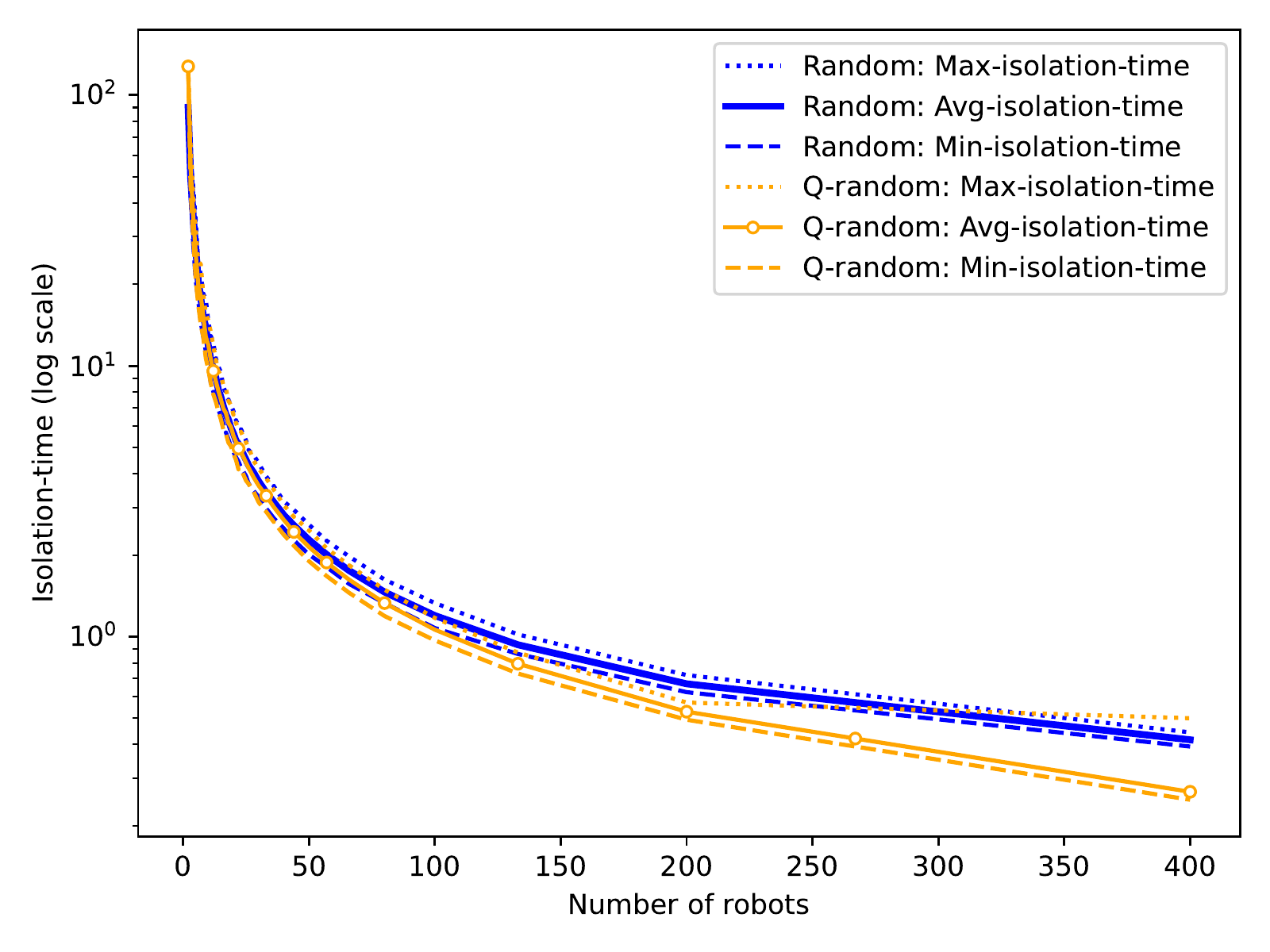}
		\caption{}
	\end{subfigure}%
	\begin{subfigure}{.5\textwidth}
		\centering
		\includegraphics[width=\columnwidth]{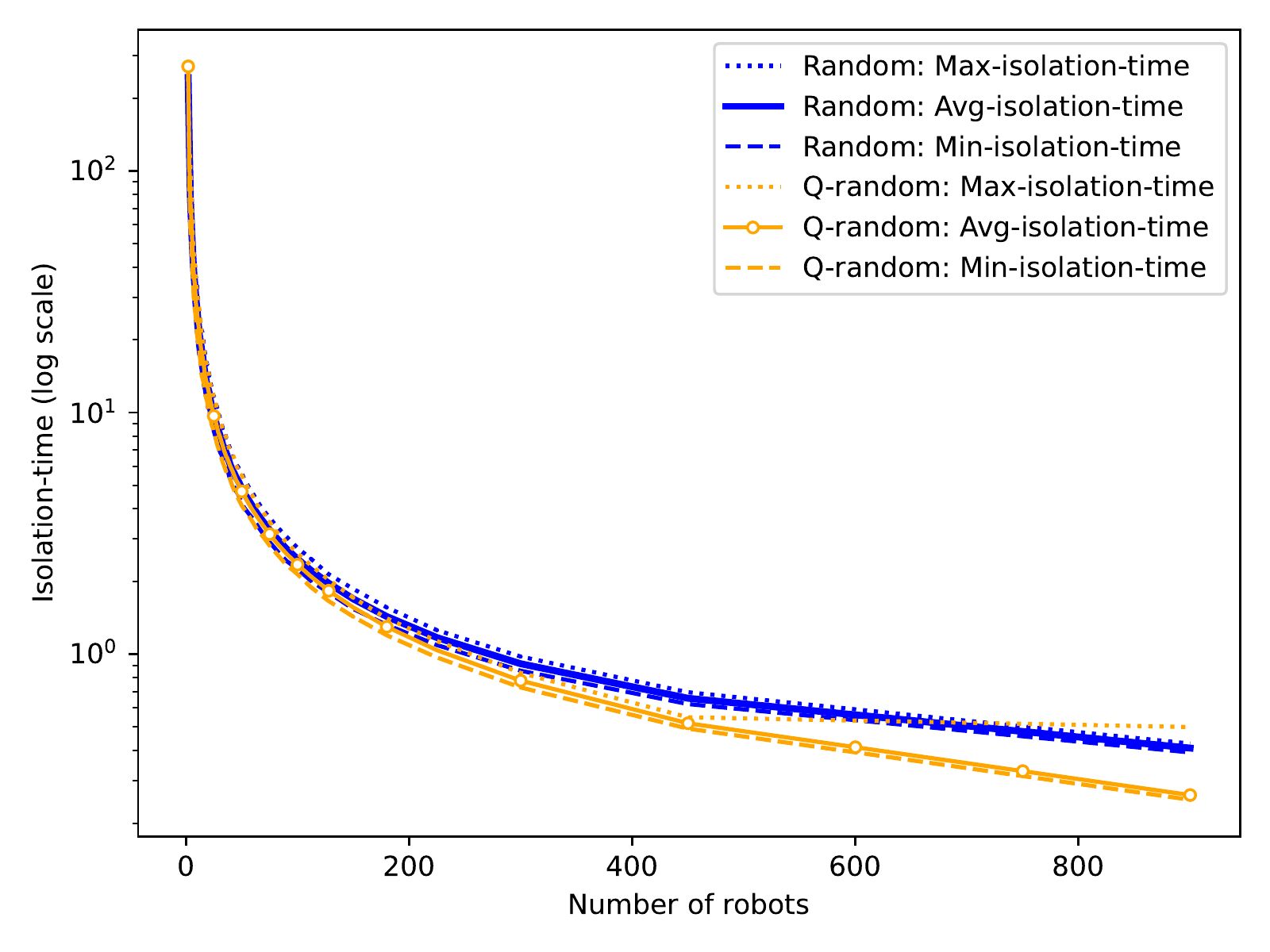}
		\caption{}
	\end{subfigure}\\
	\caption{Comparison of the isolation time obtained in the experiments using the random and quasi-random (Q-random) strategies: (a) $10\times 10$ grid SCS, (b) $15\times 15$ grid SCS, (c) $20\times 20$ grid SCS and (d) $30\times 30$ grid SCS.}
	\label{fig:experimental_isolation_rand}
\end{figure*}

We computed the values of these three global functions using the three strategies and the results are shown in Figure~\ref{fig:experimental_isolation}. The three functions
$\texttt{max\_isolation}$,
$\texttt{avg\_isolation}$ and
$\texttt{min\_isolation}$ are shown
using dotted, solid and dashed 
lines, respectively. The functions for the
random strategy is shown in blue, for
the quasi-random strategy in orange,
and for the deterministic strategy in red.

Figures
\ref{fig:experimental_isolation}(a),
(b), (c) and (d) show the results on
grid SCSs of sizes $10\times 10$,
$15\times 15$, $20\times 20$ and
$30\times 30$, respectively. In each
subfigure, the evolution of
$\texttt{max\_isolation}$,
$\texttt{avg\_isolation}$ and
$\texttt{min\_isolation}$ for the
three strategies is shown for $k \geq 2$ (note
that a single robot in the system is
always isolated).
As in the previous subsection,
Figure~\ref{fig:experimental_isolation} does not show the behavior of the
functions until the maximum tested
value of $k $ because the differences
between the strategies are much more
evident for low values of $k $ (after a
certain value of $k $, the functions
are similar for all three strategies).
Note that for low values of $k $, the
deterministic strategy is much worse
than the random strategies. Note also
that for two robots, the values of
$\texttt{max\_isolation}$,
$\texttt{avg\_isolation}$ and
$\texttt{min\_isolation}$ are the same
(that is, the three functions start at
the same point for each strategy).
This behavior is because when there
are only two robots, if one robot has
a meeting, it is with the other one,
so they both have exactly the same
statistics. Therefore, their average,
maximum and minimum isolation times
have the same value.

From Bereg et al.~\cite{bereg2020robustness}, it is known that in an $N\times N$ grid
SCS with robots using the
deterministic strategy, two robots
meet each other if and only if they
start in circles that are in the same
row or column. Thus, if there are
$k \ll N^2$ robots, it is very probable
that a robot would start in a circle
that is in a different row and column
from those of any other robot in the
system. Because of this, the values of
$\texttt{max\_isolation}$ are very
high for small values of $k $. Also,
note that using a relatively small
number of robots, but more than two,
the probability of at least two of
them start in the same column or
the same row is high. Because of this,
the $\texttt{min\_isolation}$ function
decreases quickly. Therefore, using
the deterministic strategy, there is a
huge difference between the functions
$\texttt{max\_isolation}$,
$\texttt{avg\_isolation}$ and
$\texttt{min\_isolation}$. Finally,
note that $\texttt{avg\_isolation}$
is closer to $\texttt{min\_isolation}$
than to $\texttt{max\_isolation}$.
This indicates that when the group of
robots is small, there is a high
probability that one of them may be
isolated, but it is very probable that
the number of non-isolated robots will
be greater than the number of isolated
robots.

In
Figure~\ref{fig:experimental_isolation}, it is difficult to observe the
behavior of the random strategies due
to the high values of isolation time
when the deterministic strategy is
used. Note that, very similar
behavior is obtained using either of the
randomized strategies, being both much better than the deterministic
strategy. %The values of
%solation time using the deterministic
%strategy is high compared to the
%values obtained using the random
%strategies and it is difficult to
%observe the behavior of the random
%strategies.
Figure~\ref{fig:experimental_isolation_rand} compares the values of isolation
time using the random strategies on a
logarithmic scale for the isolation
time axis. Note that the greater the
number of robots, the better the
quasi-random strategy is (in average) with respect
to the random strategy. Another conclusion that can be extracted from
Figure~\ref{fig:experimental_isolation_rand} is that using any of the random
strategies, small isolation time
values are obtained with very few robots
compared to the number of trajectories
(cells); i.e. $k \approx N=\sqrt{n}$. However, to get similar results
using the deterministic strategy, a
much larger number of robots are required, with \emph{high probability} (see
Figure~\ref{fig:experimental_isolation}). Moreover, the value $k =N$ marks a
threshold in the function
$\texttt{avg\_isolation}$ when one of
the two random strategies is being
used, because the addition of more
robots to an SCS with $k \geq N$ robots
does not represent a significant
reduction of the isolation time.

\subsection{Broadcast time experiments}\label{sec:broadcast-exp}
\begin{figure*}
	\centering
	\begin{subfigure}{.5\textwidth}
		\centering
		\includegraphics[width=\columnwidth]{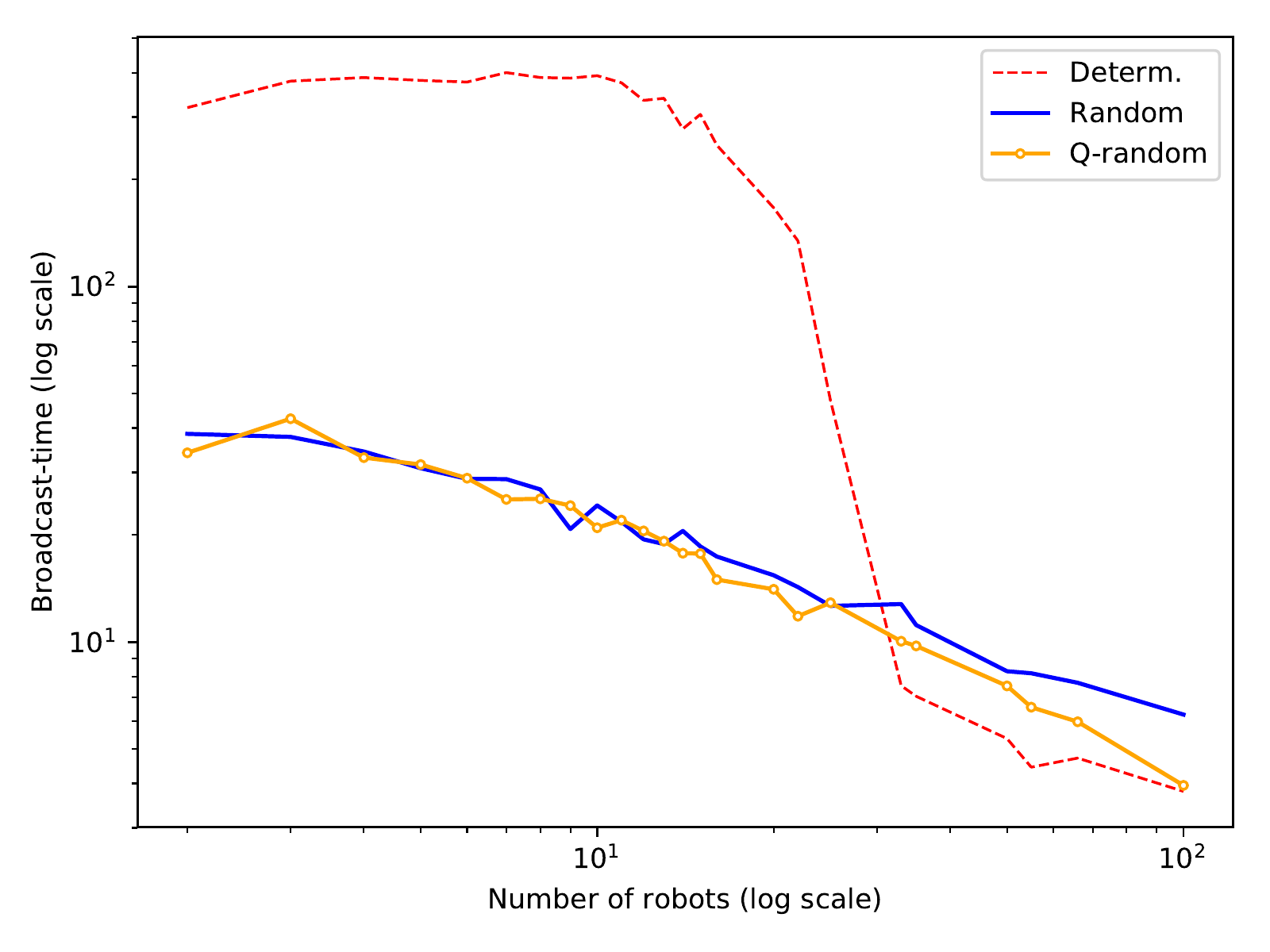}
		\caption{}
	\end{subfigure}%
	\begin{subfigure}{.5\textwidth}
		\centering
		\includegraphics[width=\columnwidth]{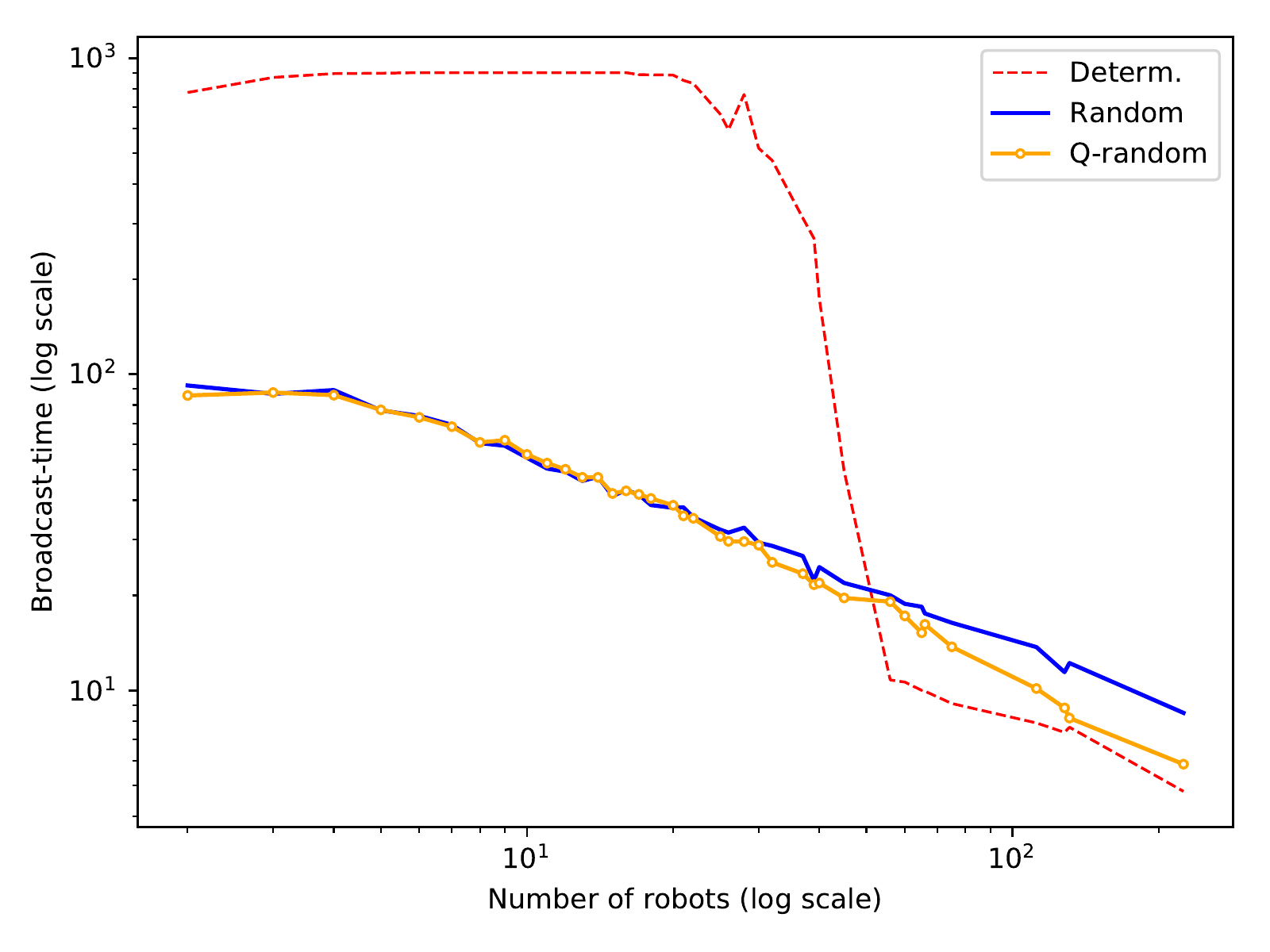}
		\caption{}
	\end{subfigure}\\
	\begin{subfigure}{.5\textwidth}
		\centering
		\includegraphics[width=\columnwidth]{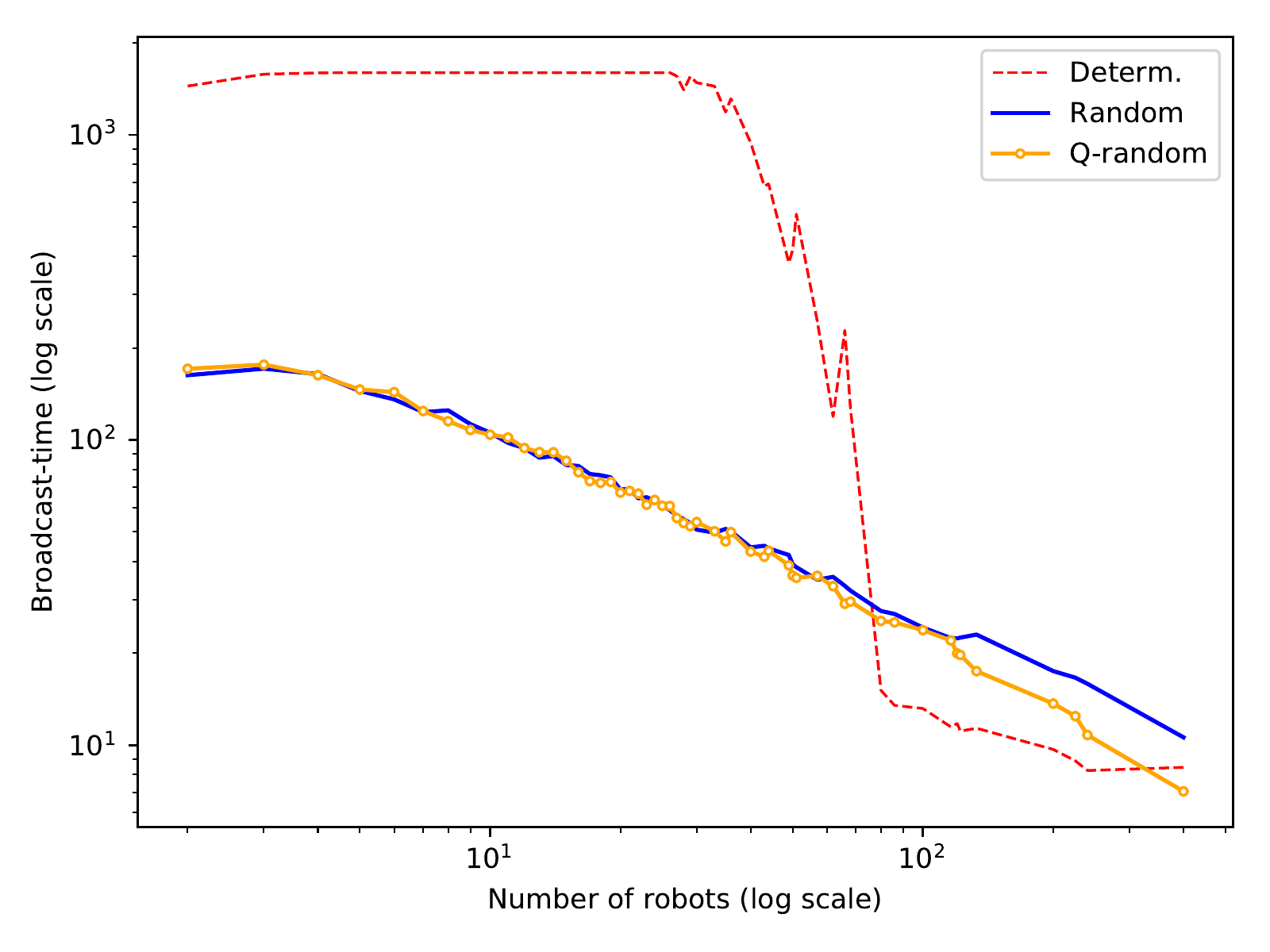}
		\caption{}
	\end{subfigure}%
	\begin{subfigure}{.5\textwidth}
		\centering
		\includegraphics[width=\columnwidth]{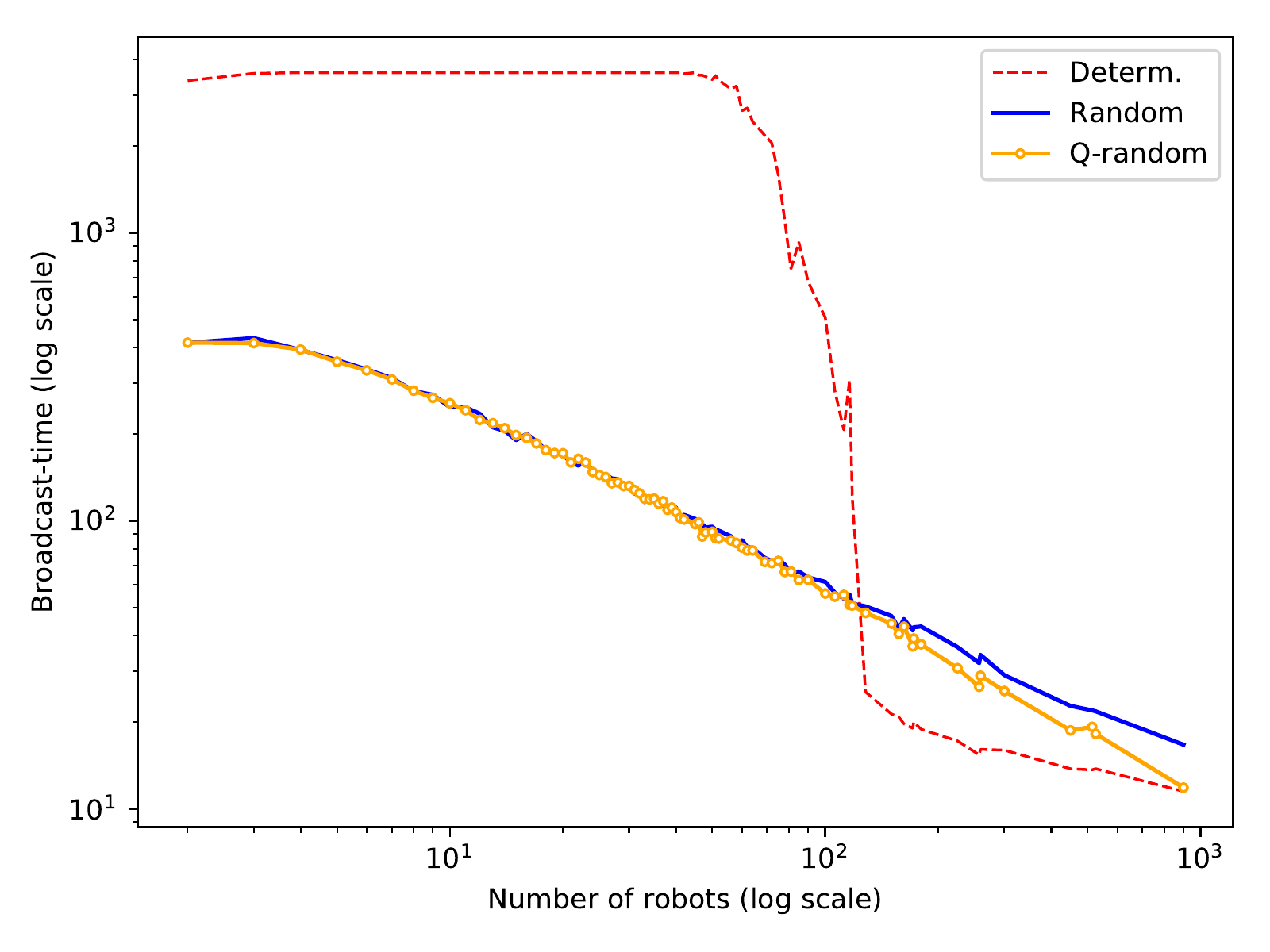}
		\caption{}
	\end{subfigure}\\
	\caption{Comparison of the broadcast time obtained in the experiments using the random, quasi-random (Q-random) and deterministic strategies: (a) $10\times 10$ grid SCS, (b) $15\times 15$ grid SCS, (c) $20\times 20$ grid SCS and (d) $30\times 30$ grid SCS.}
	\label{fig:experimental_broadcast}
\end{figure*}

In this subsection, the broadcast time measure it is studied under the three different proposed strategies for grid-type SCSs. To evaluate this measure, a different experiment was designed. Suppose that we have an $N\times N$ grid SCS $\mathcal{F}$, a value $k\geq 2\, (k \in \mathbb{N})$ and a strategy $\sigma$. In order to estimate the broadcast time in $\mathcal{F}$ using $k$ robots applying the strategy $\sigma$, we do the following: we randomly set the $k$ starting positions of each robot with uniform probability in the walking graph $\mathcal{W}_N$ of $\mathcal{F}$. Then we randomly choose a robot to emit a message. After that, the simulation starts, using the strategy $\sigma$ until all robots in the team have received the message or $4N^2$ time units have elapsed. Let
$E_{N,k }^{(i)}$ denote the $i$-th
repetition of this experiment and let
$\texttt{broadcast}_{\mathcal{W}_N}^{(
i)}(k )$ denote its simulation time.
Note that
$\texttt{broadcast}_{\mathcal{W}_N}^{(
i)}(k )=\min\{4N^2, t\}$ where $t$ is
the time required for the message to
be known to all the robots in the
team. We repeat this experiment $N^2$
times and estimate the broadcast time
of $\mathcal{F}$ using $k $ robots by
the following function:
\[\displaystyle\texttt{broadcast}_{\mathcal{W}_N}(k )=\frac{1}{N^2}\sum_{i=1}^{
N^2}\texttt{broadcast}_{\mathcal{W}_N}^
{(i)}(k ).\]

We computed the values of the function $\texttt{broadcast}_{\mathcal{W}_N}$ for $2\leq k< N^2$ and $N=10$, 15, 20 and 30. The results of our experiments are shown in
Figure~\ref{fig:experimental_broadcast}, using a logarithmic scale for both the broadcast time and the number of robots used. %It is unnecessary to show the functions for all the tested values of $k$ because, for a sufficiently large number of robots, the values of  $\texttt{broadcast}_{\mathcal{W}_N}$ are very similar for the three strategies.
According to this picture, for small values of $k$, the random strategies yield much better results than the deterministic one. %The difference is so big that makes the differences between the random strategies and the quasi-random strategy are not appreciable. 
%We have also added Figure~\ref{fig:random_experimental_broadcast} to make visible the differences between random and quasi-random strategies. 
As mentioned above, two robots using the deterministic strategy meet each other only if their initial positions are in the same row or the same column~\cite{bereg2020robustness}. Now consider a new auxiliary graph $H$ as follows: add a vertex for every robot and there is an edge between two robots if they are on circles that are in the same row or the same column. Obviously, it is possible to make a broadcast if and only if  $H$ is connected. When the number of robots is small, it is very probable than $H$ is not connected. For this reason, broadcast times are very high when the deterministic strategy is used with a small number of robots. 

When the number of robots is sufficiently high ($k>N^{4/3}$%$k>10^{2p/3}$, where $p=\log N^2$
), the three strategies are more similar. However, it can be observed that the deterministic strategy is better than the others, although the quasi-random strategy gives similar results; see Figure~\ref{fig:experimental_broadcast}. % in second place the quasi-random strategy and the worst result is obtained with the random strategy. 
The reason for this behavior is that when $k$ is large, the auxiliary graph $H$ is connected and, a broadcast can be completed in approximately $N$ time units. Also, note that, when the quasi-random strategy is used, the more robots the system has, the more similar this strategy is to the deterministic strategy.

% \begin{figure*}
%     \centering
%     \begin{subfigure}{.5\textwidth}
%     \centering
%     \includegraphics[width=\columnwidth]{rand-comp-broadcast time-10x10.pdf}
%     \caption{}
%     \end{subfigure}%
%     \begin{subfigure}{.5\textwidth}
%     \centering
%     \includegraphics[width=\columnwidth]{rand-comp-broadcast time-15x15.pdf}
%     \caption{}
%     \end{subfigure}\\
%     \begin{subfigure}{.5\textwidth}
%     \centering
%     \includegraphics[width=\columnwidth]{rand-comp-broadcast time-20x20.pdf}
%     \caption{}
%     \end{subfigure}%
%     \begin{subfigure}{.5\textwidth}
%     \centering
%     \includegraphics[width=\columnwidth]{rand-comp-broadcast time-30x30.pdf}
%     \caption{}
%     \end{subfigure}\\
%     \caption{Comparison of the broadcast time obtained in the experiments using the strategies random and quasi-random (Q-random): (a) $10\times 10$ grid SCS, (b) $15\times 15$ grid SCS, (c) $20\times 20$ grid SCS and (d) $30\times 30$ grid SCS.}
%     \label{fig:random_experimental_broadcast}
% \end{figure*}

Finally, note that whichever of the proposed strategies is used, the $\texttt{broadcast}_{\mathcal{W}_N}$ function increases slightly at the beginning and then decreases.
%toward the end ($k=N^2$). 
From this, it can be deduced that at the very
beginning, adding robots to the system
means an increase in the broadcast
time. This means that at first, it is
more difficult to complete a broadcast
when the team is bigger . However,
after a certain threshold, the more
robots the system has, the faster a broadcast
is completed. This behavior can be
expected because after a certain
value of $k$, an increase in the
number of robots helps to spread the
message. 
%More specifically, at the
%beginning only one robot knows the
%message. If the number of robots is
%sufficiently large, after a few time
%units, it will meet someone (as shown
%in the previous subsection), and after
%that, there are two robots that know
%the message. Then, after another few
%time units, they will meet more robots
%and ideally four robots would know the
%messages and so on. 
%Therefore, the
%more robots carry the message, the
%faster it is spread.

\subsection{Mixing time}
In this subsection, the mixing time of the distribution corresponding to a drone performing a random walk in an R-SCS is experimentally studied. Our study is based on the discretization of the R-SCS, as introduced in Section~\ref{sec:discretization}, and the transition matrix presented in Section~\ref{sec:transition-theoretical}. The experiment in this section consists of computing the transition matrix $M$ corresponding to an R-SCS $\mathcal{F}$ and then  $M^2, M^3,\dots$ until a matrix close to the uniform distribution is obtained. More precisely, we ask for a matrix $M^t$ such that for any initial distribution $\pi$, 
\begin{equation}\label{eq:error-margin}
\left\Arrowvert \pi
M^t-\pi^*\right\Arrowvert <\varepsilon
\end{equation}
where $ \left\Arrowvert
\cdot\right\Arrowvert$ denotes the
Euclidean norm of a vector, $\pi^*$ is
the uniform distribution (which is the
stationary distribution of the random
walk following the random strategy) and
$\varepsilon$ is a small fixed value.
Then $t$ is an estimate of the mixing
time.

Notice that $\displaystyle\lim_{t\rightarrow\infty}\pi M^t=\pi^*$ for any initial distribution $\pi$, so $\displaystyle\lim_{t\rightarrow\infty}M^t=M^*$, where $M^*$ is the matrix where every entry is $1/n$ if $n$ trajectories are considered. Therefore, $\pi M^* = \pi^*$ and then
\[\left\Arrowvert \pi
M^t-\pi^*\right\Arrowvert
=\left\Arrowvert \pi M^ t-\pi
M^*\right\Arrowvert
= \left\Arrowvert
\pi(M^t-M^*) \right\Arrowvert
\leq\left\Arrowvert
\pi\right\Arrowvert \left\Arrowvert
M^t-M^*\right\Arrowvert,\]
where $ \left\Arrowvert
M^t-M^*\right\Arrowvert$ denotes the
2-norm of the matrix $M^t-M^*$.

Taking into account that
$ \left\Arrowvert
\pi\right\Arrowvert=1$, it can be
deduced that $\left\Arrowvert\pi
M^t- \pi^*\right\Arrowvert\leq
\left\Arrowvert
M^t-M^* \right\Arrowvert$. Then we look
for the minimum value $t$ such that
$\left\Arrowvert
M^t-M^*\right\Arrowvert <\varepsilon$.

For values of $N$ in $\{5,10,15,20,25,30\}$ we build the transition matrix $M$ of an $N\times N$ grid R-SCS and we seek for the minimum value $t$ such that $ \left\Arrowvert
M^t-M^*\right\Arrowvert <\varepsilon=1/
4$. This is a typical value
of $\varepsilon$ for
estimating the mixing time in the litarature~\cite{bayer1992,levin2017markov,shang2018introduction}.

\begin{figure}
    \centering
    \includegraphics[width=\textwidth]{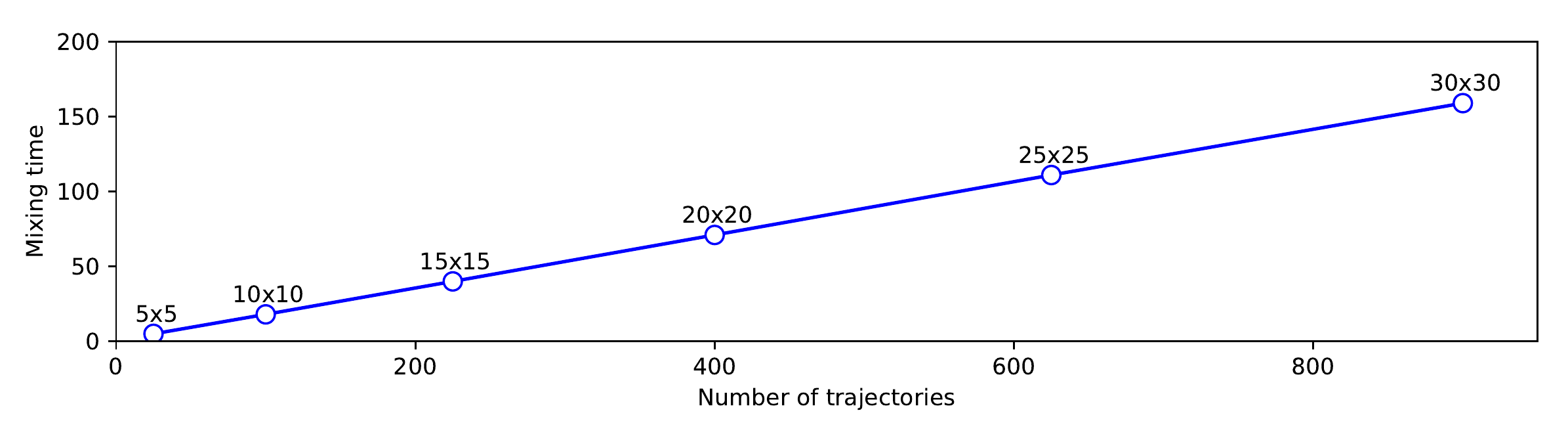}
    \caption{Estimated mixing times in R-SCSs of sizes $5\times 5$, $10\times 10$, $15\times 15$, $20\times 20$, $25\times 25$ and $30\times 30$.}
    \label{fig:mixing_time}
\end{figure}

Figure~\ref{fig:mixing_time} illustrates the results obtained. Note that the behavior of the mixing time seems to be linear with respect to the number of trajectories. The increasing ratio (slope) is approximately 0.17. That is, for every trajectory added to the system, the mixing time increases by 0.17 units. The estimates of mixing times obtained for R-SCSs of sizes $10\times 10$, $15\times 15$, $20\times 20$ and $30\times 30$ (these are the sizes of the R-SCS studied in previous sections) are 18, 40, 71 and 159, respectively.

%Results above show how...

\subsection{Comparison with the theory}
In this section, the results obtained in the above experiments using the %\rndstr 
random strategy are compared with the theoretical results on idle time and isolation time in Sections \ref{sec:idle} and \ref{sec:isolation} respectively. In addition, a comparison between our experiments on broadcast time using the 
%\rndstr 
random strategy and the theoretical results on regular graphs~\cite{regular2009} is also included.

\begin{figure}
    \centering
    \begin{subfigure}{.49\textwidth}
    \centering
    \includegraphics[width=\columnwidth]{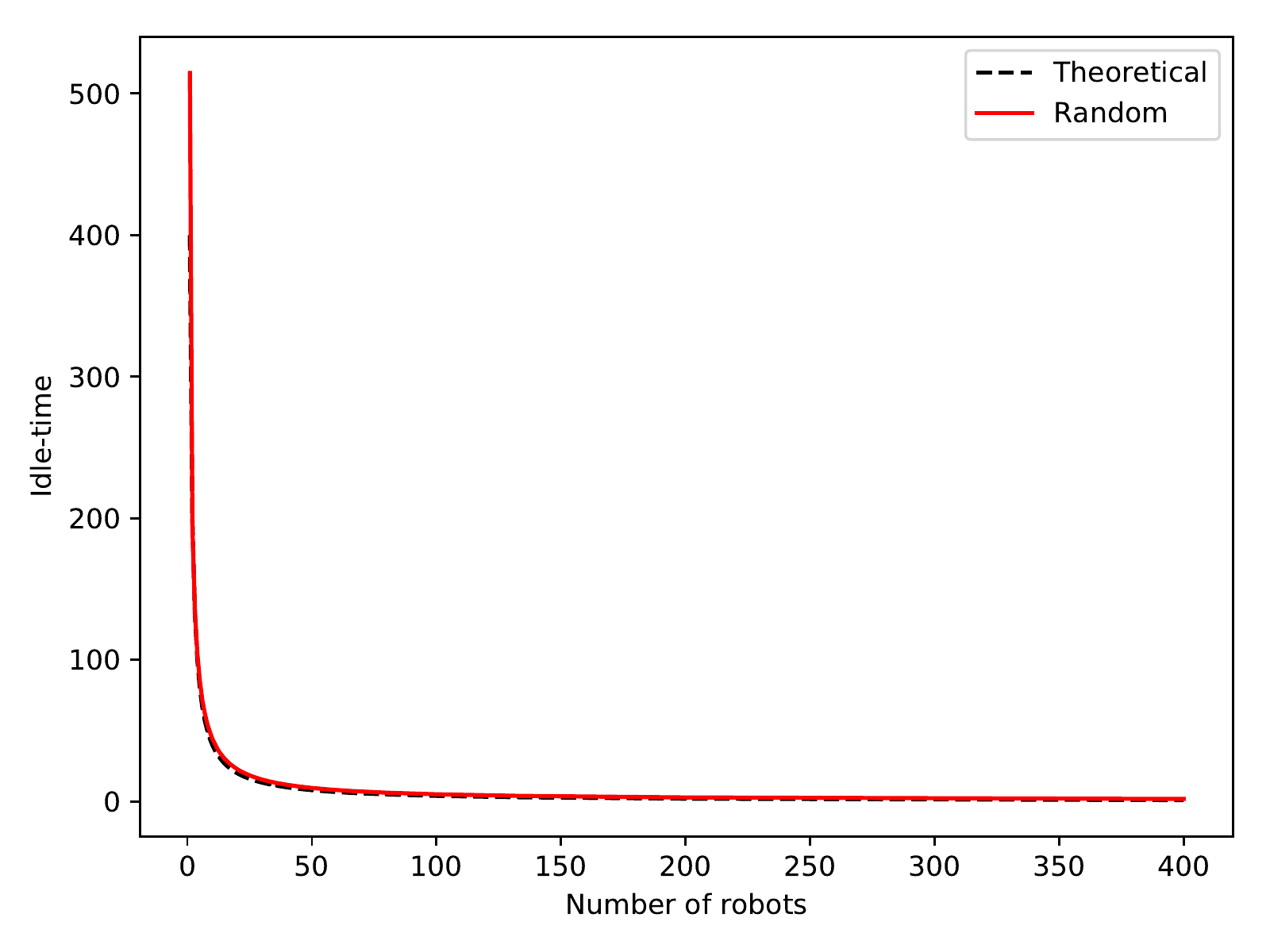}
    \caption{}
    \end{subfigure}
    \begin{subfigure}{.49\textwidth}
    \centering
    \includegraphics[width=\columnwidth]{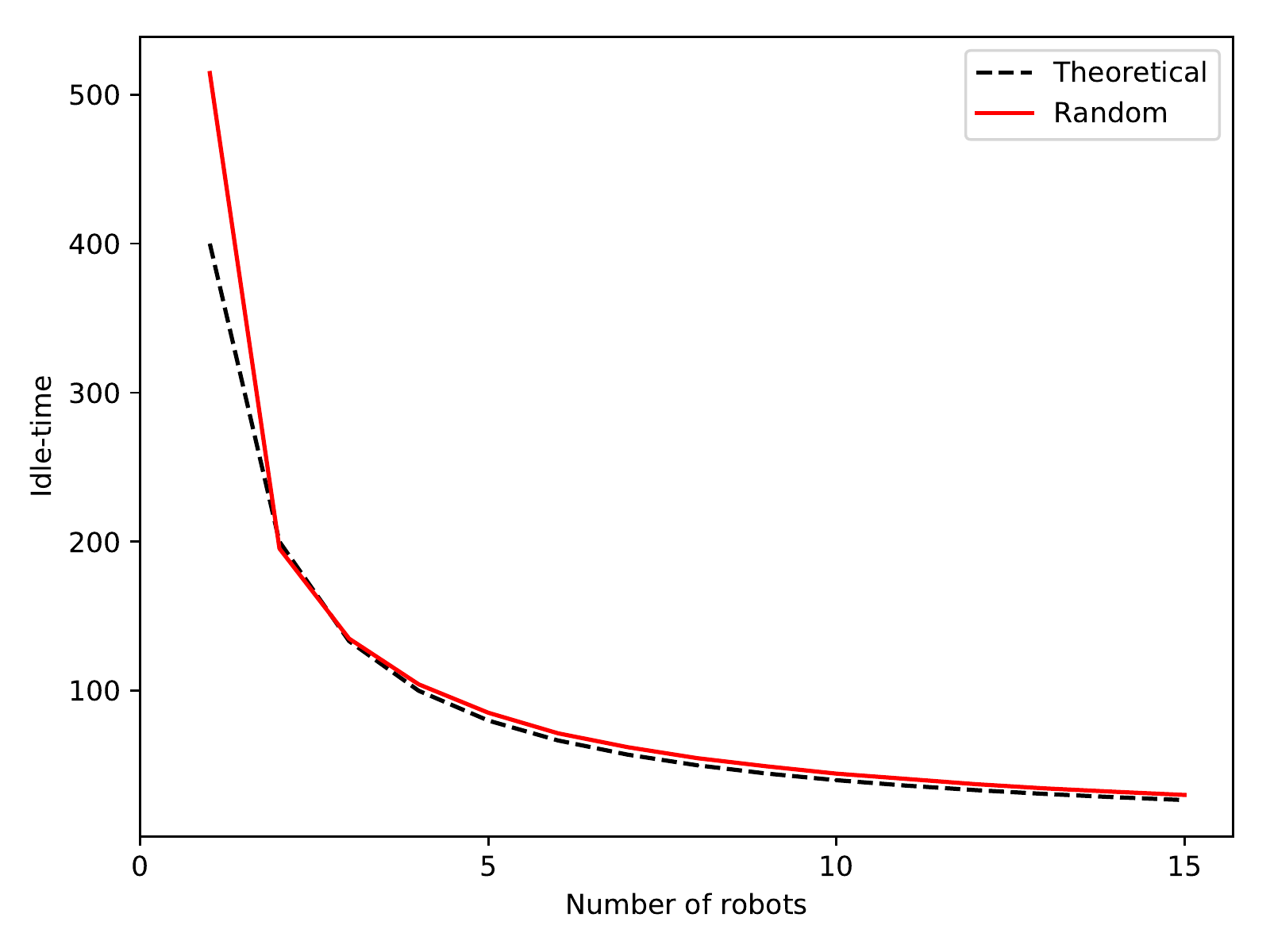}
    \caption{}
    \end{subfigure}
    \caption{Experimental results on idle time and the theoretical times using a $20\times 20$ R-SCS. (a) The behavior of the idle time with respect to $k=1\dots 400$ (number of robots). (b) The same data as (a) but constrained to $k\leq 15$ in order to visualize the differences between the two curves.}
    \label{fig:idle-thx-exp}
\end{figure}

First, it is shown that the bound obtained for idle time is very
tight with respect to the experiments. See Figure~\ref{fig:idle-thx-exp} where the analytical and the experimental results are shown for a $20\times 20$ R-SCS. In Figure~\ref{fig:idle-thx-exp}(a), the results are shown on $k=1\ldots 400$. Note that the analytical and the experimental results almost coincide. Figure~\ref{fig:idle-thx-exp}(b) shows the same data but constrained to $k\leq 15$ in order to show the differences between the two curves better, as they are more evident with fewer robots. However, note that even using a few robots, the 
%\rndstr 
random strategy has a very robust performance as it is almost optimum (recall that $n/k$ is the best possible idle time in a system of $k$ robots operating in a region divided into $n$ trajectories).

\begin{figure}
    \centering
    \begin{subfigure}{.49\textwidth}
    \centering
    \includegraphics[width=\columnwidth]{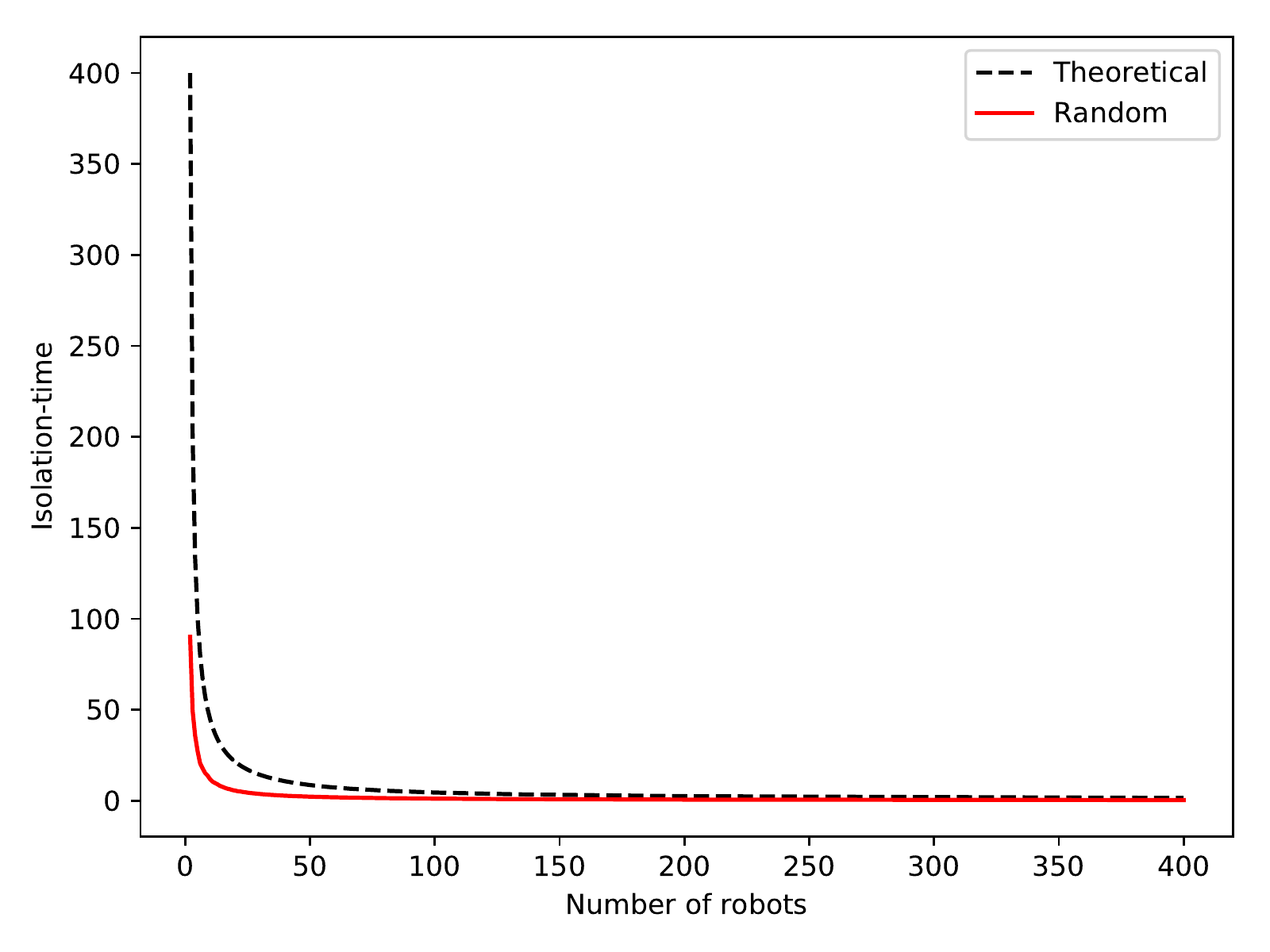}
    \caption{}
    \end{subfigure}
    \begin{subfigure}{.49\textwidth}
    \centering
    \includegraphics[width=\columnwidth]{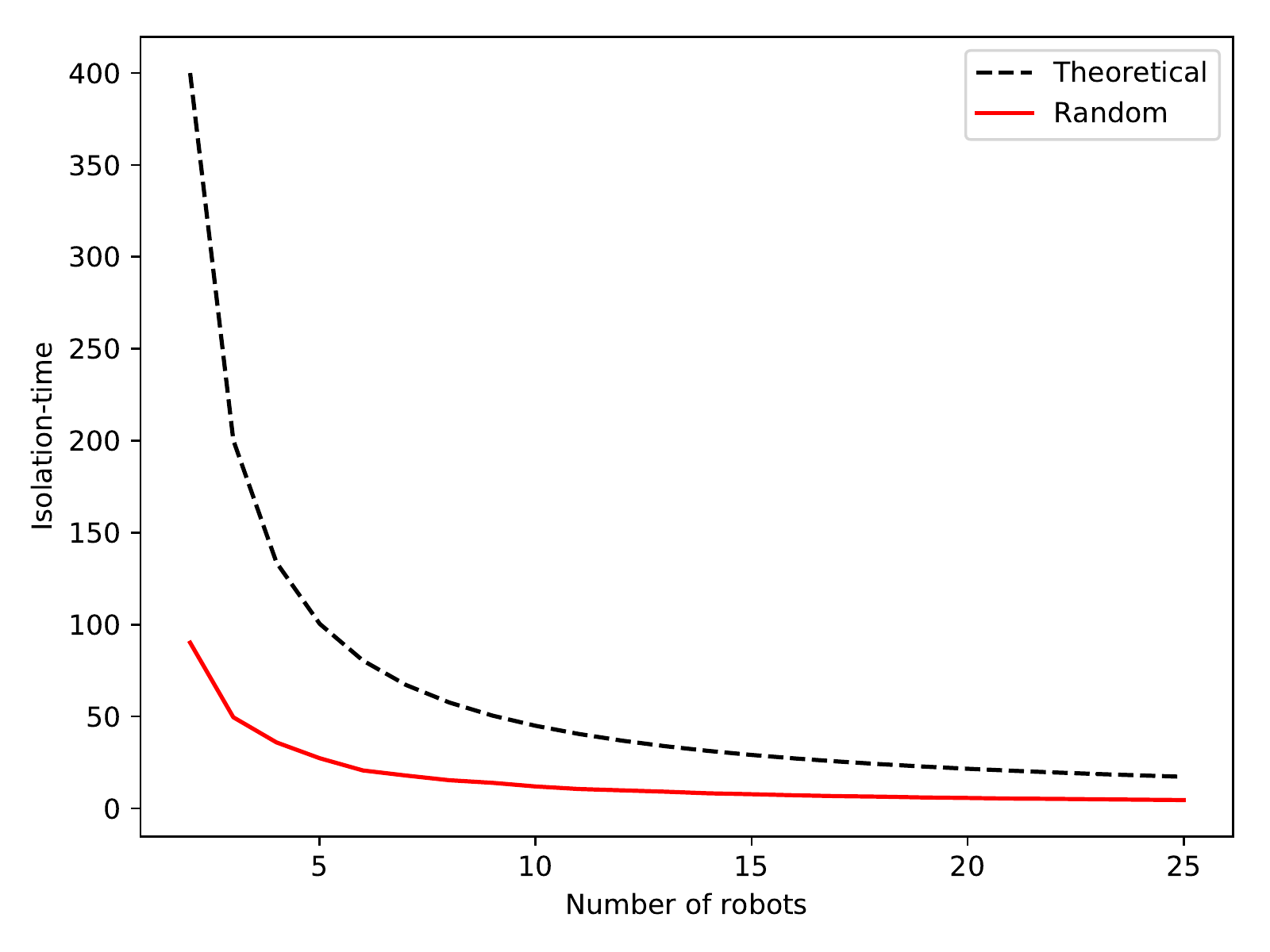}
    \caption{}
    \end{subfigure}
    \caption{Experimental results on isolation time and the theoretical times using a $20\times 20$ R-SCS. (a) The behavior of the isolation time with $k=2\dots 400$ (number of robots). (b) The same data as in (a) but constrained to $k\leq 25$ in order to visualize the differences between the two curves.}
    \label{fig:isolation-thx-exp}
\end{figure}

From the comparison of analytical and experimental isolation times, it can be deduced that the theoretical expected value is an upper bound of the average experimental isolation time, as noted in Section~\ref{sec:isolation} (theoretical study of isolation time), specifically in Remark~\ref{rmk:isolation-upper-bound}. Figure~\ref{fig:isolation-thx-exp} shows the behavior of the theoretical expected isolation time and the average experimental isolation time in $20\times 20$ grid R-SCS. In Figure~\ref{fig:isolation-thx-exp}(a) the global behavior of the two curves is shown for $k=2\dots 400$. Figure~\ref{fig:isolation-thx-exp}(b) shows the same data but constrained to $k\leq 25$ in order to make it easier to visualize the differences between these curves. Note that difference between the theoretical and experimental isolation time decreases as the number of robots increases. This behavior makes sense because the  effects of the phenomenon described in the second
paragraph 
%under the heading
%``\textbf{Isolation time}'' 
in
Section~\ref{sec:isolation} are more
evident when the number of
trajectories is large with respect to
the number of robots, because there is
a lot of room and it is very probable
that the robots are spread out on the
region. However, if the
number of robots is increased, this effect is
\emph{diluted} because the greater the
number of robots, the greater the
probability of some robot not
traveling alone. %For example, if there are a lot of robots in the system and a robot $u$ is traveling in a trajectory $C_i$, it is probably that $u$ is not traveling alone, then if they arrive at a communication link $\ell$ (between trajectories $C_i$ and $C_j$) at time $t$, they could separate or not, however, it is probable that a robot $u'$ is in $C_j$ and by synchronization arrives at $\ell$ at time $t$ and then $u'$ continues traveling with $u$. So, the differences between the theoretical analysis and the experimental decrease.

\begin{figure}
    \centering
    \begin{subfigure}{.49\textwidth}
    \centering
    \includegraphics[width=.7\columnwidth]{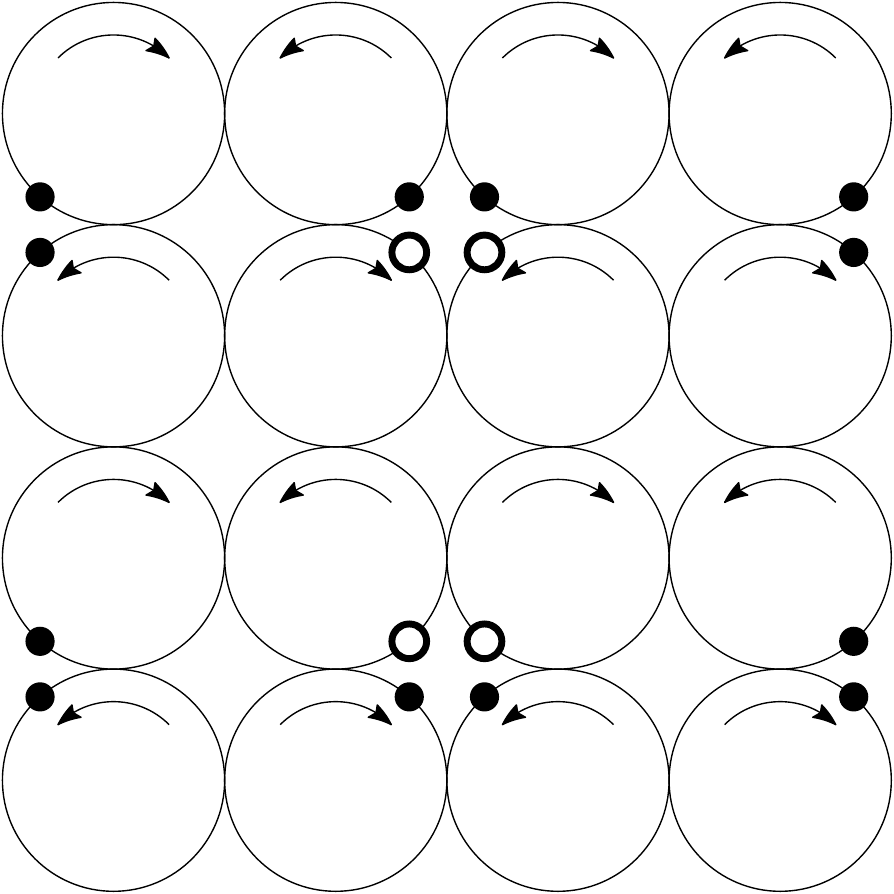}
    \caption{}
    %\label{fig:grid_regularization}
\end{subfigure}
     \begin{subfigure}{.49\textwidth}
    \centering
    \includegraphics[width=\columnwidth]{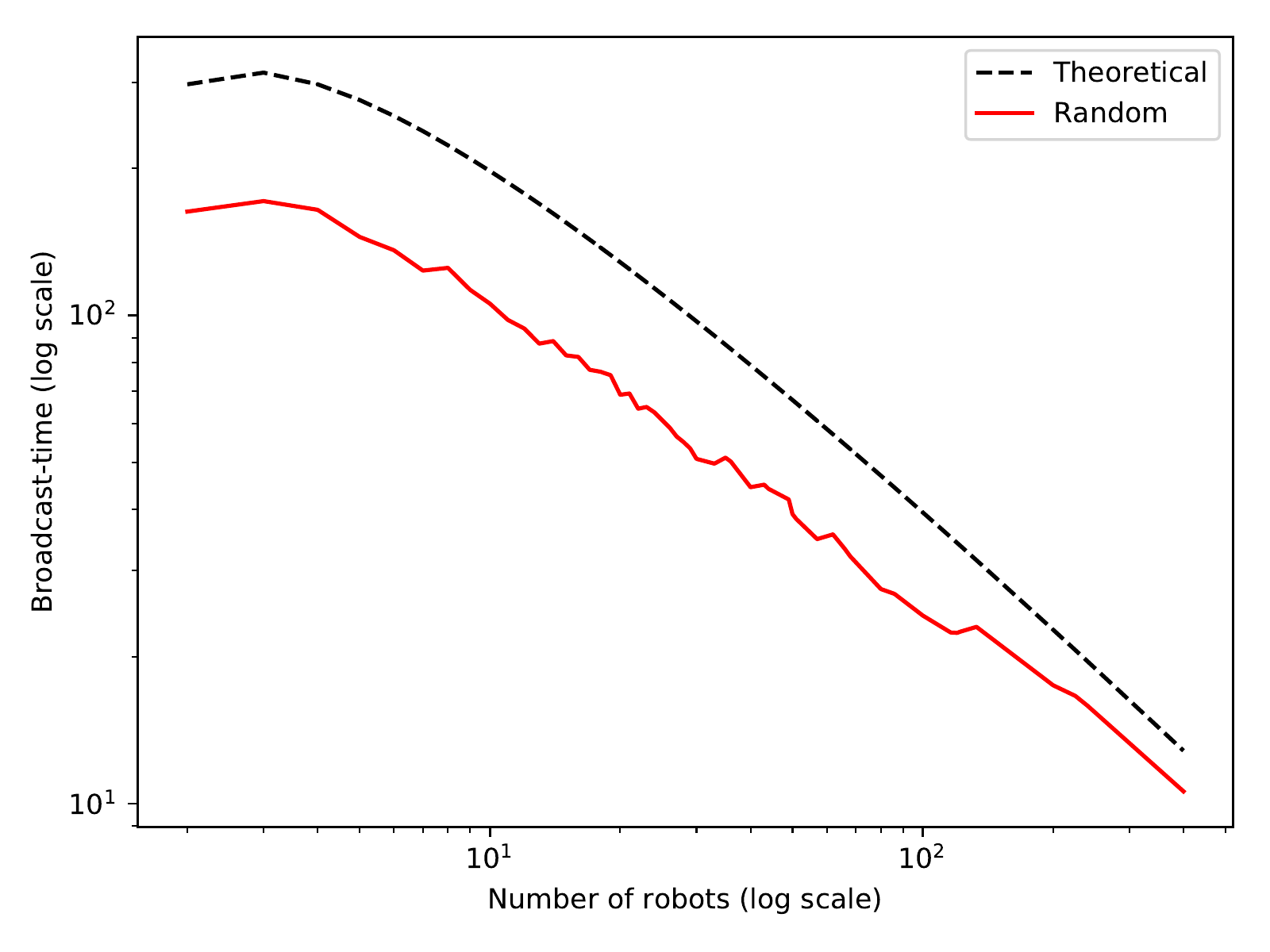}
    \caption{}
    \end{subfigure}
    \begin{subfigure}{.49\textwidth}
    \centering
    \includegraphics[width=\columnwidth]{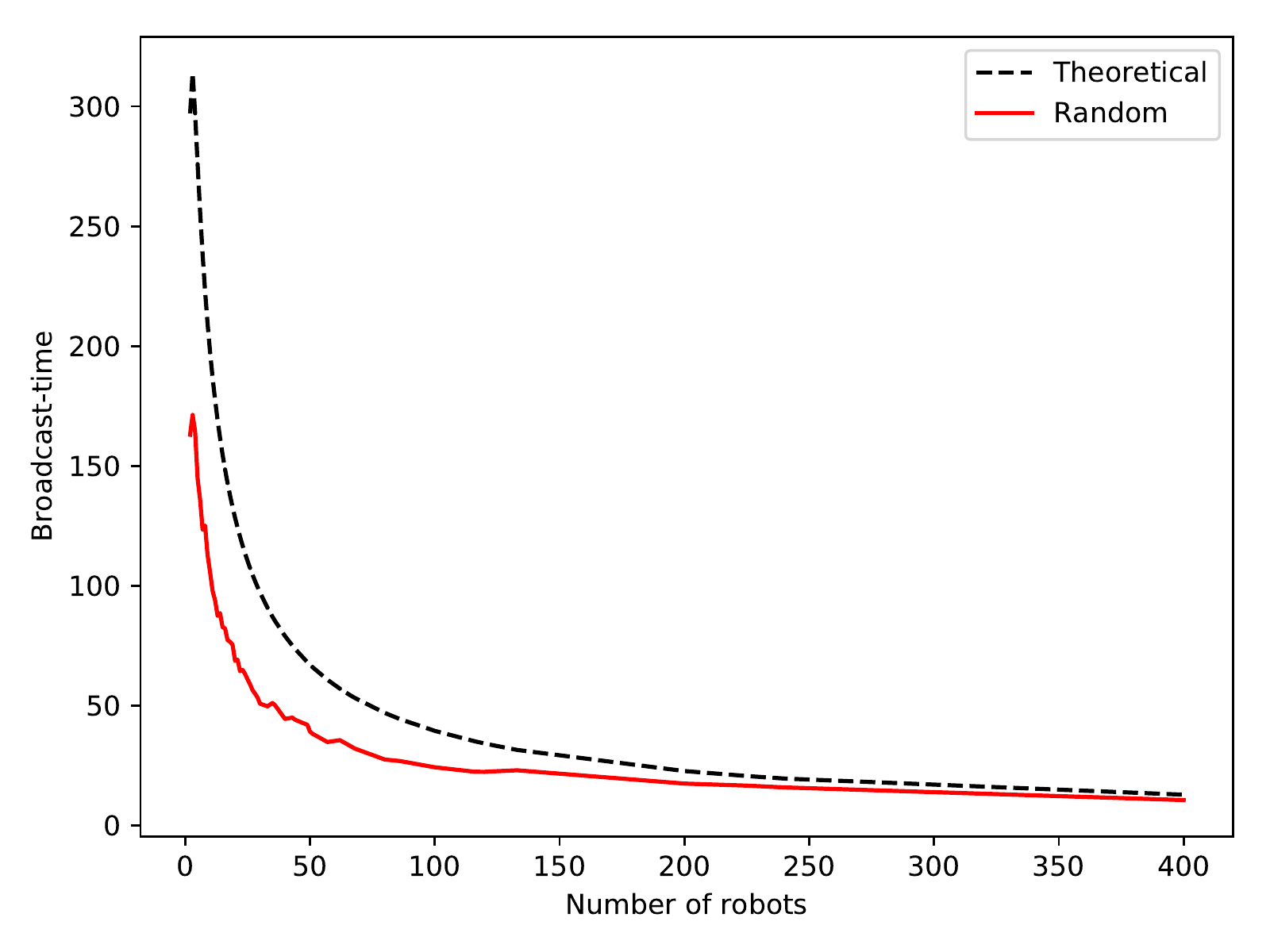}
    \caption{}
    \end{subfigure}
    \begin{subfigure}{.49\textwidth}
    \centering
    \includegraphics[width=\columnwidth]{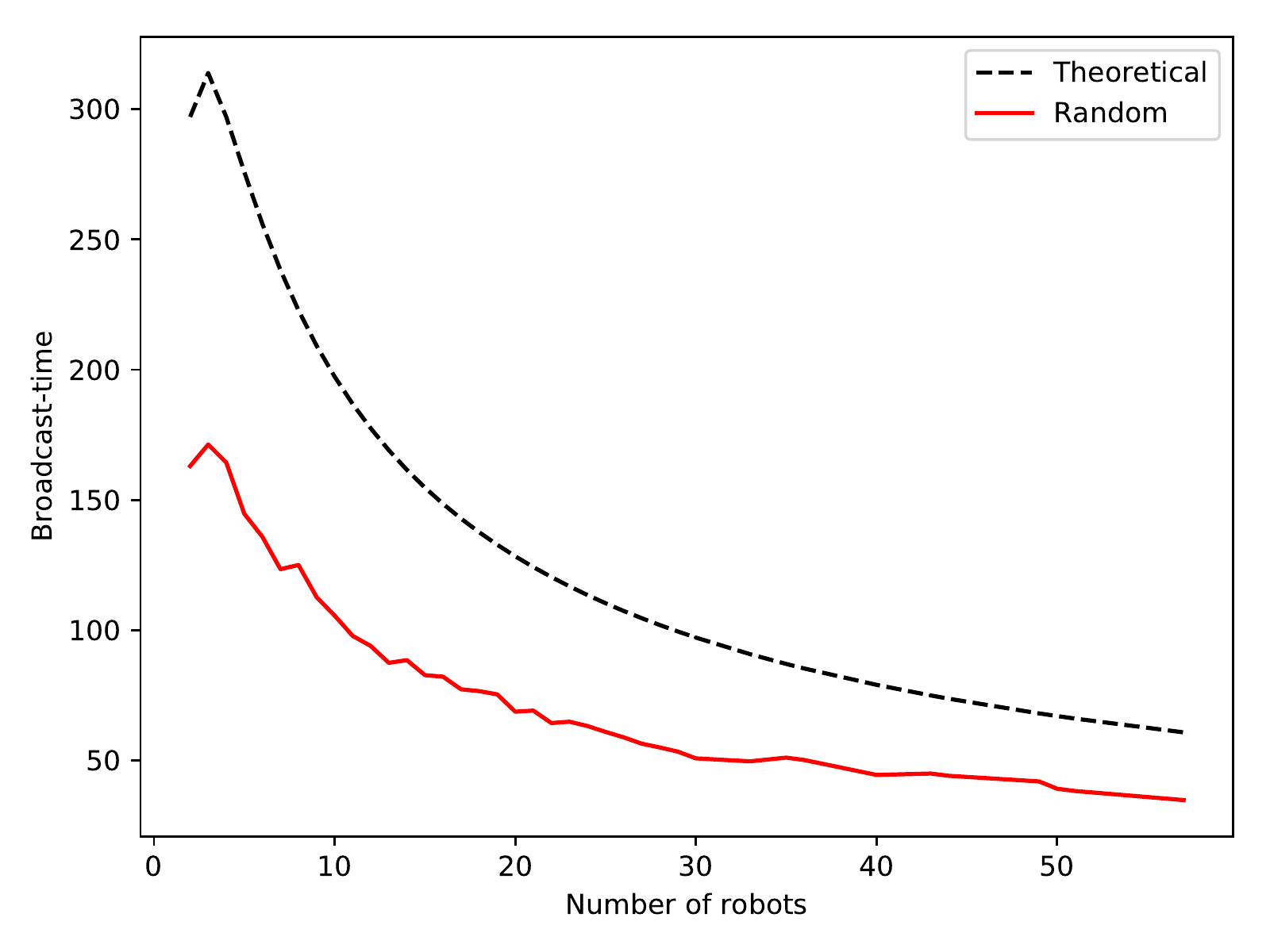}
    \caption{}
    \end{subfigure}
    \caption{(a) A $4\times 4$ section of a grid R-SCS. In (b), (c) and (d) the curve {\it Random} shows the average broadcast time obtained in several experiments in a $20\times 20$ grid R-SCS using $k=2\dots 400$ robots. The curve `Theoretical' shows the behavior of the function $b(k)=\frac{2\ln{k}(\delta-1)n}{k(\delta-2)}$ where $\delta=16$ and $n=400$. (b) Logarithmic scale. (c) Linear scale. (d) The curves constrained to $k\leq 60$.}
    \label{fig:broadcat-time-thx-exp}
\end{figure}

Let us now focus on the broadcast time. Figure~\ref{fig:broadcat-time-thx-exp}(a)
%\ref{fig:grid_regularization}
shows a $4\times 4$ section $A$ of a
grid R-SCS $\mathcal{F}$. Let $S$ be
the set of starting positions in $A$.
Let $S'\subset S$ be the set of
starting positions, indicated by open
small circles. Note that from any
point $p'\in S'$ it can be reached a point
$p\in S $ using a path of length
$2\pi$, and all these paths traverse
four communication links. Therefore,
$M_{i,j} = \frac{1}{16}$ for all $i\in
S'$ and $j\in S$, where $M$ is the
transition matrix of the discrete
motion graph corresponding to
$\mathcal{F}$. Observe that if
$\mathcal{F}$ is large, most of the
vertices in the discrete motion graph
have degree 16 and the transition from
these vertices to a neighbor has
probability $\frac{1}{16}$. Therefore,
if $\mathcal{F}$ is large, the
discrete motion graph is almost
regular. Cooper et al.~\cite{regular2009} study the 
broadcast time for regular graphs; 
they state that for most starting
positions, the expected time for $k$ random walks to broadcast a single piece of information to each other is asymptotic to
\[ \frac{2\ln{k }(\delta-1)n}{k (\delta-2
)} \text{\quad as\quad $k ,n\rightarrow
\infty$,} \] where $n$ is the number of vertices and $\delta$ is the degree of the vertices. 

Figures ~\ref{fig:broadcat-time-thx-exp} (b), (c) and (d) show a comparison between the average broadcast time results obtained in the experiments of subsection~\ref{sec:broadcast-exp} using a $20\times 20$ grid R-SCS $\mathcal{F}$ and the curve $b(k)=\frac{2\ln{k}(\delta-1)n}{k(\delta-2)}$, where $n=400$ (number of trajectories on $\mathcal{F}$) and $\delta=16$ (degree of most of the vertices in the discrete motion graph of $\mathcal{F}$). Similarly to the comparison for the isolation time, the curve $b(k)$ is an upper bound of the real broadcast time due to a similar argument. Note that the two curves have very similar shapes. In Figure~\ref{fig:broadcat-time-thx-exp} (d), these curves are shown for $k\leq 60$ in order to make the similarity on their shapes visible. As expected, both curves have a small peak at the beginning, due to the natural behavior of the broadcast time when the number of robots increases (as noted in Section~\ref{sec:broadcast-exp}).

\section{Conclusions}\label{sec:conclusions}

Terrain surveillance using cooperative unmanned aerial vehicles with limited communication range can be modeled as a geometric graph in which the robots share information with their neighbors.  If every pair of neighboring robots  periodically meet at the communication link, the system is called a synchronized communication system (SCS).
In order to add unpredictability to the deterministic framework proposed by D\'iaz-B\'a\~nez et al.~\cite{diaz2017}, the use of stochastic strategies in the SCS is studied. 
Both
the coverage and communication performance are evaluated and the validity of two random strategies compared with the deterministic one has been proved.  The
performance metrics of interest focused on in this paper were
\emph{idle time} (the expected time between two consecutive observations of any point of the system), \emph{isolation time} (the expected time that a robot is isolated) and \emph{broadcast time} (the expected time elapsed from the moment a robot emits a message until it is received by all the other robots in the team). 

First, theoretical results have been proved for one of the strategies, called random strategy, which can be modelled by classical random walks, and obtained bounds assuming that the starting vertices are uniformly selected. A theoretical study of the other protocol, the quasi-random strategy, which does not generate random walks, is a challenging open problem. 
%We also proved both theoretically and experimentally that the stochastic system tends to an uniform stationary distribution.
%Also, finding theoretical bounds %for the 
%broadcast time 
After that, three computational studies were carried out: a comparison between the deterministic and the random strategies; a simulation for estimating the mixing time of the system (roughly, the time needed for a random walk to reach its stationary distribution); and a comparison between the simulation and the theoretical bounds.

We summarize our observations in the experiments. 
Overall, the behavior was analyzed by increasing the size of the team of UAVs working on grid graphs.
For the first study, our results showed that if the system has few robots, as it is usual in practice, it is better to use one of the random strategies than the deterministic one. Indeed, in this case, the results obtained with random strategies were much better than the behavior obtained using the deterministic strategy for the three tested measures.

Although clever initial positions for the robots can selected in the deterministic strategy such that the properties of communication and coverage are satisfied, the system does not take unpredictability into account. If a group of robots fail, the system can lose robustness using the deterministic strategy. As the experiments showed, this drawback can be overcome using random strategies.
%Finally, if we dispose of a large number of robots ($k\approx N^2$), and it is a safe environment, that is, the probability of fails is very low, then we can expect very good results using any of the presented strategies.
The experiments also showed that the behavior of the two random strategies is very similar for any of the proposed quality measures. 
In the second study, it was observed that the estimated mixing time is small compared to the number of trajectories in the system and it only increases 0.17 units of times per trajectory.
The third experimental study showed that the theoretical results are tight for the idle time and give reasonable upper bounds for the isolation time and broadcast time.

Finally, it should be noted that more general SCSs in which the trajectories are closed curves with different lengths could be established \cite{diaz2017} and the results in this paper are extensible to those SCSs. Future research could also focus on considering other random strategies, different topologies in the experiments, as well as a study on the influence of the value of the probability $p$, fixed as $0.5$ in this paper.

%The main differences between them are the following: From a practical perspective, in the random strategy may occur that some robots fly together and this does not occur by using the quasi-random strategy. Robots flying together introduce challenges from the control point of view because the autopilot of the robots may avoid collisions between them. From a theoretical point of view, 
%Extensions to this work include  %%doing a theoretical study of the om strategies and performing experiments on configurations other than grids.

\bibliographystyle{abbrv} \bibliography{random_drones}
\end{document}